\documentclass{article} %
\usepackage{iclr2025_conference, times}
\iclrfinalcopy

\usepackage{amsmath,amsfonts,bm}

\def\eqref#1{equation~\ref{#1}}

\def\1{\bm{1}}

\def\vzero{{\bm{0}}}

\def\vc{{\bm{c}}}
\def\vd{{\bm{d}}}
\def\ve{{\bm{e}}}

\def\vh{{\bm{h}}}

\def\vq{{\bm{q}}}
\def\vr{{\bm{r}}}
\def\vs{{\bm{s}}}

\def\vu{{\bm{u}}}
\def\vv{{\bm{v}}}

\def\vx{{\bm{x}}}
\def\vy{{\bm{y}}}
\def\vz{{\bm{z}}}

\DeclareMathAlphabet{\mathsfit}{\encodingdefault}{\sfdefault}{m}{sl}
\SetMathAlphabet{\mathsfit}{bold}{\encodingdefault}{\sfdefault}{bx}{n}

\def\gL{{\mathcal{L}}}

\def\gO{{\mathcal{O}}}

\def\gU{{\mathcal{U}}}

\DeclareMathOperator*{\argmin}{arg\,min}

\usepackage[colorlinks = true,
            linkcolor = blue,
            urlcolor  = blue,
            citecolor = blue,
            anchorcolor = blue]{hyperref}
\usepackage{comment}
\usepackage{amssymb}
\usepackage{mathtools}
\usepackage{amsthm}
\usepackage{thmtools, thm-restate}
\usepackage{optidef}
\usepackage{algorithm, float}
\usepackage{algpseudocode}
\usepackage{diagbox}
\usepackage{multirow}

\usepackage{caption}
\captionsetup[table]{position=bottom, skip=5pt}

\theoremstyle{plain}
\newtheorem{theorem}{Theorem}[section]

\theoremstyle{definition}

\theoremstyle{remark}

\newcommand\numberthis{\addtocounter{equation}{1}\tag{\theequation}}

\newcommand{\norm}[1]{\left\lVert#1\right\rVert}
\newcommand{\Prc}{P_{\vr\vc}}
\newcommand{\Dr}{\mathbf{D}(\vr)}
\newcommand{\Dc}{\mathbf{D}(\vc)}
\newcommand{\DrP}{\mathbf{D}(\vr(P))}
\newcommand{\DcP}{\mathbf{D}(\vc(P))}
\newcommand{\Frr}{F_\vr(\rho)}
\newcommand{\veur}{\ve_\vu(\rho)}
\newcommand{\DrDP}{\mathbf{D}(\vr(\Delta P(t)))}
\newcommand{\DcDP}{\mathbf{D}(\vc(\Delta P(t)))}

\algnewcommand\algorithmicforeach{\textbf{for each}}
\algdef{S}[FOR]{ForEach}[1]{\algorithmicforeach\ #1\ \algorithmicdo}

\usepackage{multicol}
\usepackage{wrapfig}
\usepackage[utf8]{inputenc} %
\usepackage[T1]{fontenc}    %
\usepackage{hyperref}       %
\usepackage{url}            %
\usepackage{booktabs}       %
\usepackage{amsfonts}       %
\usepackage{nicefrac}       %
\usepackage{microtype}      %

\title{A Truncated Newton Method for \\ Optimal Transport}

\author{
Mete Kemertas$^{1,4}$\thanks{Correspondence to: \href{mailto:kemertas@cs.toronto.edu}{kemertas@cs.toronto.edu}. Code available at \href{https://github.com/metekemertas/mdot_tnt}{github.com/metekemertas/mdot\_tnt}.} \quad\quad\quad\quad\quad
  Amir-massoud Farahmand$^{2,3,1}$ \quad\quad\quad\quad\quad
  Allan D. Jepson$^{1}$ \\
  $^{1}$University of Toronto,
  $^{2}$Polytechnique Montréal, 
  $^{3}$Mila - Quebec AI Institute,
  $^{4}$Vector Institute
}

\begin{document}

\maketitle
\begin{abstract}
Developing a contemporary optimal transport (OT) solver requires navigating trade-offs among several critical requirements: GPU parallelization, scalability to high-dimensional problems, theoretical convergence guarantees, empirical performance in terms of precision versus runtime, and numerical stability in practice. With these challenges in mind, we introduce a specialized truncated Newton algorithm for entropic-regularized OT. In addition to proving that locally quadratic convergence is possible without assuming a Lipschitz Hessian, we provide strategies to maximally exploit the high rate of local convergence in practice. Our GPU-parallel algorithm exhibits exceptionally favorable runtime performance, achieving high precision orders of magnitude faster than many existing alternatives. This is evidenced by wall-clock time experiments on 24 problem sets (12 datasets $\times$ 2 cost functions). The scalability of the algorithm is showcased on an extremely large OT problem with $n \approx 10^6$, solved approximately under weak entopric regularization.
\end{abstract}

\section{Introduction}
The optimal transportation problem has long been a cornerstone of various disciplines, ranging from physics \citep{bokanowski1996deformations, leonard2012schrodinger, levy2021universe} to machine learning and computer vision \citep{Ferns2004Metrics, pitie2005, gulrajani2017improved, genevay2018learning}. Traditional approaches \citep{pele2009fastemd, lee2014pathfinding}, while exact and theoretically robust, encounter significant computational hurdles in high-dimensional settings. The (re-)introduction of entropic regularized OT (EOT), as pioneered by \citet{cuturi2013sinkhorn}, has mitigated challenges in scalability by regularizing the classical problem, thereby enabling solutions via the GPU-friendly Sinkhorn-Knopp matrix scaling algorithm. This advancement has yielded substantial speed improvements, making it several orders of magnitude faster in high dimensions than traditional solvers. However, EOT methods necessitate a delicate balance between regularization strength and convergence speed, a trade-off that can compromise the precision of the solution.

Despite significant recent progress towards improving this trade-off, many state-of-the-art solvers still struggle to outperform aggressively tuned Sinkhorn iterations in practice \citep{jambulapati2019, lin19a}. While they offer superior theoretical guarantees, their practical performance is often less compelling, particularly in terms of speed and scalability. Existing algorithms either suffer from high computational complexity or fail to leverage modern hardware capabilities, such as GPU parallelization, effectively. To bridge this gap, we develop a new algorithm that remains numerically stable and converges rapidly even at extremely weak regularization levels, thereby enhancing precision in practice. By simultaneously exploiting the inherent parallelism of GPUs and superlinear local convergence of truncated Newton algorithms, our method scales effortlessly to high-dimensional problems, offering a pragmatic yet theoretically sound solution to the OT problem.

Our contributions are as follows: \textbf{(i)} we develop a specialized (linear) conjugate gradient algorithm for obtaining an approximation of the Newton direction for the EOT dual problem and analyze its convergence properties, \textbf{(ii)} we use the approximate (truncated) Newton direction in conjunction with a helper routine to develop a solver for the  EOT dual problem and prove its superlinear local convergence, as well as per iteration computational cost, \textbf{(iii)} we develop an adaptive temperature annealing approach, based on the MDOT framework of \citet{kemertas2025efficientaccurateoptimaltransport}, to maximally exploit this fast local rate, and finally \textbf{(iv)} present compelling empirical results via wall-clock time benchmarking in a GPU setting against a large suite of alternative algorithms in $n=4096$ dimensions.
\section{Background and Related Work}
\label{gen_inst}
\paragraph{Notation and Definitions.} In this work, we are concerned with discrete OT. ${\Delta_n \subset \mathbb{R}_{\geq 0}^{n}}$ denotes the $(n{-}1)$-simplex. The row sum of an ${n \times n}$ matrix $P$ is given by $\vr(P) \coloneqq P \1$ and the column sum by $\vc(P) \coloneqq P^\top \1$. Given target marginals ${\vr, \vc \in \Delta_n}$, the transportation polytope is written as ${\gU(\vr, \vc) = \{ P \in \mathbb{R}^{n \times n}_{\geq 0} ~|~ \vr(P) = \vr, \vc(P) = \vc \}}$. Division, $\exp$ and $\log$ over vectors or matrices indicate element-wise operations. 
Vectors in $\mathbb{R}^n$ are taken to be column vectors and concatenation of two column vectors $\vx, \vy$ is $(\vx, \vy)$. Elementwise minimum and maximum of a vector $\vx$ is written as $\vx_{\min}$ and $\vx_{\max}$. Matrix and vector inner products alike are given by $\langle \cdot, \cdot \rangle$. An ${n \times n}$ diagonal matrix with $\vx \in \mathbb{R}^n$ along the diagonal is written as $\mathbf{D}(\vx)$. We write $\chi^2(\vy | \vx)$ for the $\chi^2$-divergence given by $\langle \vx, \left({\vy}/{\vx}\right)^2 \rangle - 1 \geq \norm{\vy - \vx}_1^2$. For the square root of $\chi^2(\vy | \vx)$, we write $\chi(\vy | \vx)$ with a slight abuse of notation. The Shannon entropy of ${\vr \in \Delta_n}$ is denoted ${H(\vr) = -\sum_i r_i \log r_i}$. We write $D_{\mathrm{KL}}(\vx|\vy) = \sum_{i}x_i \log(x_i/y_i) + \sum_{i}y_i - \sum_{i}x_i$ for the KL divergence between $\vx, \vy \in \mathbb{R}^n_{>0}$. We denote LogSumExp reductions along the rows and columns of $X$ by $\mathrm{LSE}_r(X) \coloneqq \log \big(\exp \{X\} \1 \big)$ and $\mathrm{LSE}_c(X) \coloneqq \log \big(\exp\{ X^\top \} \1 \big)$.

\subsection{Optimal Transport and  Entropic Regularization}
We study the discrete optimal transport problem, formulated as the following linear program:
\begin{mini}[2]
  {P \in \gU(\vr, \vc)}{\langle P, C \rangle,}{}{}
  \label{opt:original-OT-problem}
 \end{mini}
where we assume the ${n \times n}$ cost matrix has entries $C_{ij} \in [0, 1]$. \citet{cuturi2013sinkhorn} re-popularized EOT, showing that entropic regularization can help quickly approximate the solution of (\ref{opt:original-OT-problem}) on GPUs:
\begin{mini}[2]
  {P \in \gU(\vr, \vc)}{\langle P, C \rangle - \frac{1}{\gamma}H(P),}{}{}
\label{opt:sinkhorn-primal}
 \end{mini}
 where the regularization weight $\gamma^{-1} \in \mathbb{R}_{>0}$ is called the \textit{temperature}. It can be shown with ease that since the objective in (\ref{opt:sinkhorn-primal}) is strictly convex in $P$, problem (\ref{opt:sinkhorn-primal}) has a unique solution of the form
\begin{align}
    P(\vu, \vv; \gamma) = \exp \{ \vu \1^\top + \1 \vv^\top - \gamma C \}.
\label{eq:lagrangian-closed-form}
\end{align}
Using the form of the solution of (\ref{opt:sinkhorn-primal}), the following unconstrained dual problem can be solved instead: 
\begin{mini}[2]
  {\vu, \vv \in \mathbb{R}^{n}}{g(\vu, \vv; \gamma) = \sum_{ij}P(\vu , \vv; \gamma)_{ij} - 1 - \langle \vu, \vr \rangle - \langle \vv, \vc \rangle,}{}{}
\label{opt:sinkhorn-dual}
 \end{mini}
where we keep the constant $-1$ as a convention. Solving (\ref{opt:sinkhorn-dual}) given initial $\vu, \vv$ amounts to a \textit{Bregman projection} onto $\gU(\vr, \vc)$ in the sense that $P(\vu^*, \vv^*) = \argmin_{P \in \gU(\vr, \vc)}D_{\mathrm{KL}}(P|P(\vu, \vv))$ \citep{kemertas2025efficientaccurateoptimaltransport}. Noting that ${\nabla_\vu g = \vr(P) - \vr}$ and $\nabla_\vv g = \vc(P) - \vc$, we write:
\begin{align}
\label{eq:hessian}
    \nabla^2 g = \begin{pmatrix}
        \DrP & P \\
        P^\top & \mathbf{D}(\vc(P))
    \end{pmatrix}_{2n \times 2n},
\end{align}
where the Hessian is positive semi-definite (PSD) with one zero eigenvalue.\footnote{Since $\vr(P) = P\1 = D(\vr(P))\1$ and $\vc(P) = P^\top \1 = D(\vc(P))\1$, we have $\nabla^2 g (\1, -\1) = \vzero$.}

\textbf{Related Work.} The SK algorithm has long been known to enjoy an exponential convergence rate for minimizing (\ref{opt:sinkhorn-dual}) \citep{franklin1989scaling, knight2008sinkhorn}. However, at low temperatures, this fast rate does not predict non-asymptotic behavior well due to a large constant. \citet{altschuler2017near} provided a simple analysis, in which they proved a rate $\widetilde{O}(n^2 \varepsilon^{-3})$ for the SK algorithm, where $\langle P - P^*, C \rangle \leq \varepsilon$. A simple routine for rounding near-feasible plans onto $\gU(\vr, \vc)$ was introduced and is now widely adopted. They also proposed a new algorithm, Greenkhorn, and showed a matching complexity bound.  Unlike SK, Greenkhorn scales one greedily selected row/column at a time, which limits GPU utilization unless $n$ is extremely large. The complexity bounds for Sinkhorn and Greenkhorn were later improved to $\widetilde{O}(n^2 \varepsilon^{-2})$ \citep{dvurechensky2018computational, lin19a}. However, our experiments suggest Sinkhorn typically behaves like $\widetilde{O}(n^2\varepsilon^{-1})$. Moreover, \citet{kemertas2025efficientaccurateoptimaltransport} showed it can enjoy better performance at lower temperatures if tuned.

\citet{dvurechensky2018computational} proposed an Adaptive Primal-Dual Accelerated Gradient Descent (APDAGD) algorithm for solving the dual EOT problem (\ref{opt:sinkhorn-dual}). \citet{lin19a} provided a refined rate of $\widetilde{O}(n^{5/2}\varepsilon^{-1})$ for APDAGD and proposed a generalization APDAMD, which applied mirror descent to (\ref{opt:sinkhorn-dual}). The complexity for APDAMD was shown to be $\widetilde{O}(n^2 \sqrt{c} /\varepsilon)$, where $c \in (0, n]$ is a constant. Following \citet{altschuler2017near}, \citet{lin19a} measured speed in terms of the number of row/col updates in their experiments (rather than wall-clock time) and only considered a strong regularization (low precision) setting. Targeting higher precision, \citet{jambulapati2019} proposed an algorithm for the OT problem that is not based on entropic regularization, with complexity $\widetilde{O}(n^2 \varepsilon^{-1})$. While this rate is theoretically state-of-the-art, \citet{jambulapati2019} noted that Sinkhorn iteration, when aggressively tuned, outperforms all other methods empirically (including their own). \citet{guminov21aam} proposed an Accelerated Alternating Minimization (AAM) algorithm, combining Nesterov's momentum and Sinkhorn-type block coordinate descent, with complexity $\gO(n^{5/2}\varepsilon^{-1})$. 

While APDAMD applied mirror descent to (\ref{opt:sinkhorn-dual}), MDOT of \citet{kemertas2025efficientaccurateoptimaltransport} applied it to  (\ref{opt:original-OT-problem}) and recovered connections to temperature annealing methods (see details in Sec. \ref{sec:MDOT-background}), such as those of \citet{schmitzer2019scaling} and \citet{feydy2020}. For instance, Alg. 3.5. of \citet{feydy2020} takes a single Sinkhorn update every time the temperature is decayed; while this can compute rough approximations quickly at high temperatures, a single Sinkhorn update is insufficient for keeping the dual objective value in check, so that their approach hits a precision wall as we empirically show; see also \citet{xie20ipot}. \citet{ballu2022mirror} derive a similar algorithm, but they guarantee convergence by maintaining a running average of plans $P$ computed this way. While effective at low precision and easy to implement on a GPU, this algorithm exhibits $\widetilde{O}(n^2 \varepsilon^{-2})$ dependence on error. Most closely related to ours is the work of \citet{kemertas2025efficientaccurateoptimaltransport}, as we build on MDOT. In addition to the temperature annealing framework, \citet{kemertas2025efficientaccurateoptimaltransport} proposed an algorithm (PNCG) to minimize (\ref{opt:sinkhorn-dual}) at each new value of the temperature, based on a non-linear conjugate gradient method \citep{fletcher1964function}. The approach introduced here has several benefits over PNCG, including added ease of theoretical analysis, faster runtime in practice and minimal line search overhead. While second order methods have been considered for OT \citep{merigot2011, blondel18a}, they have not been implemented on GPUs with strong empirical performance in high dimensions to our knowledge. Indeed, \citet{tang2024accelerating} also developed a 2nd order method recently, but their Hessian sparsification strategy is more amenable to a CPU setting, and as such was only tested on CPUs for $n=784$.

\vspace{-0.15cm}
\subsection{Temperature Annealing as Mirror Descent}
\label{sec:MDOT-background}

A well-known strategy to deal with the difficulty of solving (\ref{opt:sinkhorn-dual}) under weak regularization is annealing the
temperature $\gamma^{-1}$ in (\ref{eq:lagrangian-closed-form}) gradually towards zero. When viewed as mirror descent on (\ref{opt:sinkhorn-primal}), temperature annealing strategies (e.g., see \citet{schmitzer2019scaling}) amount to a particular initialization of the dual variables $(\vu, \vv)$ in successive instances of  (\ref{opt:sinkhorn-dual}) given approximate solutions at prior $\gamma$ \citep{kemertas2025efficientaccurateoptimaltransport}. 
Each dual problem (\ref{opt:sinkhorn-dual}) 
at a given $\gamma^{(t+1)}$ for $t \geq 1$ is warm-started in some neighborhood of the solution given some near-optimal $\vz^{(t)} {=} (\vu^{(t)}, \vv^{(t)}) \in \mathbb{R}^{2n}$. The MDOT framework of \citet{kemertas2025efficientaccurateoptimaltransport} specifically initializes
\begin{wrapfigure}{R}{0.62\textwidth}
\vspace{-20pt}
\begin{minipage}{0.62\textwidth}
\begin{algorithm}[H]
\caption{MDOT($C, \vr, \vc, \gamma_{\mathrm{i}}, \gamma_{\mathrm{f}}, p \geq 1, q>1$)}\label{alg:mdot}
\begin{algorithmic}[1]
\State $t \gets 1$, ${\textrm{done} \gets \textrm{false}, \gamma^{(1)} \gets \gamma_{\mathrm{i}} \wedge \gamma_{\mathrm{f}}, q^{(1)} \gets q}, \gamma^{(0)} \gets 0$
\While{\textrm{not done}}
    \State $\textrm{done} \gets \gamma^{(t)} = \gamma_{\mathrm{f}}$
    \State $\varepsilon_{\mathrm{d}} \gets \min\big(H(\vr), H(\vc)\big) \Big /  (\gamma^{(t)})^p$
    \State $\tilde{\vr}, \tilde{\vc} \gets \textrm{SmoothMarginals}(\vr, \vc, \varepsilon_{\mathrm{d}}/2; \cdots)$
    \State \algorithmicif\ $t = 1$\ \algorithmicthen\ $\vz^{(0)} \gets (\log \tilde{\vr}, \log \tilde{\vc}), \vz^{(1)} \gets \vz^{(0)}$
    \State $\vz^{(t)} \gets \textrm{BregmanProject}(\vz^{(t)}, \gamma^{(t)}, C, \tilde{\vr}, \tilde{\vc}, \varepsilon_{\mathrm{d}}/2)$
    \State $q^{(t+1)} \gets \textrm{AdjustSchedule}(q^{(t)}; \cdots)$
    \State $\gamma^{(t+1)} \gets q^{(t+1)} \gamma^{(t)} \wedge \gamma_{\mathrm{f}}$
    \State $\vz^{(t+1)} \gets \vz^{(t)} +  \frac{\gamma^{(t+1)} - \gamma^{(t)}}{\gamma^{(t)} - \gamma^{(t-1)}} \Big(\vz^{(t)} - \vz^{(t-1)}\Big)$
    \State $t \gets t + 1$
\EndWhile
\State $(\vu, \vv) \gets \vz^{(t-1)}$, $P \gets \exp\{\vu \1_n^\top + \1_n \vv^\top - \gamma_{\mathrm{f}} C\}$
\State Output $P \gets \textrm{Round}(P, \vr, \vc)$
\end{algorithmic}
\end{algorithm}
\end{minipage}
\vspace{-10pt}
\end{wrapfigure}
$\vz^{(t+1)}$ via a Taylor approximation with respect to $\gamma$ under backward finite differencing. Further, they proposed to use a more stringent tolerance $O(\gamma^{-p})$ for $\norm{\nabla g}_1$ (given some $p \geq 1$), whereas prior work used $O(\gamma^{-1})$ \citep{altschuler2017near, lin19a}.
Their tuned choice ${p=1.5}$ was shown to improve performance under weak regularization.

The pseudo-code for MDOT is shown in Alg. \ref{alg:mdot}, where some routines are defined with ``$\cdots$'' as a placeholder for extra parameters that may be required by specific implementations. Given $\gamma^{(t)}$ in each iteration ${t \geq 1}$, the algorithm picks a tolerance $\varepsilon_{\mathrm{d}}$ for the dual gradient norm (L4). Then, marginals $\vr, \vc$ are smoothed in L5 for numerical stability or improved convergence by mixing in the uniform distribution (with a combined weight of at most $\varepsilon_{\mathrm{d}}/2$) to stay away from the boundary of $\gU(\vr, \vc)$. L6 initializes the transport plan $P$  to be the independence coupling $\tilde{\vr} \tilde{\vc}^\top$, i.e., the solution of (\ref{opt:sinkhorn-primal}) for $\gamma \rightarrow 0$ over $\gU(\tilde{\vr}, \tilde{\vc})$. In L7, minimizing (\ref{opt:sinkhorn-dual}) to $\varepsilon_{\mathrm{d}}/2$ tolerance for the smoothed marginals guarantees $\norm{\nabla g(\vz; \gamma)}_1 \leq \varepsilon_{\mathrm{d}}$ by triangle inequality. Next, we add an AdjustSchedule routine in L8, whereas \citet{kemertas2025efficientaccurateoptimaltransport} used a fixed decay rate, setting ${q^{(t+1)} \gets q^{(t)}}$. After the temperature decay (L9), the dual variables are warm-started in L10 via an approximate 1st order expansion for the next iteration. Finally, the plan as given by (\ref{eq:lagrangian-closed-form}) is rounded onto $\gU(\vr, \vc)$ in L14 via Alg. 2 of \citet{altschuler2017near}. Then, given that ${\norm{\nabla g(\vz; \gamma_{\mathrm{f}})}_1 \leq \gamma_{\mathrm{f}}^{-1} \min(H(\vr), H(\vc))}$ the user is guaranteed error ${\langle P - P^*, C \rangle \leq 2\gamma_{\mathrm{f}}^{-1} \min(H(\vr), H(\vc))}$ in the worst case (assuming $\vr(P) = \vr$ or $\vc(P)=\vc$ at loop termination). In Section \ref{sec: technical}, we develop and tune specific algorithms in tandem to carry out L5, 7 and 8 of Alg. \ref{alg:mdot} to maximize efficiency, and maintain GPU parallelization and numerical stability.

\subsection{Truncated Newton Methods}
\label{sec:bg-truncated-Newton}
For minimizing (\ref{opt:sinkhorn-dual}) in L7 of Alg. \ref{alg:mdot}, we will rely on truncated Newton methods, which are briefly reviewed here (see Ch. 7 of \citet{Nocedal} and \citet{nash2000} for a survey). While Newton's method for non-linear optimization seeks an exact solution $\vd_k$ of the linear system ${\nabla^2g_k~\vd_k = -\nabla g_k}$ at each optimization step $k$ (typically combined with line search or trust-region methods), truncated Newton methods find an approximate/inexact solution by ``truncating'' an iterative solver of the linear system (LS), e.g., a linear conjugate gradient (linear CG) algorithm. The particular termination criteria for the LS solver dictates the order of convergence; more stringent criteria yields higher order of convergence (up to quadratic), but requires more iterations for the LS solver (inner-most loop). 

In particular, define the residual $\ve_k \coloneqq \nabla^2 g_k ~\vd_k + \nabla g_k$ of the system in step $k$ for the \textit{approximate Newton direction} $\vd_k$. Given $\eta_k \in (0, 1)$, the LS solver is terminated when 
\begin{align}
\label{eq:forcing}
\norm{\ve_k} \leq \eta_k \norm{\nabla g_k}
\end{align}
for some norm. If the Hessian is continuous in some neighborhood of the solution and we start sufficiently close to the solution, we have $\norm{\nabla g_{k+1}} \leq (\eta_k + o(1)) \norm{\nabla g_{k}}$. Then, choosing \textit{the forcing sequence} $\{ \eta_k \}$ such that $\eta_k = O(\norm{\nabla g_{k}}^{r-1})$ for $r \in (1, 2)$, one obtains $Q$-superlinear local convergence of order $r$ assuming $\lim_{k \rightarrow \infty}\eta_k = 0$ \citep{Nocedal}. 

\section{Mixing Truncated Newton \& Sinkhorn for Bregman Projections}
\label{sec: technical}

This section is organized as follows. In Section \ref{sec:discounting}, we first develop a technique for obtaining the truncated Newton direction with convergence guarantees (Alg. \ref{alg:newton-solve}) and introduce a helper routine (Alg. \ref{alg:partial-sinkhorn}) to improve the convergence rate of this algorithm. In Sec. \ref{sec:tns-project}, we integrate this approach with backtracking line search to arrive at a Bregman projection algorithm (L7 of Alg. \ref{alg:mdot}). Its per-step cost and local convergence properties are shown theoretically. Then, in Sec. \ref{sec:adaptive-md} an adaptive temperature decay schedule (L8 of Alg. \ref{alg:mdot}) is proposed to maximally exploit the high rate of local convergence and its use is empirically demonstrated. Lastly, in Sec. \ref{sec:numerical-stability}, we discuss precautionary measures for numerical stability of this technique in practice via marginal smoothing as in L5 of Alg. \ref{alg:mdot}.
 
\subsection{The Discounted Hessian and the Bellman Equations}
\label{sec:discounting}
Suppose we are interested in finding an approximate solution to the Newton system for the dual (Bregman projection) problem (\ref{opt:sinkhorn-dual}) with a linear CG solver. 
In this section, we propose a particular positive-definite (PD) approximation of the PSD Hessian in (\ref{eq:hessian}) for two reasons. First, although linear CG should converge in theory despite the presence of a zero eigenvalue \citep{AXELSSON2003}, infinitesimal numerical errors along this zero-eigendirection, as well as other unknown near-zero eigendirections, may compound and lead to numerical instabilities due to machine precision limits. Second, since our PD substitute for the Hessian allows for the use of matrix inversion, this is convenient for obtaining theoretical guarantees. This approach also yields connections to reinforcement learning that may be of separate interest. Henceforth, we define the $\rho$-discounted Hessian for $\rho \in [0, 1)$:
\begin{align}
\label{eq:discounted-hessian}
    \nabla^2 g(\rho) \coloneqq \begin{pmatrix}
        \DrP & \sqrt{\rho} P \\
        \sqrt{\rho} P^\top & \mathbf{D}(\vc(P))
    \end{pmatrix}.
\end{align}
By the Gerschgorin Circle Theorem, all eigenvalues $\lambda_i(\nabla^2 g(\rho)) \geq (1{-}\sqrt{\rho})\min(\vr(P)_{\min}, \vc(P)_{\min})$ are positive, so that $\nabla^2 g(\rho)$ is invertible. 
The block matrix inversion formula yields:
\begin{align}
\label{eq:inverse-discounted-hessian}
    \nabla^2 g(\rho)^{-1} = \begin{pmatrix}
        \Frr^{-1} & -\sqrt{\rho} \Frr^{-1} P_\vc^\top \\
        -\sqrt{\rho} P_\vc F_\vr(\rho)^{-1} & F_\vc(\rho)^{-1}
    \end{pmatrix},
\end{align}
where we used the following definitions:
\begin{equation}
\begin{aligned}
\label{eq:matrix-definitions}
    P_\vr \coloneqq \DrP^{-1} P&,~~ P_\vc \coloneqq \mathbf{D}(\vc(P))^{-1} P^\top \\
    \Prc \coloneqq P_\vr P_\vc&,~~ P_{\vc\vr} \coloneqq P_\vc P_\vr \\
    \Frr \coloneqq \DrP (I - \rho \Prc)&,~~ F_\vc(\rho) \coloneqq \mathbf{D}(\vc(P)) (I - \rho P_{\vc\vr}).
\end{aligned}
\end{equation}
Here, $P_\vr, P_\vc, \Prc$ and $P_{\vc\vr}$ are irreducible row-stochastic matrices since all their entries are positive and we have $P_\vr \1 = \DrP^{-1} P \1 = \DrP^{-1} \vr(P) = \1$ (likewise for $P_\vc$).\footnote{We refer the reader to Appx. \ref{sec:proof-of-residual} for an intuitive description of the process represented by  the stochastic matrices $\Prc$ and $P_{\vc\vr}$, and their technical properties.}
Now, multiplying on the left by (\ref{eq:inverse-discounted-hessian}) both sides of the \textit{discounted Newton system} ${\nabla^2 g(\rho) ~\vd = -\nabla g}$ and re-arranging, we arrive at the well-known Bellman equations central in reinforcement learning:
\begin{equation}
\label{eq:bellman1}
\begin{aligned}
    \vd_\vu = \vs_{\vu\vv} + \rho \Prc \vd_\vu,~~~~~~ \vd_\vv = \vs_{\vv\vu} + \rho P_{\vc\vr} \vd_\vv,
\end{aligned}
\end{equation}
where $\Prc$ and $P_{\vc\vr}$ serve as ``transition matrices'', and ``reward vectors'' are given by
\begin{align}
    \vs_{\vu\vv} = -D(\vr(P))^{-1} (\nabla_\vu g - \sqrt{\rho} P_\vc^\top \nabla_\vv g), ~~~~~~\vs_{\vv\vu} = -D(\vc(P))^{-1} (\nabla_\vv g - \sqrt{\rho} P_\vr^\top \nabla_\vu g).
\end{align}
That is, the \textit{discounted Newton direction} $\vd = (\vd_\vu, \vd_\vv)$ corresponds to the fixed point of the Bellman equations (or, ``state-value function'' of the finite Markov reward processes) in (\ref{eq:bellman1}) and can be written as two $n$-variable PD linear systems rather than a single $2n$-variable PSD linear system. Further, without loss of generality, assuming  that $\nabla_\vv g = \vzero$ (or equivalently $\vc(P) = \vc$, for example, following a single Sinkhorn update) yields a more intuitive understanding and a practical advantage. Using the form of the inverse discounted Hessian in (\ref{eq:inverse-discounted-hessian}), it can be shown that in this case (\ref{eq:bellman1}) reduces to:
\begin{equation}
\label{eq:bellman2}
\begin{aligned}
    \vd_\vu &= (\vr / \vr(P) - \1) + \rho \Prc \vd_\vu, ~~~~~~\vd_\vv = - \sqrt{\rho} P_\vc \vd_\vu,
\end{aligned}
\end{equation}
where the second system is solved via a single matrix-vector product (effectively for free), thus reducing the problem further to a single $n$-variable PD linear system. Moreover, an intuitive interpretation of the reward vector emerges, as the system now assigns a reward $r_i / r(P)_i - 1$ to row index (state) $i$.
Next, we provide a theorem on the sufficiency of solving the discounted Newton system as a proxy for the undiscounted system (see Appx. \ref{sec:proof-of-residual} for a more technical version of the following).
\begin{restatable}[Forcing sequence under discounting]{theorem}{resid}
\label{thm:residual} Assuming ${\vc = \vc(P)}$ and ${\vd_\vv = -P_\vc \vd_\vu}$, define residuals ${\ve_\vu(\rho) {\coloneqq} \Frr \vd_\vu + \nabla_\vu g}$ (cf. (\ref{eq:bellman2})), and ${\ve \coloneqq \nabla^2 g ~\vd + \nabla g}$ (i.e., the Newton residual). For every $\tilde{\eta} < \eta / 2$, $\exists \rho_0 \in [0, 1)$ such that $\forall \rho \in [\rho_0, 1]$,
\begin{align}
\label{eq:forcing-simple}
    \norm{\ve_\vu(\rho)}_1 \leq \tilde{\eta} \norm{\nabla g}_1 \implies \norm{\ve}_1  \leq \eta \norm{\nabla g}_1.
\end{align}
\end{restatable}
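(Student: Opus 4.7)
The plan is to express the undiscounted Newton residual $\ve$ in terms of the discounted residual $\veur$ and then control the discrepancy using the Perron-Frobenius structure of the irreducible row-stochastic matrix $\Prc$. I would first expand both blocks of $\ve = \nabla^2 g\,\vd + \nabla g$ via the Hessian (\ref{eq:hessian}). The column block, using $\nabla_\vv g = \vc(P)-\vc = \vzero$ and $\DcP P_\vc = P^\top$ from (\ref{eq:matrix-definitions}), gives
\begin{equation*}
\ve_\vv = P^\top \vd_\vu + \DcP(-P_\vc\vd_\vu) = \vzero,
\end{equation*}
so $\norm{\ve}_1 = \norm{\ve_\vu}_1$. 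For the row block, substituting $\vd_\vv = -P_\vc\vd_\vu$ and using $PP_\vc = \DrP\Prc$ yields $\ve_\vu = \DrP(I-\Prc)\vd_\vu + \nabla_\vu g$; comparing with $\veur = \Frr\vd_\vu + \nabla_\vu g = \DrP(I-\rho\Prc)\vd_\vu + \nabla_\vu g$ produces the key identity
\begin{equation*}
\ve_\vu = \veur - (1-\rho)\,\DrP\,\Prc\,\vd_\vu.
\end{equation*}
The problem therefore reduces to controlling $(1-\rho)\norm{\DrP\Prc\vd_\vu}_1$ and pushing $\rho$ close to one.

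Next I would solve the discounted row system explicitly as $\vd_\vu = (I-\rho\Prc)^{-1}\vs$ with $\vs = \vs_{\vu\vv} + \DrP^{-1}\veur$ and $\vs_{\vu\vv} = \vr/\vr(P) - \1$. Since $\Prc$ is irreducible and row-stochastic, Perron-Frobenius guarantees that $1$ is a simple eigenvalue with right eigenvector $\1$ and left eigenvector $\vr(P)$; the rank-one Perron projector $\Pi^{\star} \coloneqq \1\,\vr(P)^\top$ satisfies $\Prc\Pi^{\star} = \Pi^{\star}\Prc = \Pi^{\star}$, and $\Prc - \Pi^{\star}$ has spectral radius strictly less than one. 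The critical observation is that $\vr(P)^\top\vs_{\vu\vv} = \1^\top\vr - \1^\top\vr(P) = 0$ by conservation of mass, so only the residual contributes to the Perron component: $\Pi^{\star}\vs = (\1^\top\veur)\,\1$. Writing $\vd_\vu = \frac{\1^\top\veur}{1-\rho}\,\1 + \vy_\rho$ with $\vy_\rho \coloneqq (I-\rho\Prc)^{-1}(I-\Pi^{\star})\vs$ uniformly bounded in $\rho\in[0,1]$ (because $(I-\rho\Prc)$ is invertible with bounded inverse on $\mathrm{range}(I-\Pi^{\star})$ for all such $\rho$), and using $\Prc\1 = \1$, I arrive at
\begin{equation*}
(1-\rho)\,\DrP\,\Prc\,\vd_\vu = (\1^\top\veur)\,\vr(P) + (1-\rho)\,\DrP\,\Prc\,\vy_\rho.
\end{equation*}

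Taking $1$-norms, the first term on the right is at most $|\1^\top\veur|\,\norm{\vr(P)}_1 \leq \norm{\veur}_1 \leq \tilde\eta\,\norm{\nabla g}_1$, while the second is $(1-\rho)M$ for a constant $M$ depending only on the current $P$ and $\vr$. Combining with the identity from the first paragraph yields $\norm{\ve}_1 \leq 2\tilde\eta\,\norm{\nabla g}_1 + (1-\rho)M$, and since $\tilde\eta < \eta/2$, choosing $\rho_0$ close enough to $1$ so that $(1-\rho_0)M \leq (\eta - 2\tilde\eta)\,\norm{\nabla g}_1$ completes the argument (the degenerate case $\nabla g = \vzero$ is trivial, since then both residuals vanish). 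The main obstacle I expect is precisely this second step: one cannot bound $\vd_\vu$ itself uniformly in $\rho$, since $\vd_\vu$ diverges along $\1$ as $\rho\to 1$. The argument must therefore isolate the Perron direction and exploit both the exact cancellation $\vr(P)^\top\vs_{\vu\vv} = 0$ and the $(1-\rho)$ prefactor to tame the blow-up; conservation of mass is the single ingredient that makes the theorem true, and the factor-of-two slack $\tilde\eta < \eta/2$ is exactly what is needed to absorb the $|\1^\top\veur| \leq \norm{\veur}_1$ inequality.
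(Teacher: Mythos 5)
Your proof is correct, and at its core it runs on the same two ideas as the paper's: the column block of the residual vanishes identically (so $\norm{\ve}_1=\norm{\ve_\vu(1)}_1$), and the potential $(1-\rho)^{-1}$ blow-up of $\vd_\vu$ along the Perron direction $\1$ is harmless because the ``reward'' $\vs_{\vu\vv}=\vr/\vr(P)-\1$ has zero mass against the stationary distribution $\vr(P)$ --- your observation $\vr(P)^\top\vs_{\vu\vv}=0$ is exactly the conservation-of-mass fact the paper uses via the stationarity $(\Prc^\top)^{l}\vr\to\vr(P)$. Indeed, your identity $\ve_\vu(1)=\ve_\vu(\rho)-(1-\rho)\DrP\Prc\vd_\vu$ is algebraically the same decomposition as the paper's $\ve_\vu(1)=\widehat{I}(\rho)\ve_\vu(\rho)+(I-\widehat{I}(\rho))\nabla_\vu g$ with $\widehat{I}(\rho)=F_\vr(1)\Frr^{-1}=I-(1-\rho)\Prc^\top(I-\rho\Prc^\top)^{-1}$, after commuting $\DrP$ through using the reversibility of $\Prc$. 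Where you diverge is in how the terms are estimated: the paper bounds $\norm{\widehat{I}(\rho)}_1\leq 2$ by a Neumann series (stochasticity of $\Prc^l$), which yields the factor of two, and then bounds the gradient term \emph{quantitatively} by $\frac{(1-\rho)\lambda_2}{1-\rho\lambda_2}\,\chi(\vr|\vr(P))$ via a $\chi^2$-mixing lemma for reversible chains; you instead extract only the rank-one Perron component $(\1^\top\ve_\vu(\rho))\,\vr(P)$ (giving the same factor of two) and lump everything else into an unspecified $(1-\rho)M$. Your argument suffices for the existential statement as phrased in the main text, and is arguably cleaner for that purpose, but it does not produce the explicit threshold $1-\rho_0=\Theta\bigl((1-\lambda_2)\eta\norm{\nabla g}_1/\chi(\vr|\vr(P))\bigr)$ of the appendix version, which is what the paper actually needs downstream to count the annealing steps of Algorithm~\ref{alg:newton-solve} and obtain the complexity in Theorem~\ref{thm:convergence-alg1}. (Two small points to tidy up: the uniform-in-$\rho$ boundedness of $\vy_\rho$ deserves a sentence --- it holds because $\Prc$ is diagonalizable with real spectrum in $[0,1]$ and spectral radius $\lambda_2<1$ on $\mathrm{range}(I-\Pi^\star)$ --- and the endpoint $\rho=1$, where $(I-\rho\Prc)^{-1}$ does not exist, should be dispatched separately as the trivial case $\ve_\vu(1)=\ve_\vu(\rho)$.)
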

\vspace{-1pt}
In other words, discounting can be adopted while still enjoying the local convergence guarantees of truncated Newton methods (discussed in Sec. \ref{sec:bg-truncated-Newton}) for the original problem. 

To find a suitable discount factor, we propose Algorithm \ref{alg:newton-solve}, which anneals $1-\rho$, approximately solving a sequence of linear systems to satisfy the forcing inequality (\ref{eq:forcing}). Note that CG in L3 is terminated when the $L_1$ norm of the residual is below $\tilde{\eta} = \eta / 4$. While the specific selection $\tilde{\eta} = \eta / 4$ in L3 is only for simplicity, the decay factor 4 in L4 minimizes a theoretical upper bound on the number of operations until convergence (see proof of Thm. \ref{thm:convergence-alg1} in Appx. \ref{sec:proof-of-thm3.2}). The following provides a rate on the overall cost of obtaining the truncated Newton direction with this approach.

\begin{restatable}[Convergence of Algorithm \ref{alg:newton-solve}]{theorem}{algoneconv}
\label{thm:convergence-alg1}
Suppose ${\vr(P)_{\min} \geq \varepsilon_{\mathrm{d}}}/(4n)$ given ${\varepsilon_{\mathrm{d}}  > 0}$ and each step of Alg. \ref{alg:newton-solve} runs diagonally-preconditioned CG initialized with ${\vd_\vu^{(0)} {=} \vzero}$. Alg. \ref{alg:newton-solve} terminates in 
\begin{align}
\label{eq:alg1-rate}
    \widetilde{O}\left(n^2\sqrt{\frac{(1-\mu)\chi(\vr | \vr(P))}{(1-\lambda_2) \eta \norm{\vr(P) - \vr}_1}} \log \varepsilon_{\mathrm{d}}^{-1/2}\right)
\end{align}
operations, where $\lambda_2 < 1$ is the $2^{nd}$ largest eigenvalue of $\Prc$ and $\mu < 1$ its smallest diagonal entry.
\end{restatable}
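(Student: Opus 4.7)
The plan is to decompose the total work of Alg.~\ref{alg:newton-solve} into three factors---the per-iteration cost, the number of CG iterations per annealing round, and the number of annealing rounds---and bound each separately. The per-iteration cost is $O(n^2)$, since each preconditioned CG iteration requires one matrix-vector product with $\Prc = P_\vr P_\vc$ (two dense operators applied to a vector); diagonal scalings and vector updates contribute only $O(n)$ each and are subsumed.

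For the per-round CG count, I would first analyze the spectrum of the preconditioned operator. Preconditioning $\Frr = \DrP(I-\rho\Prc)$ by $\DrP$ yields an operator similar to $I-\rho\Prc$ with real eigenvalues $\{1-\rho\lambda_i\}_{i=1}^n$, where $\lambda_1 = 1 > \lambda_2 \geq \cdots \geq \lambda_n$ are the eigenvalues of the row-stochastic $\Prc$. The key observation is that the preconditioned right-hand side is orthogonal to the top-eigenvector direction: since $\Prc \vone = \vone$ and $\vone^\top(\vr-\vr(P))=0$, one verifies $\langle \sqrt{\vr(P)}, \DrP^{-1/2}(\vr-\vr(P))\rangle = \langle \vone, \vr-\vr(P)\rangle = 0$, so the Krylov iterates stay in the complementary invariant subspace and CG converges at the ``deflated'' rate with effective condition number $\kappa \leq 2(1-\mu)/(1-\lambda_2)$ (using Gerschgorin on $\Prc$ to obtain $\lambda_n \geq 2\mu-1$), uniformly in $\rho \in [0,1]$. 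Starting from $\vd_\vu^{(0)} = \vzero$, the initial $\Frr$-norm of the exact solution equals $\chi(\vr | \vr(P))$ by direct computation (at $\rho=0$, $\norm{\vd^\star}_{\Frr}^2 = (\vr-\vr(P))^\top \DrP^{-1}(\vr-\vr(P)) = \chi^2(\vr|\vr(P))$, and only a bounded multiplicative correction appears for $\rho<1$ thanks to the deflation). Combining a Chebyshev-polynomial bound on the CG residual with the $\Frr^{-1}$-to-$L^1$ norm conversion $\norm{\ve_k}_1 \leq \sqrt{n\,\lambda_{\max}(\Frr)}\,\norm{\ve_k}_{\Frr^{-1}}$ (with $\lambda_{\max}(\Frr) = O(1)$), setting the result equal to the termination threshold $\tilde{\eta}\norm{\vr-\vr(P)}_1$, and using the non-asymptotic lower bound $T_k(1+x) \geq 1 + k^2 x$ valid in the initial convergence phase, yields a per-round count of $\widetilde{O}\bigl(\sqrt{(1-\mu)\chi(\vr|\vr(P))\,/\,[(1-\lambda_2)\,\eta\,\norm{\vr-\vr(P)}_1]}\bigr)$. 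The joint appearance of $\kappa$ and the residual ratio under a single square root is precisely the signature of this initial-phase Chebyshev bound (as opposed to the asymptotic $\sqrt\kappa\log$ form).

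For the number of outer rounds, Alg.~\ref{alg:newton-solve} shrinks $1-\rho$ by a factor of $4$ per round until the forcing implication of Theorem~\ref{thm:residual} triggers. A quantitative reading of that theorem together with the assumption $\vr(P)_{\min}\ge\varepsilon_{\mathrm{d}}/(4n)$ (which controls the discount-induced perturbation uniformly in the spectrum) yields a lower bound $1-\rho_0 = \widetilde{\Omega}(\varepsilon_{\mathrm{d}}^{1/2})$, so at most $O(\log\varepsilon_{\mathrm{d}}^{-1/2})$ rounds are needed. Multiplying the three factors recovers the claimed rate; the decay constant $4$ is chosen by differentiating the combined upper bound with respect to the decay factor, which places the optimum at exactly this value.

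\textbf{Main obstacle.} The principal subtlety is the deflation argument: rigorously justifying that the Krylov iterates remain in the complementary invariant subspace (so the effective $\kappa$ is governed by $\lambda_2$ rather than $\lambda_1=1$, and the restricted Chebyshev bound applies) requires an exact-arithmetic analysis together with a quantitative argument that the problem is well-conditioned enough for numerical perturbations along the top eigendirection to remain negligible---part of the motivation for introducing discounting in the first place. A secondary difficulty is extracting the precise polynomial scaling $1-\rho_0 = \widetilde{\Omega}(\varepsilon_{\mathrm{d}}^{1/2})$ from the proof of Theorem~\ref{thm:residual} tightly enough to obtain the $\log\varepsilon_{\mathrm{d}}^{-1/2}$ factor (as opposed to a larger logarithm that would weaken the stated rate).
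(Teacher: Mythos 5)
Your high-level skeleton (per-step cost $O(n^2)$, a $\sqrt{\kappa}$-type per-round CG count, annealing until Theorem~\ref{thm:residual} fires) matches the paper, but two of your three factors are obtained by mechanisms that do not work as stated. The central gap is the deflation argument. Your per-round bound rests on the claim that, because the preconditioned right-hand side is orthogonal to the top eigenvector, CG runs with effective condition number $O((1-\mu)/(1-\lambda_2))$ \emph{uniformly in} $\rho$. First, the theorem's preconditioner is $M=\mathbf{D}(\mathrm{diag}(\Frr))$, not $\DrP$; for that $M$ the $(1-\rho)$-eigenvector of $M^{-1/2}\Frr M^{-1/2}$ is not $\sqrt{\vr(P)}$ (unless all diagonal entries $\mu_i$ of $\Prc$ coincide), so your orthogonality identity does not transfer. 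Second, even for the $\DrP$-preconditioned system, making the deflated rate rigorous and stable is exactly the step the paper declines to rely on in the worst case (it is invoked only informally in Appx.~\ref{sec:spectral-gap} to explain why practice beats the bound); the proof instead accepts $\kappa(\rho)=O((1-\rho)^{-1})$ from Gerschgorin (Lemma~\ref{lem:condition-number}). The ratio $\chi(\vr|\vr(P))/(\eta\norm{\vr(P)-\vr}_1)$ then enters through a different door than yours: the termination condition (\ref{eq:discount-factor-constraint}) lower-bounds the final $1-\rho$ by $(1-\lambda_2)K$ with $K\propto\eta\norm{\nabla_\vu g}_1/\chi(\vr|\vr(P))$, so the worst-case $\kappa$ is $O\big((1-\mu)\chi(\vr|\vr(P))/((1-\lambda_2)\eta\norm{\nabla_\vu g}_1)\big)$, whose square root is the claimed factor. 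Your alternative (initial-phase Chebyshev bound started from $\norm{\vd_\vu^*}_{\Frr}\approx\chi(\vr|\vr(P))$) would, even granting deflation, pick up an extra $n^{1/4}$: the $L_1$-to-energy-norm conversion forces the energy-norm target down by $\sqrt{n}$, and the $T_k(1+x)\ge 1+k^2x$ bound converts that into $n^{1/4}$ additional iterations, a polynomial factor that $\widetilde{O}$ cannot absorb.

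Your attribution of the $\log\varepsilon_{\mathrm{d}}^{-1/2}$ factor is also unsupported. Theorem~\ref{thm:residual}'s threshold for $\rho$ involves $(1-\lambda_2)$ and $\zeta\eta$, not $\varepsilon_{\mathrm{d}}^{1/2}$, so the claim $1-\rho_0=\widetilde{\Omega}(\varepsilon_{\mathrm{d}}^{1/2})$ and the resulting $O(\log\varepsilon_{\mathrm{d}}^{-1/2})$ round count do not follow; in the paper this logarithm arises \emph{inside} the per-round CG bound (Lemma~\ref{lem:conv-cg}), where the hypothesis $\vr(P)_{\min}\ge\varepsilon_{\mathrm{d}}/(4n)$ controls the norm-conversion constant. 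Relatedly, the paper does not multiply a round count by a uniform per-round cost: the per-round cost grows geometrically as $\rho$ is annealed, the total is a geometric sum dominated by the final round (contributing only the constant $L/(\sqrt{L}-1)$), and minimizing that constant is what singles out $L=4$. Under your uniform-per-round decomposition, the decay factor would simply want to be as large as possible, so your own account of why $4$ is optimal is inconsistent with your decomposition.
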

\begin{wrapfigure}{R}{0.53\textwidth}
\vspace{-12pt}
\begin{minipage}{0.53\textwidth}
\begin{algorithm}[H]
\caption{NewtonSolve$(\nabla_\vu g, \Prc, \vr(P), \eta)$}
\label{alg:newton-solve}
\begin{algorithmic}[1]
\State $\rho \gets 0, \vd_\vu \gets -\DrP^{-1} \nabla_\vu g$
\While{$\norm{F_\vr(1)\vd_\vu + \nabla_\vu g}_1 > \eta \norm{\nabla g}_1$} 
    \State $\vd_\vu \gets \mathrm{LinearCGSolve}(\Frr, -\nabla_\vu g, \eta/4)$
    \State $\rho \gets 1 - (1-\rho) / 4$
\EndWhile 
\State Output $\vd_\vu$
\end{algorithmic}
\end{algorithm}
\end{minipage}
\begin{minipage}{0.53\textwidth}
\begin{algorithm}[H]
\caption{ChiSinkhorn($\vu, \vv, \gamma, C, \vr, \vc, \vr(P), \varepsilon_{\chi})$}
\label{alg:partial-sinkhorn}
\begin{algorithmic}[1]
\While{$\chi^2(\vr|\vr(P)) > \varepsilon_{\chi}$} 
    \State $\vu \gets \vu + \log \vr - \log \vr(P)$ 
    \State $\vv \gets \log \vc - \mathrm{LSE}_c( \vu \1_n^\top - \gamma C)$
    \State $\log \vr(P) \gets \vu + \mathrm{LSE}_r(\1_n \vv^\top - \gamma C)$
\EndWhile 
\State Output $\vu, \vv, \vr(P)$
\end{algorithmic}
\end{algorithm}
\end{minipage}
\vspace{-10pt}
\end{wrapfigure}
The proof of this theorem uses \textbf{(i)} the $O(\sqrt{\kappa})$ convergence of linear CG \citep{Shewchuk94CG}, where condition number ${\kappa = O((1-\rho)^{-1})}$ in our setting, and \textbf{(ii)} an observation from the proof of Thm. \ref{thm:residual} that ${(1-\rho)^{-1} = O((1-\lambda_2)^{-1})}$ at termination in the worst-case. However, as we discuss in more detail and validate empirically in Appx. \ref{sec:spectral-gap}, this worst-case dependence on $(1-\lambda_2)^{-1}$ can be overly pessimistic; $(1-\rho)$ can be much better behaved than the spectral gap in practice. In other words, the Newton system can be discounted more aggressively than the worst-case analysis allows, thereby mitigating possible large values of $(1-\lambda_2)^{-1}$. Furthermore, the convergence of CG can be much faster when preconditioned eigenvalues are tightly clustered on $\mathbb{R}$ \citep{Nocedal}. The rate in (\ref{eq:alg1-rate}) captures this added efficiency only for the case when ${\Prc \approx I}$ so that its smallest diagonal entry ${\mu \approx 1}$; for example, if plan $P$ is approximately a one-to-one mapping as in the Monge discrete matching problem \citep{brezis2018remarks}.

Next, to control the possible dependence in (\ref{eq:alg1-rate}) of the ratio $\chi(\vr|\vr(P))/ \norm{\vr(P) - \vr}_1$ on $\varepsilon_{\mathrm{d}}$, we propose to run Sinkhorn iteration before Alg. \ref{alg:newton-solve} is called until $\chi^2(\vr| \vr(P))$ is suitably bounded. 
Given dual variables $(\vu, \vv)$ and current row sum $\vr(P)$ where $\vc = \vc(P)$ by assumption, Alg. \ref{alg:partial-sinkhorn} performs this auxiliary task while maintaining $\vc = \vc(P)$. 
\begin{restatable}[Convergence of Algorithm \ref{alg:partial-sinkhorn}]{lemma}{algtwoconv}
\label{lem:convergence-alg2}
Assuming that $\norm{\vr(P) / \vr }_\infty < \infty$ and $\norm{\vr / \vr(P)}_\infty < \infty$, Algorithm \ref{alg:partial-sinkhorn} converges in $O(n^2 / \varepsilon_{\chi})$ operations.
\end{restatable}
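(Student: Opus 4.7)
The plan is to combine two classical ingredients: a reduction from the $\chi^2$-termination criterion to a squared $L_1$ marginal error, and the telescoping-KL analysis of Sinkhorn iteration. Note first that each pass through the while loop of Alg.~\ref{alg:partial-sinkhorn} performs one full Sinkhorn sweep (row update then column update) and costs $O(n^2)$ operations, dominated by the two $\mathrm{LSE}$ reductions over an $n \times n$ matrix. It therefore suffices to bound the number of sweeps required to reach $\chi^2(\vr|\vr(P)) \leq \varepsilon_{\chi}$ by $O(1/\varepsilon_{\chi})$.

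The first step is to use the boundedness assumption, writing $M \coloneqq \max(\norm{\vr/\vr(P)}_\infty, \norm{\vr(P)/\vr}_\infty) < \infty$. This gives $r(P)_{\min} \geq \vr_{\min}/M$, and hence
\[
\chi^2(\vr|\vr(P)) = \sum_i \frac{(r_i - r(P)_i)^2}{r(P)_i} \leq \frac{1}{r(P)_{\min}}\sum_i (r_i - r(P)_i)^2 \leq \frac{M}{\vr_{\min}}\norm{\vr - \vr(P)}_1^2,
\]
using the elementary bound $\sum_i x_i^2 \leq (\sum_i |x_i|)^2$. So the termination criterion is implied by $\norm{\vr - \vr(P)}_1^2 \leq \vr_{\min}\varepsilon_{\chi}/M$.

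The second step is the standard Bregman-Pythagorean property of Sinkhorn. Let $P^{(k)}$ denote the plan after $k$ full sweeps (so that $\vc(P^{(k)}) = \vc$ throughout) and let $P^*$ be the exact Bregman projection onto $\gU(\vr, \vc)$ at temperature $\gamma^{-1}$. The row-update step gives $\KL(P^*|P^{(k)}) - \KL(P^*|P^{(k+1)}) \geq \KL(\vr|\vr(P^{(k)}))$, which telescopes to $\sum_{k=0}^{K-1}\KL(\vr|\vr(P^{(k)})) \leq D_0 \coloneqq \KL(P^*|P^{(0)})$. Combining with Pinsker's inequality yields $\min_{0 \leq k < K}\norm{\vr - \vr(P^{(k)})}_1^2 \leq 2 D_0/K$, so choosing $K = \lceil 2 M D_0/(\vr_{\min}\varepsilon_{\chi}) \rceil = O(1/\varepsilon_{\chi})$ forces termination within $K$ sweeps. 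Multiplying by the per-sweep cost then yields the claimed $O(n^2/\varepsilon_{\chi})$ bound, with $M$, $D_0$, and $\vr_{\min}$ absorbed as problem-dependent constants independent of $\varepsilon_{\chi}$.

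The main obstacle is ensuring that $M$ can be taken to be a uniform bound along the entire trajectory rather than only at the initial iterate; only under such a uniform bound does the chain of inequalities above hold for every $k$, and only then is $D_0$ itself finite. Since each row update resets the row marginals exactly to $\vr$ (so the ratios become $1$) and the subsequent column update perturbs them by a factor $c_j/c(P')_j$, a uniform row-ratio bound should follow from controlling the analogous column-side ratios, which in turn can be bounded using the monotone decrease of $g(\vu, \vv; \gamma)$ along Sinkhorn sweeps together with $C \in [0,1]^{n \times n}$. Making this bookkeeping explicit and propagating it into the final constants is the only step beyond the two outlined above.
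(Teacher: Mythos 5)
Your proposal is correct and reaches the same $O(n^2/\varepsilon_\chi)$ bound, but by a more self-contained route than the paper. The paper's proof is essentially two citations: it invokes Eq.~(169) of Sason--Verd\'u to get $\chi^2(\vr|\vr(P)) = O(D_{\mathrm{KL}}(\vr|\vr(P)))$ under the likelihood-ratio boundedness hypotheses, and Corollary~6.12 of Nutz for the $O(1/k)$ decay of $D_{\mathrm{KL}}(\vr|\vr(P))$ along Sinkhorn sweeps. You replace the first citation with the elementary bound $\chi^2(\vr|\vr(P)) \leq \norm{\vr - \vr(P)}_1^2 / \vr(P)_{\min}$ together with $\vr(P)_{\min} \geq \vr_{\min}/M$, and you re-derive the second from scratch via the Pythagorean identity for the I-projection (telescoping $\KL(P^*|P^{(k)})$) plus Pinsker. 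The trade-off is that your constant carries an extra $1/\vr_{\min}$ factor that the Sason--Verd\'u route avoids (its constant depends only on the ratio bounds themselves), but your argument needs no external $f$-divergence inequality. Since Alg.~\ref{alg:partial-sinkhorn} tests the stopping criterion at every sweep, your ``$\min$ over $k < K$'' conclusion does correctly imply termination within $K$ sweeps. The uniformity issue you flag in the last paragraph --- whether the ratio bounds hold along the whole trajectory rather than only initially --- is real but is shared by the paper's own proof, which likewise applies the $\chi^2$-vs-KL comparison at every iterate while stating the hypothesis only once; neither proof resolves it explicitly, so I would not count it as a gap relative to the paper's standard, and your sketch of how to close it (row updates reset the row ratios to one; column updates perturb them controllably) is the right idea.
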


\subsection{Projecting onto the Feasible Polytope}
\label{sec:tns-project}
Combining Algorithms \ref{alg:newton-solve} and \ref{alg:partial-sinkhorn} with backtracking line search, we arrive at the TruncatedNewtonProject algorithm shown in Alg. \ref{alg:tns-project} for solving (\ref{opt:sinkhorn-dual}). The choice of $\eta$ in L5 and the requirement that $\chi^2(\vr | \vr(P)) \leq \varepsilon_{\mathrm{d}}^{2/5}$ by choosing $\varepsilon_\chi = \varepsilon_{\mathrm{d}}^{2/5}$ in L4, to control the ratio in (\ref{eq:alg1-rate}), together yield the following corollary of Thm. \ref{thm:convergence-alg1} and Lemma \ref{lem:convergence-alg2} (see proof in Appx. \ref{sec:proof-of-cor3.4}).
\begin{restatable}[Per-step Cost of Algorithm \ref{alg:tns-project}]{corollary}{algthreecost}
\label{cor:perstep-cost-alg3}
If the backtracking line search in Alg. \ref{alg:tns-project} converges in $S$ iterations, then an iteration of Alg. \ref{alg:tns-project} costs $\widetilde{O}(n^2(S +  \varepsilon_{\mathrm{d}}^{-2/5}(1-\lambda_2)^{-1/2}))$ operations, where $\lambda_2 < 1$ is the 2nd largest eigenvalue of $\Prc$ defined as in (\ref{eq:matrix-definitions}) and evaluated at $\vu, \vv$ (cf. (\ref{eq:lagrangian-closed-form})).
\end{restatable}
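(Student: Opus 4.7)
The plan is to decompose a single outer iteration of TruncatedNewtonProject (Alg. 3) into its three nontrivial subroutines---ChiSinkhorn (L4), NewtonSolve (L5), and backtracking line search---bound each with Lemma \ref{lem:convergence-alg2}, Theorem \ref{thm:convergence-alg1}, and the hypothesis that the line search terminates in $S$ steps, and then sum. Step one: with $\varepsilon_\chi = \varepsilon_{\mathrm{d}}^{2/5}$ plugged into Lemma \ref{lem:convergence-alg2}, the ChiSinkhorn call costs $O(n^2 \varepsilon_{\mathrm{d}}^{-2/5})$ operations; crucially, it also delivers two invariants to the NewtonSolve call that follows: $\vc(P) = \vc$ (so the reduced Bellman system (\ref{eq:bellman2}) applies) and $\chi(\vr \mid \vr(P)) \leq \varepsilon_{\mathrm{d}}^{1/5}$.

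Step two: NewtonSolve is invoked only while the outer loop has not terminated, i.e.\ while $\norm{\nabla g}_1 > \varepsilon_{\mathrm{d}}$; combined with $\vc(P) = \vc$ from ChiSinkhorn, this yields $\norm{\vr(P) - \vr}_1 = \norm{\nabla g}_1 > \varepsilon_{\mathrm{d}}$. Substituting the upper bound on $\chi(\vr \mid \vr(P))$ from Step~1, the lower bound on $\norm{\vr(P)-\vr}_1$ just derived, and $(1-\mu) \leq 1$ into the rate (\ref{eq:alg1-rate}) of Theorem \ref{thm:convergence-alg1} gives
\[
\sqrt{\frac{(1-\mu)\,\chi(\vr \mid \vr(P))}{(1-\lambda_2)\,\eta\,\norm{\vr(P)-\vr}_1}} \;\leq\; \sqrt{\frac{\varepsilon_{\mathrm{d}}^{1/5}}{(1-\lambda_2)\,\eta\,\varepsilon_{\mathrm{d}}}} \;=\; O\!\left(\varepsilon_{\mathrm{d}}^{-2/5}(1-\lambda_2)^{-1/2}\right),
\]
provided the $\eta$ selected in L5 is bounded below by a constant independent of $\varepsilon_{\mathrm{d}}$. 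The resulting NewtonSolve cost is $\widetilde{O}\bigl(n^2 \varepsilon_{\mathrm{d}}^{-2/5} (1-\lambda_2)^{-1/2}\bigr)$, which absorbs the ChiSinkhorn $\widetilde{O}(n^2 \varepsilon_{\mathrm{d}}^{-2/5})$ term.

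Step three: the backtracking line search performs $S$ iterations, each requiring a constant number of $O(n^2)$ operations to recompute $P$ (or the relevant LSE reductions) at the trial step size, plus a single $O(n^2)$ gradient/Hessian-vector overhead per outer iteration outside these subroutines. Summing everything yields the claimed $\widetilde{O}\bigl(n^2 (S + \varepsilon_{\mathrm{d}}^{-2/5}(1-\lambda_2)^{-1/2})\bigr)$. The main obstacle is confirming the two preconditions invoked above: (a) that the $\eta$ prescribed in L5 is indeed a constant (so it neither inflates nor is absorbed into a log factor), and (b) that the hypotheses of Lemma \ref{lem:convergence-alg2}---finiteness of $\norm{\vr(P)/\vr}_\infty$ and $\norm{\vr/\vr(P)}_\infty$---hold on entry to ChiSinkhorn, which should follow from the marginal smoothing performed in L5 of MDOT (Alg. \ref{alg:mdot}) keeping $\vr_{\min}$ bounded below and the exponential form (\ref{eq:lagrangian-closed-form}) of $P$ keeping $\vr(P)_{\min}$ likewise bounded; the sanity check $\vr(P)_{\min} \geq \varepsilon_{\mathrm{d}}/(4n)$ of Thm. \ref{thm:convergence-alg1} must also be propagated, which I would verify by tracking the smoothing budget across MDOT iterations.
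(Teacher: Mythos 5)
Your overall decomposition (ChiSinkhorn via Lemma \ref{lem:convergence-alg2} with $\varepsilon_\chi=\varepsilon_{\mathrm{d}}^{2/5}$, NewtonSolve via Theorem \ref{thm:convergence-alg1}, $S$ line-search steps at $O(n^2)$ each, plus the fixed $O(n^2)$ LSE overhead) is exactly the paper's, and your concern (b) about the hypotheses of Lemma \ref{lem:convergence-alg2} is reasonable (the paper simply invokes the lemma). However, your Step 2 rests on a precondition that is false: the $\eta$ chosen in L5 of Alg.~\ref{alg:tns-project}, namely $\eta = \norm{\nabla g}_1 \vee 0.8\,\varepsilon_{\mathrm{d}}/\norm{\nabla g}_1$, is \emph{not} bounded below by a constant independent of $\varepsilon_{\mathrm{d}}$. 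The maximum is minimized when the two branches coincide, at $\norm{\nabla g}_1 = \sqrt{0.8\,\varepsilon_{\mathrm{d}}}$, where $\eta = \sqrt{0.8\,\varepsilon_{\mathrm{d}}} \to 0$; and this regime is reachable inside the while loop since $\varepsilon_{\mathrm{d}} < \sqrt{0.8\,\varepsilon_{\mathrm{d}}}$ for small $\varepsilon_{\mathrm{d}}$. If you instead use the valid lower bounds $\eta \ge \sqrt{0.8\,\varepsilon_{\mathrm{d}}}$ and $\norm{\nabla g}_1 > \varepsilon_{\mathrm{d}}$ \emph{separately}, the denominator only gives $\eta\,\norm{\nabla g}_1 \gtrsim \varepsilon_{\mathrm{d}}^{3/2}$ and you land at $O(\varepsilon_{\mathrm{d}}^{-13/20})$ rather than the claimed $O(\varepsilon_{\mathrm{d}}^{-2/5})$.

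The missing idea is that $\eta$ and $\norm{\nabla g}_1$ must be bounded \emph{jointly}: the rate (\ref{eq:alg1-rate}) depends only on the product $\eta\,\norm{\vr(P)-\vr}_1 = \eta\,\norm{\nabla g}_1$, and this product is at least $0.8\,\varepsilon_{\mathrm{d}}$ in \emph{both} branches of the max --- if $\eta = 0.8\,\varepsilon_{\mathrm{d}}/\norm{\nabla g}_1$ the product equals $0.8\,\varepsilon_{\mathrm{d}}$ exactly, and if $\eta=\norm{\nabla g}_1$ then by definition of the max $\norm{\nabla g}_1^2 \ge 0.8\,\varepsilon_{\mathrm{d}}$. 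This is precisely the two-case argument the paper runs; with $\eta\,\norm{\nabla g}_1 = \Omega(\varepsilon_{\mathrm{d}})$ and $\chi(\vr\,|\,\vr(P)) \le \varepsilon_{\mathrm{d}}^{1/5}$ one gets $\sqrt{\varepsilon_{\mathrm{d}}^{1/5}/\varepsilon_{\mathrm{d}}} = \varepsilon_{\mathrm{d}}^{-2/5}$ as required. Note also that the loop condition $\norm{\nabla g}_1 > \varepsilon_{\mathrm{d}}$, which you lean on, is not needed at all once the product is bounded this way.
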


In the next section, we outline an adaptive temperature annealing strategy to minimize the added cost of line search by initializing close to the solution. First, we pause for the next theorem, showing that Alg. \ref{alg:tns-project} enjoys local quadratic convergence if $\eta = O(\norm{\nabla g}_1)$ as in L5.
\begin{restatable}[Per-step Improvement of Algorithm \ref{alg:tns-project}]{theorem}{algthreeimprov}
\label{thm:perstep-improv-alg3}
Given a descent direction $\vd = (\vd_\vu, -P_\vc \vd_\vu)$ such that $\norm{\ve}_1 = \norm{\nabla^2 g_k \vd + \nabla g_k}_1 \leq \eta \norm{\nabla g_k}_1$, let $\alpha \in (0, 1]$ be the step size found via backtracking line search in the $k^{\mathrm{th}}$ step of Alg. \ref{alg:tns-project}. Then, $\nabla g_{k+1} \coloneqq \nabla g(\vu + \alpha \vd_\vu, \vv - \alpha P_\vc \vd_\vu)$ satisfies
\begin{align}
\label{eq:alg3-perstep-improv}
    \norm{\nabla g_{k+1}}_1 \leq (1 - \alpha + \alpha \eta)\norm{\nabla g_k}_1 + \alpha \sqrt{\alpha}~O(\norm{\nabla g_k}_1^2). 
\end{align}
\end{restatable}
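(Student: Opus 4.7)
The plan is to Taylor-expand $\nabla g$ along $\alpha\vd$ about $\vz_k$, use the approximate-Newton identity $\nabla^2 g_k \vd = \ve - \nabla g_k$ to extract the leading $(1{-}\alpha{+}\alpha\eta)$ factor, and then bound the Taylor remainder in $L^1$ by exploiting the exponential form of the EOT plan $P(\vu,\vv;\gamma)$ from (\ref{eq:lagrangian-closed-form}). Concretely, I would start with
\begin{equation*}
\nabla g_{k+1} = \nabla g_k + \alpha \nabla^2 g_k \vd + R(\alpha), \quad R(\alpha) = \int_0^\alpha \bigl[\nabla^2 g(\vz_k + s\vd) - \nabla^2 g_k\bigr] \vd\, ds,
\end{equation*}
substitute $\nabla^2 g_k \vd = \ve - \nabla g_k$ to get $\nabla g_{k+1} = (1-\alpha)\nabla g_k + \alpha\ve + R(\alpha)$, take $L^1$ norms, and apply $\norm{\ve}_1 \leq \eta\norm{\nabla g_k}_1$. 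The leading term $(1{-}\alpha{+}\alpha\eta)\norm{\nabla g_k}_1$ drops out, so the whole theorem reduces to showing $\norm{R(\alpha)}_1 = O(\alpha^{3/2}\norm{\nabla g_k}_1^2)$.

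For the remainder, I would use $P_{ij}(\vz_k + s\vd) = P_{ij}(\vz_k)\exp(s(d_{u,i}+d_{v,j}))$ together with the block structure of $\nabla^2 g$ in (\ref{eq:hessian}). A direct entry-wise computation gives
\begin{equation*}
R(\alpha)_{u_i} = \sum_j P_{ij}(\vz_k)\bigl[e^{\alpha(d_{u,i}+d_{v,j})} - 1 - \alpha(d_{u,i}+d_{v,j})\bigr],
\end{equation*}
and an identical-looking expression for $R(\alpha)_{v_j}$ (the constants cancel because the preceding Sinkhorn step enforces $\vc(P_k)=\vc$, so $(\nabla g_k)_{v_j}=0$). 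The elementary inequality $|e^x - 1 - x| \leq \tfrac{1}{2}x^2 e^{|x|}$, combined with the backtracking safeguard $\alpha\norm{\vd}_\infty \leq c$ (since line search starts at $\alpha = 1$ and $\norm{\vd}_\infty$ is controlled near the solution), then yields
\begin{equation*}
\norm{R(\alpha)}_1 \leq e^c\, \alpha^2 \sum_{i,j} P_{ij}(\vz_k)(d_{u,i}+d_{v,j})^2 = e^c\, \alpha^2\, \vd^\top \nabla^2 g_k \vd,
\end{equation*}
using the identity that expands the quadratic form of the block Hessian.

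To close, I would apply the residual identity once more to the quadratic form: $\vd^\top \nabla^2 g_k \vd = \vd^\top\ve - \vd^\top \nabla g_k \leq (1+\eta)\norm{\vd}_\infty \norm{\nabla g_k}_1$. Since $P_\vc$ is row-stochastic, $\norm{\vd_\vv}_\infty \leq \norm{\vd_\vu}_\infty$ and hence $\norm{\vd}_\infty = \norm{\vd_\vu}_\infty$; marginal smoothing (L5 of Alg.~\ref{alg:mdot}) together with Algorithm \ref{alg:partial-sinkhorn} keep $\vr(P_k)_{\min}$ bounded below, which should give $\norm{\vd_\vu}_\infty = O(\norm{\nabla g_k}_1)$ from the effective conditioning of $F_\vr(1)$ established in the proof of Thm.~\ref{thm:residual}. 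Putting these together, $\norm{R(\alpha)}_1 = O(\alpha^2 \norm{\nabla g_k}_1^2)$; since $\alpha \in (0,1]$ implies $\alpha^2 \leq \alpha^{3/2} = \alpha\sqrt{\alpha}$, the stated bound (\ref{eq:alg3-perstep-improv}) follows.

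The main obstacle I anticipate is the uniform estimate $\norm{\vd_\vu}_\infty = O(\norm{\nabla g_k}_1)$: the spectral-gap factor $(1-\lambda_2)^{-1}$ that appears in Thm.~\ref{thm:convergence-alg1} could a priori enter here, and that is where the discounting strategy of Algorithm \ref{alg:newton-solve} together with the Sinkhorn/smoothing preconditioning of Sec.~\ref{sec:numerical-stability} earn their keep. The rest is a fairly routine Taylor-remainder computation specialized to the exponential form of the EOT dual, notably without invoking any Lipschitz-Hessian assumption — only the explicit structure of $\nabla g$.
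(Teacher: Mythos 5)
Your proposal follows the paper's skeleton up to the key step --- Taylor expansion of $\nabla g$ along $\alpha\vd$, substitution of $\nabla^2 g_k\vd=\ve-\nabla g_k$ to extract the $(1-\alpha+\alpha\eta)$ factor, and reduction to an $L^1$ bound on the remainder --- but your treatment of the remainder is genuinely different. The paper keeps the integral form $\alpha\int_0^1[\nabla^2 g(t)-\nabla^2 g(0)]\vd\,dt$, bounds the integrand by $4\norm{\vd_\vu}_\infty\norm{\mathbf{vec}(\Delta P(t))}_1$, and then controls $\norm{\mathbf{vec}(\Delta P(t))}_1$ via Pinsker's inequality applied to the Bregman divergence $\norm{\mathbf{vec}(P(t))}_1-1$, which the \emph{Armijo condition} bounds by $0.99\,t\alpha\langle-\nabla_\vu g,\vd_\vu\rangle$; this yields an $\alpha^{3/2}$ remainder with no smallness assumption on the step. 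You instead expand the residual entrywise using $P_{ij}(\vz_k+s\vd)=P_{ij}e^{s(d_{u,i}+d_{v,j})}$, apply $|e^x-1-x|\le\tfrac12 x^2e^{|x|}$, and use the (correct) identity $\sum_{ij}P_{ij}(d_{u,i}+d_{v,j})^2=\vd^\top\nabla^2 g_k\vd$ together with the residual identity once more. These computations all check out and give the stronger $\alpha^2$ remainder, which implies the stated $\alpha\sqrt{\alpha}$ bound for $\alpha\le1$.

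The one step that needs repair is the claimed ``backtracking safeguard $\alpha\norm{\vd}_\infty\le c$'': the Armijo backtracking provides no such guarantee, and nothing in Alg.~\ref{alg:tns-project} caps $\alpha\norm{\vd}_\infty$. The bound must instead come from your own later estimate $\norm{\vd_\vu}_\infty\le\norm{\Frr^{-1}}_{1,\infty}(1+\eta/4)\norm{\nabla g_k}_1$ (the same estimate the paper uses, with $\norm{\Frr^{-1}}_{1,\infty}$ absorbed into the big-$O$), so your prefactor becomes $e^{O(\norm{\nabla g_k}_1)}$, which is $O(1)$ only in the asymptotic regime $\norm{\nabla g_k}_1\to0$. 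This is admissible for the stated $O(\norm{\nabla g_k}_1^2)$ conclusion, but it makes your hidden constant depend exponentially on the quantity being driven to zero, whereas the paper's Armijo--Pinsker route keeps the constant at $12\norm{\Frr^{-1}}_{1,\infty}^{3/2}$ regardless of step length --- that is precisely what the authors mean when they say they ``leveraged the Armijo condition'' in place of a Lipschitz-Hessian assumption. Your closing worry is well placed: both proofs absorb $\norm{\Frr^{-1}}_{1,\infty}$-type factors into the $O(\cdot)$, and neither eliminates them.
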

The result differs from typical quadratic convergence results in two important ways: \textbf{(i)} we did not assume a Lipschitz Hessian, but instead leveraged the Armijo condition and the specific form of our descent direction, and \textbf{(ii)} we bounded the $L_1$ norm of the gradient (rather than $L_2$), which is more commonly used in optimal transport algorithms as stopping criteria. Given (\ref{eq:alg3-perstep-improv}), we select $\eta$ in L5 of Alg. \ref{alg:tns-project} with the local quadratic rate in mind, but avoid over-solving the system when $\norm{\nabla g}_1$ is already close to the target $\varepsilon_{\mathrm{d}}$ by taking $\eta$ to be the maximum of $\norm{\nabla g}_1$ and $ 0.8\varepsilon_{\mathrm{d}} / \norm{\nabla g}_1$.

\begin{algorithm}[!t]
\caption{TruncatedNewtonProject$(\vu, \vv, \gamma, C, \vr, \vc, \varepsilon_{\mathrm{d}})$}
\label{alg:tns-project}
\begin{algorithmic}[1]
\State $\vv \gets \log \vc - \mathrm{LSE}_c( \vu \1_n^\top - \gamma C)$ \Comment{Ensure $\vc(P) = \vc$.}
\State $\log \vr(P) \gets \vu + \mathrm{LSE}_r(\1_n \vv^\top - \gamma C)$
\While{$\norm{\nabla g}_1 = \norm{\vr - \vr(P)}_1 > \varepsilon_{\mathrm{d}}$} 
    \State $\vu, \vv, \vr(P) \gets \mathrm{ChiSinkhorn}(\vu, \vv, \gamma, C, \vr, \vc, \vr(P), \varepsilon_{\mathrm{d}}^{2/5})$ \Comment{Choosing $\varepsilon_\chi =  \varepsilon_{\mathrm{d}}^{2/5}$.}
    \State $\eta \gets \norm{\nabla g}_1 \vee 0.8\varepsilon_{\mathrm{d}} / \norm{\nabla g}_1$ \Comment{Expect $\norm{\nabla g_{k+1}}_1 \leq \eta \norm{\nabla g_{k}}_1$.}
    \State $\vd_\vu \gets \mathrm{NewtonSolve}(\nabla_\vu g, \Prc, \vr(P), \eta)$ \Comment{See Algorithm \ref{alg:newton-solve}.}
    \State $\vd_\vv \gets -P_\vc \vd_\vu$ \Comment{As in (\ref{eq:bellman2}).}
    \State $\alpha \gets 1$ \Comment{Initial guess for step size.}
    \State $\log \vc(P) \gets \vv + \alpha \vd_\vv + \mathrm{LSE}_c( (\vu + \alpha \vd_\vu) \1_n^\top - \gamma C)$
    \While{$\norm{\vc(P)}_1 - 1 > 0.99 \alpha \langle -\nabla_\vu g, \vd_\vu \rangle$} \Comment{\textit{Armijo condition}. See Appx. \ref{sec:armijo-and-quadraticrate-proof}.}
    \State $\alpha \gets 0.5 \alpha$ \Comment{Backtracking line search.}
    \State $\log \vc(P) \gets \vv + \alpha \vd_\vv + \mathrm{LSE}_c( (\vu + \alpha \vd_\vu) \1_n^\top - \gamma C)$
    \EndWhile
    \State $\vu \gets \vu + \alpha \vd_\vu, \vv \gets \vv + \alpha \vd_\vv$
    \State $\vv \gets \vv + \log \vc - \log \vc(P)$ \Comment{Ensure $\vc(P) = \vc$.}
    \State $\log \vr(P) \gets \vu + \mathrm{LSE}_r(\1_n \vv^\top - \gamma C)$
\EndWhile 
\State $\vu \gets \vu + \log \vr - \log \vr(P)$ \Comment{Since $\log\vr(P)$ is readily available, take a Sinkhorn step.}
\State Output $\vu, \vv$
\end{algorithmic}
\end{algorithm}

\subsection{Initializing Near the Solution via Adaptive Mirror Descent}
\label{sec:adaptive-md}
\begin{table}[bp]
  \centering
\begin{tabular}{c!{\vrule width 1.25pt}ccccc!{\vrule width 1.0pt}c}
\diagbox{\textbf{Subroutine}}{$q$} & $2^{1/8}$ & $2^{1/4}$ & $2^{1/2}$ & $2^{1}$ & $2^{2}$ & Adaptive \\
\hline\toprule
NewtonSolve    & 1870   & 1725   & 1744   & 2223   & 3281   & 1831 \\
Line Search    & 8      & 8      & 8      & 48     & 226    & 24 \\
ChiSinkhorn    & 8      & 8      & 8      & 8      & 96     & 8 \\
Mirror Descent & 105    & 53     & 27     & 14     & 8      & 23 \\
\hline
Total (operations) & 4623  & 3315  & 2760  & 3178  & 4369  & 2795 \\
\hline
Total (seconds)    & 2.84     & 1.99     & 1.56     & 1.76     & 2.41     & 1.55 \\
\bottomrule
\end{tabular}
    \caption{Breakdown of number of $O(n^2)$ operations per subroutine and total wall-clock time for the upsampled MNIST dataset ($n=4096$) with $L_1$ distance cost. Alg. \ref{alg:mdot} is called with $\gamma_{\mathrm{i}} = 2^5, \gamma_{\mathrm{f}} = 2^{18}$ and $p=1.5$. For the adaptive approach, $q=2$ initially. Results show median over 100 problems.}
  \label{tab:addlabel}%
\end{table}
In this section, we outline a practical strategy for initializing the dual problem (\ref{opt:sinkhorn-dual}) sufficiently near the solution so that \textbf{(i)} the cost of line search is minimized (${\alpha = 1}$ is almost always admissible), and \textbf{(ii)} the last term in (\ref{eq:alg3-perstep-improv}) is negligible. In Table 1, we observe that with a sufficiently slow fixed temperature decay schedule (i.e., setting $q^{(t+1)} \gets q^{(t)}$ in L8 of Alg. \ref{alg:mdot}), the extra cost of line search (as well as ChiSinkhorn) disappears almost entirely (given small enough $\gamma^{(1)}$). However, with too slow temperature decay schedules ($q$ too close to 1), the overhead due to relatively costly LSE reductions (in lines 1, 2, 9 and 16 of Alg. \ref{alg:tns-project}) may slow down the overall algorithm (see $q=2^{1/8}$ in Table \ref{tab:addlabel}). To eliminate the need for tuning $q$, we develop an update rule to adjust the schedule as in L8 of Alg. \ref{alg:mdot}; we seek to minimize the number of mirror descent steps while staying in the superlinear convergence zone for consecutive instances of problem (\ref{opt:sinkhorn-dual}). To this end, we first define the following parameter of interest, which is the ratio of the \textit{actual reduction} in gradient norm to the \textit{predicted reduction} given by (\ref{eq:alg3-perstep-improv}) for $\alpha=1$ (in the ideal case, dropping the last term) at step $k$ of Alg. \ref{alg:tns-project}:
\begin{align}
\label{eq:delta_k-ratio}
    \delta_k \coloneqq \frac{\norm{\nabla g_{k}}_1 - \norm{\nabla g_{k+1}}_1}{(1-\eta_k)\norm{\nabla g_{k}}_1}.
\end{align}
This formula for $\delta_k$ is inspired by trust-region methods, which update the trust-region size based on the ratio of actual to predicted decrease under a model of the objective (rather than the gradient norm as we use here) \citep{Nocedal}. Let $\delta_{\min}^{(t)}$ be the smallest $\delta_k$ among the iterates of Alg. \ref{alg:tns-project} at step $t$ of Alg. \ref{alg:mdot}. In L5 of Alg. \ref{alg:mdot}, we heuristically set
\begin{align*}
 q^{(t+1)} &\gets 2 \wedge (q^{(t)})^2 \quad\quad\textrm{if}~~~ \delta_{\min}^{(t)} > 0.95   \\
 q^{(t+1)} &\gets \sqrt{q^{(t)}} \quad\quad\quad~~~\textrm{if}~~~ \delta_{\min}^{(t)} <
 0.8 \\
 q^{(t+1)} &\gets q^{(t)} \quad\quad\quad\quad~~~\textrm{otherwise.}~~~ 
\end{align*}
\begin{figure*}
\vspace{-30pt}
\centering
\begin{minipage}{0.85\textwidth}
  \centering   \includegraphics[width=1.\linewidth]{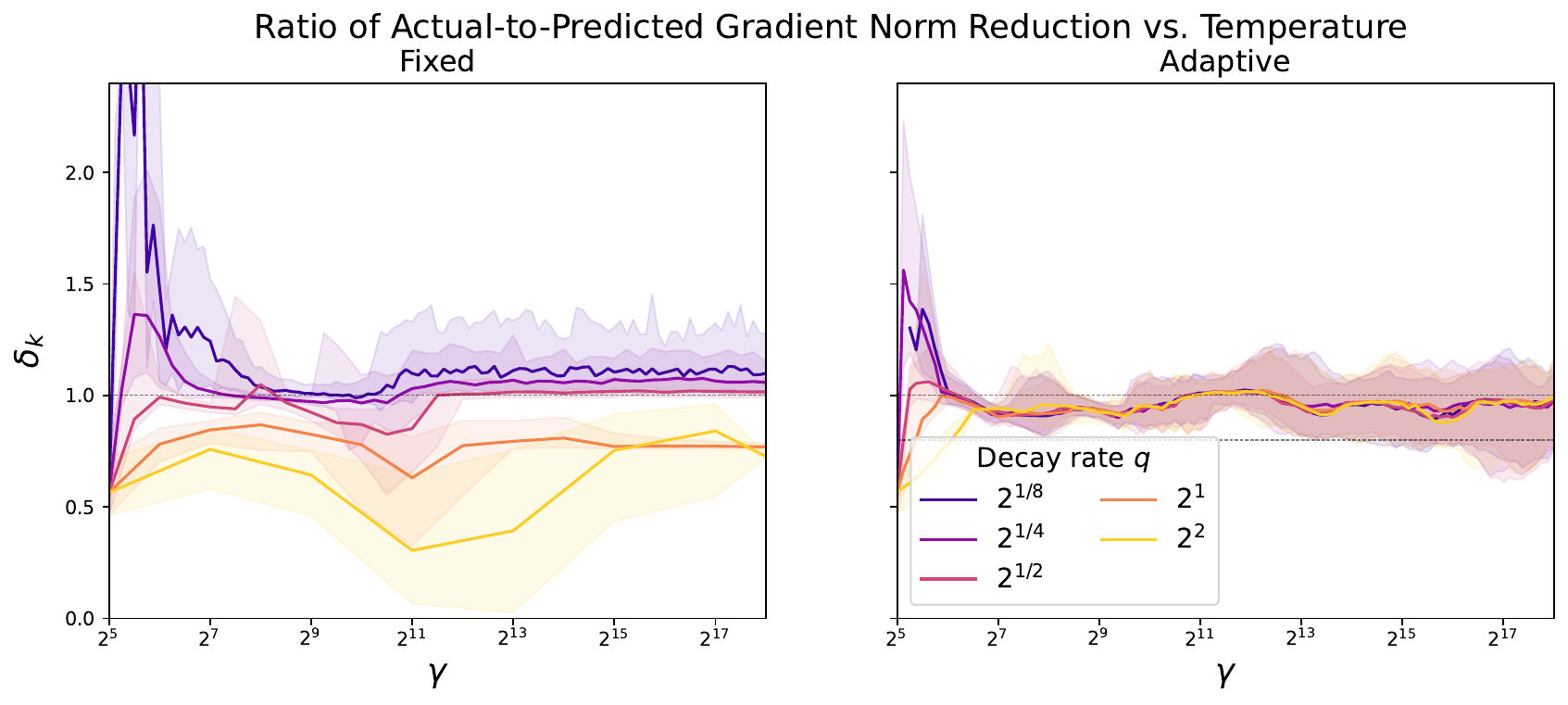}
\end{minipage}
\vspace{-5pt}
\caption{Ratio $\delta_k$ of actual to theoretically predicted reduction in $\norm{\nabla g_k}_1$ per step for fixed \textbf{(left)} and adaptive \textbf{(right)} temperature decay (initialized with $q^{(1)} = q$). Each $\delta_k$ is the median at iteration $t$ of Alg. \ref{alg:mdot}. Shaded areas show 80\% confidence intervals around median over 100 random problems from the upsampled MNIST dataset ($n=4096$) with normalized $L_1$ distance cost ( $\max_{i,j} |C_{i,j}|= 1$).}
\label{fig:delta-k}
\vspace{-9pt}
\end{figure*}In Fig. \ref{fig:delta-k} (left), we show that with a sufficiently slow fixed schedule, $\delta_k$ typically remains near $1$ throughout the execution of Alg. \ref{alg:mdot}. Fig. \ref{fig:delta-k} (right) further confirms that the adaptive update rule arrives at such a schedule regardless of the initial value of $q^{(1)}$, thereby showing its utility in eliminating the need for tuning. Table \ref{tab:addlabel} verifies that the schedules found by the adaptive updating perform similarly to the best fixed schedule across OT problems. Section \ref{sec:experiments} details the experimental setup here, and further adds benchmarking against a suite of alternative algorithms in the literature.

\subsection{Asymmetric Marginal Smoothing for Numerical Stability}
\label{sec:numerical-stability}
\begin{wrapfigure}{R}{0.52\textwidth}
\vspace{-10pt}
\begin{minipage}{0.52\textwidth}
    \centering
    \begin{tabular}{c!{\vrule width 1.25pt}ccc}
    \diagbox{\textbf{Total cost}}{$w_\vr$} & $0.25$ & $0.35$ & $0.45$ \\
    \hline\toprule
    Median (ops.) & 3431   & 2750   & 2795  \\
    90th \%ile (ops.) & 7622   &  3852  &  3869 \\
    \hline
    Median (sec.) &  1.96  & 1.56   & 1.55  \\
    90th \%ile (sec.) & 4.41  & 2.16   &  2.13   \\
    \bottomrule
    \end{tabular}%
    \captionof{table}{Comparison of median and 90th percentile performance for varying smoothing weight $w_\vr$.}
    \label{tab:w_r-ablation}
  \end{minipage}
\vspace{-10pt}
\end{wrapfigure}
Recall from (\ref{eq:matrix-definitions}) the form of the coefficient matrix ${\Frr = D(\vr(P))(I - \rho \Prc)}$  of the $n$-variable PD linear system that we are interested in solving. Since the smallest diagonal entry of $\Frr$ can be almost as small as $(1-\rho) \vr(P)_{\min}$ in the worst case, 
diagonal preconditioning used in Alg. \ref{alg:newton-solve} may cause numerical instabilities when solving the linear system (due to infinitesimal entries in $\vr(P)$ for $\rho \approx 1$). 
To this end we require that, after smoothing, $\tilde{\vr}_{\min}$ is bounded away from zero, and find that running Alg. \ref{alg:partial-sinkhorn} in advance to bound $\chi^2(\tilde{\vr} | \vr(P))$ (i.e., the variance of $\tilde{\vr} / \vr(P) - \1$) is sufficient for stable behavior in practice. Since log-domain Sinkhorn updates to ensure ${\vc(P) = \tilde{\vc}}$ are  numerically stable \citep{feydy2020}, we allocate our ``smoothing budget'' $\varepsilon_{\mathrm{d}}/2$ mostly for the row-marginal $\vr$. Specifically, in L5 of Alg. \ref{alg:mdot}, we set:
\begin{align*}
 \tilde{\vr} \gets (1-w_\vr \varepsilon_{\mathrm{d}}) \vr + (w_\vr \varepsilon_{\mathrm{d}} / n) \1, ~~~~~~~~\tilde{\vc} \gets (1-w_\vc \varepsilon_{\mathrm{d}}) \vc + (w_\vc \varepsilon_{\mathrm{d}} / n) \1,
\end{align*}
where $w_\vr + w_\vc = 1/2$ and $w_\vr > w_\vc$. This is in contrast to the more standard symmetric smoothing $w_\vr = w_\vc$ \citep{dvurechensky2018computational, lin19a}. We repeat the experiments in Table \ref{tab:addlabel}, but this time ablating $w_\vr$ (using the adaptive schedule introduced in Sec. \ref{sec:adaptive-md}); Table \ref{tab:w_r-ablation} confirms empirically that asymmetric smoothing improves both stability and median performance of the overall algorithm.

\begin{figure*}
\vspace{-30pt}
\centering
\begin{minipage}{0.95\textwidth}
  \centering   \includegraphics[width=1.\linewidth]{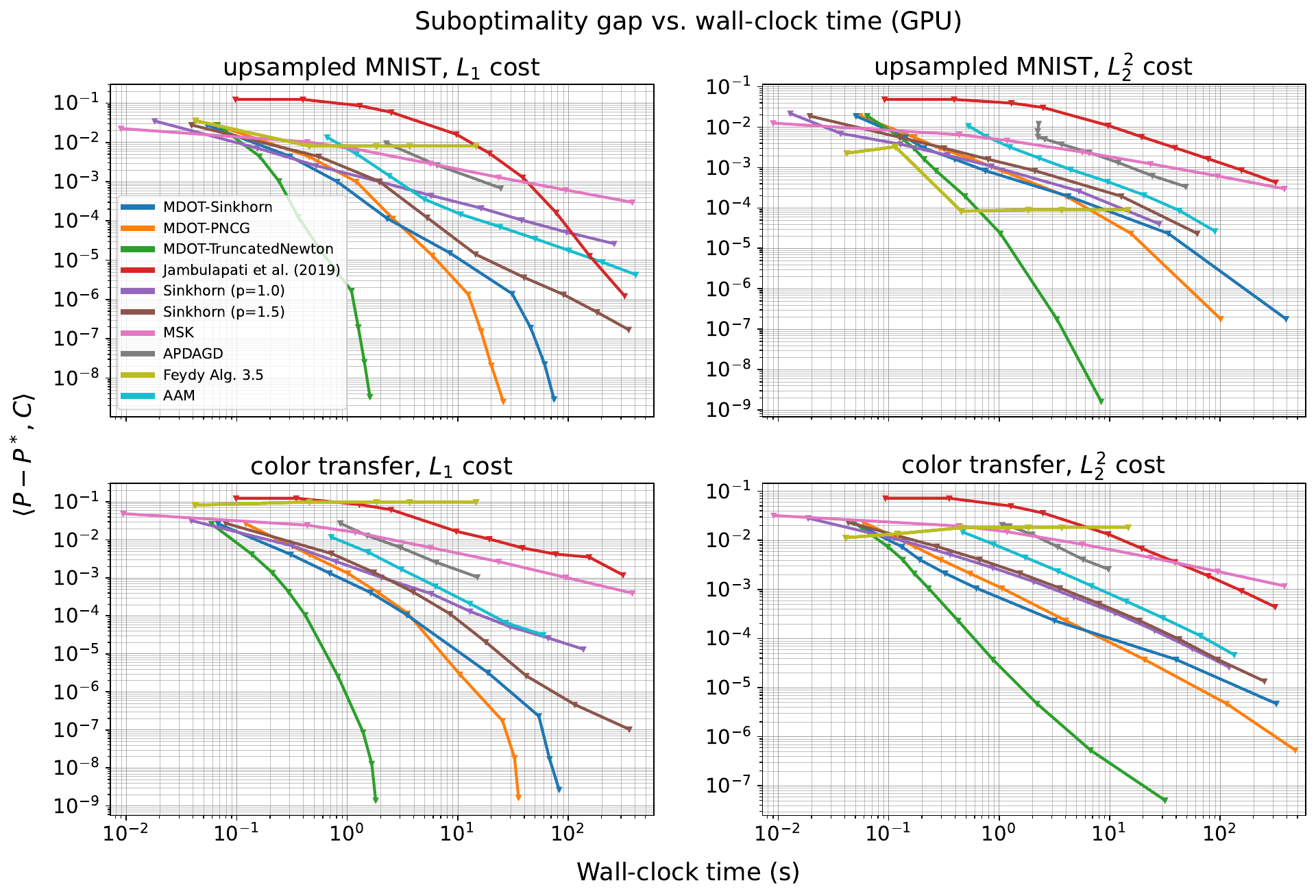}
\end{minipage}
\caption{Error vs. wall-clock time  for various algorithms. Each marker shows the optimality gap and time taken (median across 18 problems) until termination at a given hyperparameter setting, followed by rounding of the output onto $\gU(\vr, \vc)$ via Alg. 2 of \citet{altschuler2017near}. Upsampled MNIST \textbf{(top)} and color transfer \textbf{(bottom)} problem sets ($n=4096$) using $L_1$ \textbf{(left)} and $L_2^2$ \textbf{(right)} distance costs. MDOT--TruncatedNewton outperforms others by orders of magnitude at high precision and exhibits much better practical dependence on error than best known theoretical rates $\widetilde{O}(n^2 \varepsilon^{-1})$.}
\vspace{-10pt}
\label{fig:wallclock-time}
\end{figure*}

\section{Experiments}
\label{sec:experiments}

In this section, we first provide the details of the MNIST experiments in Fig. \ref{fig:delta-k} and Tables \ref{tab:addlabel}-\ref{tab:w_r-ablation}. Then we describe a color transfer problem set used for wall-clock time benchmarking of the combined MDOT-TruncatedNewton algorithm. Our setup and baseline implementations follow \citet{kemertas2025efficientaccurateoptimaltransport}. All experiments were run on an NVIDIA GeForce RTX 2080 Ti GPU with 64-bit precision. Appx. \ref{sec:additional-experiments} provides benchmarking on 10 additional datasets from \citet{schrieber2017dotmark}, showing similar results with confidence intervals and including operation counts alongside wall-clock time.

\vspace{-0.3cm}
\subsection{Experimental Setup}
\vspace{-0.15cm}
\textbf{Upsampled MNIST.  }   Each image is a probability distribution over pixels, with probabilities given by $L_1$-normalized intensity values. The cost matrix $C$ is fixed across OT problems and constructed from pairwise $L_1$ and $L_2^2$ distances between pixel locations in 2D space. Scalar division by the max. entry ensures ${\max_{ ij} |C_{ij}| = 1}$. MNIST images are upsampled to $64 \times 64$ resolution for benchmarking on higher dimensional problems ($n=4096$). In contrast, \citet{tang2024accelerating} benchmarked their CPU-based algorithm on original MNIST images (${n=784}$). \citet{luo23pdasmd} ran their PDASMD algorithm on \textit{downsampled} MNIST images for ${n=100}$ at most; their code also runs on a CPU and includes an inner for loop of $n$ iterations which limits parallelization.

\textbf{Color Transfer.}  We define the color transfer problem with all marginals set to the uniform distribution over $\Delta_n$. The cost matrix $C$ varies across problems and is constructed from pairwise $L_1$ and $L_2^2$ distances between RGB values in 3D space. Scalar division by the max. entry ensures ${\max_{ ij} |C_{ij}| = 1}$. We use the 20 images provided by \citet{kemertas2025efficientaccurateoptimaltransport}, which were generated by prompting DALL-E 2 to produce vibrant, colorful images with intricate details. These are downsampled to $64 \times 64$ resolution for benchmarking on $n=4096$ problems, except in the case of Fig \ref{fig:visualization}, where the scalability of our algorithm is visually demonstrated on the original $1024 \times 1024$ images ($n \approx 10^{6}$) on an individual sample problem.

\vspace{-0.3cm}
\subsection{Wall-clock Time Benchmarking}
\vspace{-0.15cm}
While theoretical analysis and computational complexity provide valuable insights, it is the practical performance, measured in wall-clock time, that often determines the viability and adoption of an algorithm. In this section, we present wall-clock time benchmarking of the proposed algorithm against a broad range of available alternatives. Here, we compare to Alg. 3.5 of \citet{feydy2020}, the Mirror Sinkhorn (MSK) algorithm of \citet{ballu2022mirror}, Sinkhorn iteration with typical and stringent tolerance settings (${p=1}$ and ${p=1.5}$ resp. for MDOT--Sinkhorn called with ${\gamma_{\mathrm{i}} = \gamma_{\mathrm{f}}}$), the APDAGD algorithm of \citet{dvurechensky2018computational}, the Mirror Prox Sherman Optimized algorithm of \citet{jambulapati2019} and the AAM algorithm of \citet{guminov21aam}. We omit comparison to APDAMD \citep{lin19a}, APDRCD \citep{guo2020apdrcd} and PDASMD \citep{luo23pdasmd}, as they exhibit significantly longer convergence times in our high-dimensional, GPU-parallel setting. See Appx. E of \citet{kemertas2025efficientaccurateoptimaltransport} for implementation details. In Appx. \ref{sec:problem-size-experiments}, we also perform experiments with varying problem size $n$ and show that the dependence is no worse than $O(n^2)$ for the problems considered.

In Fig. \ref{fig:wallclock-time}, we observe that on a 2018-era GPU, MDOT--TruncatedNewton typically solves $n=4096$ dimensional problems in 1-5 seconds to 6-decimal precision, demonstrating numerical stability up to 9-decimal precision across a range of realistic OT problems. In the highest precision range, it is more than $10\times$ faster than the best alternative, MDOT--PNCG of \citet{kemertas2025efficientaccurateoptimaltransport}. 
We also implement a scalable $O(n)$ memory footprint version of MDOT-TruncatedNewton, which computes via the PyKeOps package of \citet{charlier2021kernel} the cost matrix $C$ and the matrix $P$ on-the-fly every-time they are used in Alg. \ref{alg:tns-project}. As evidenced by Fig. \ref{fig:visualization}, this implementation can solve very high dimensional OT problems ($n \approx 10^6$) to high precision. It leaves a memory footprint of just $\approx600$ MBs, but takes ${\approx10}$ hours. Regardless, we believe that this is an important step towards high-precision discrete OT in very high dimensions.

\renewcommand{\topfraction}{.8}
\renewcommand{\floatpagefraction}{.8}
\begin{figure*}
\vspace{-35pt}
\centering
\begin{minipage}{0.97\textwidth}
  \centering \includegraphics[width=1.\linewidth]{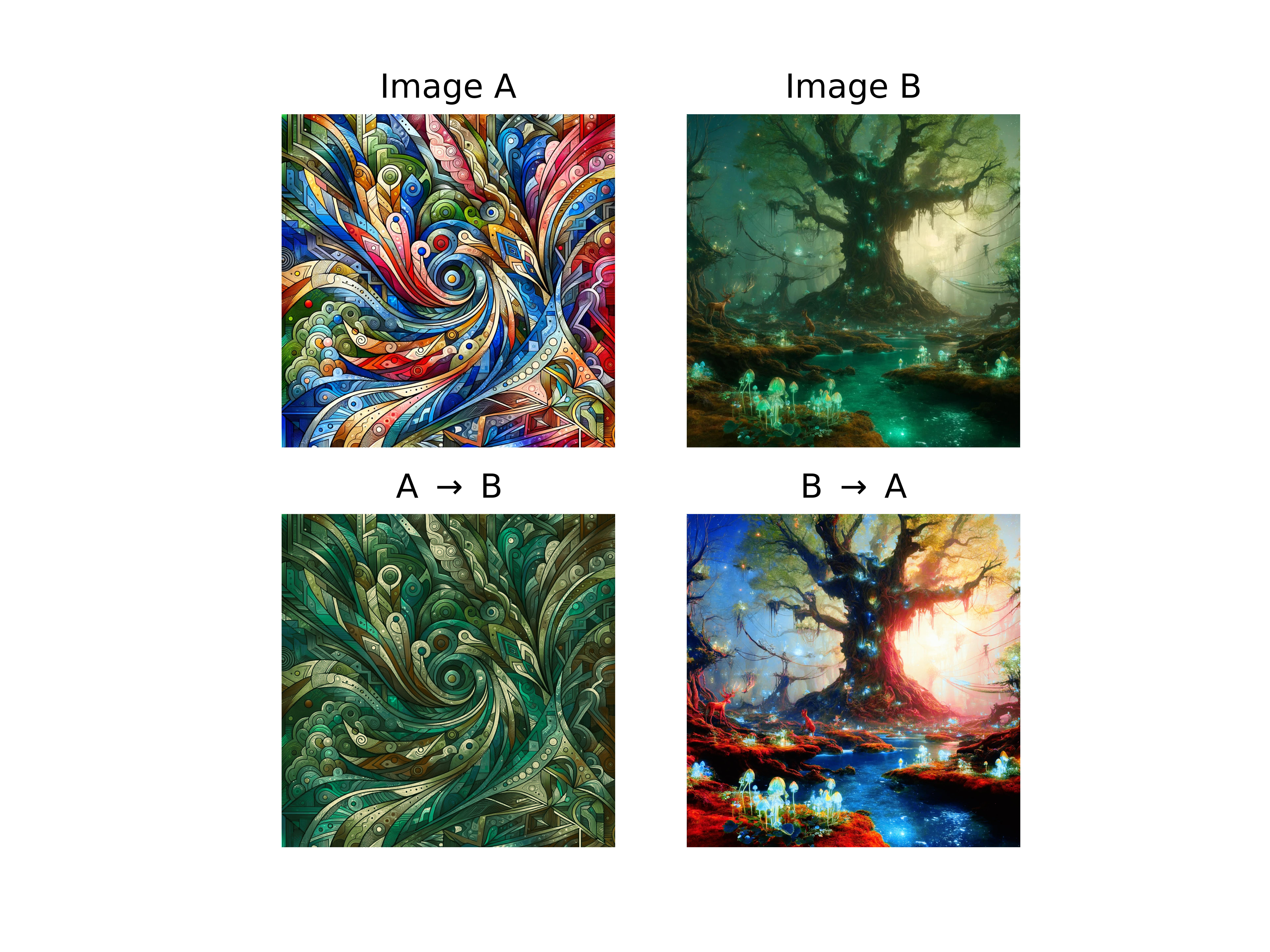}
\end{minipage}
\vspace{-30pt}
\caption{The MDOT--TruncatedNewton algorithm applied to a large-scale color transfer problem on $1024 \times 1024$ images ($n=2^{20}$). For this visualization, the cost matrix is given by the $L_2^2$ distance in RGB color space, normalized so that $\max_{ij} |C_{ij}| = 1$. Final temperature is $1/\gamma_{\mathrm{f}} = 2^{-10}$. Source images \textbf{(top row)} were generated with DALL-E 2. This figure is best viewed digitally.}
\vspace{-12pt}
\label{fig:visualization}
\end{figure*}

\vspace{-0.3cm}
\section{Conclusion}

\vspace{-0.2cm}
In this work, we set out to design a modular, practical algorithm to exploit the superlinear convergence of truncated Newton methods in weakly-regularized EOT. To improve the conditioning of the dual Hessian, rather than amplifying its diagonal entries as in Tikhonov regularization, we dampened off-diagonal entries (discounting) with inspiration from reinforcement learning. Then, Alg. \ref{alg:newton-solve} was presented for approximately solving the modified Newton system. This method of Hessian modification enabled a superlinear local convergence rate in terms of the $L_1$ norm of the gradient for the custom truncated Newton routine (Alg. \ref{alg:tns-project}) that used Alg. \ref{alg:newton-solve}. We additionally introduced precautionary measures to improve the numerical stability of Alg. \ref{alg:tns-project}, which is crucial for reaching high precision. Lastly, Alg. \ref{alg:tns-project} was integrated into a temperature annealing framework, MDOT \citep{kemertas2025efficientaccurateoptimaltransport}, where adaptive temperature updates ensured superlinear convergence is maintained, a hyperparameter ($q$ in Alg. \ref{alg:mdot}) is eliminated and line search overhead is minimized in practice. We implemented the resulting algorithm on a GPU and showed that it outperforms many recent algorithms by orders of magnitude in $n=4096$ dimensions, exhibiting fast empirical rates ranging from $O(n^2 \varepsilon^{-1/6})$ to $O(n^2 \varepsilon^{-1/2})$. Furthermore, as visualized in Fig. \ref{fig:visualization}, the algorithm holds potential to effortlessly scale to much larger problems ($n \approx 10^6$).

One avenue for future research is the development of a variant of Alg. \ref{alg:tns-project} with a global convergence rate, which may yield an explicit rate in terms of the error $\varepsilon$ for the overall algorithm. Another direction could be a stochastic generalization of Alg. \ref{alg:tns-project} that leaves an $O(nm)$ memory footprint, where $m \in [1, n]$ is a user-defined parameter given hardware constraints. This added flexibility may enable a better trade-off between runtime and memory footprint in very high dimensions ($n \gg 1000$). 
\subsubsection*{Acknowledgments}
AMF acknowledges the support of the Natural Sciences and Engineering Research Council of Canada (NSERC) through the Discovery Grant program (2021-03701). MK acknowledges the support of NSERC through the CGS-D scholarship. Resources used in preparing this research were provided, in part, by the Province of Ontario, the Government of Canada through CIFAR, and companies sponsoring the Vector Institute.
\bibliography{iclr2025_conference}
\bibliographystyle{iclr2025_conference}

\newpage
\appendix
\section{Appendix}

\subsection{Proofs of Theoretical Results}

\subsubsection{Additional Notation}
Here, we describe additional notation used in the proofs that follow. Given a column vector ${\vx \in \mathbb{R}^n}$ and an ${n \times n}$ square matrix $A$,  $\norm{\vx}_A^2 = \vx^\top A \vx$. The operator norm of a matrix is denoted ${\norm{A}_{p, q} = \sup_{\vx \in \mathbb{R}^n} \norm{A\vx}_q / \norm{\vx}_p}$. To mean $\norm{A}_{p, p}$, we use the notation $\norm{A}_p$. For an element-wise norm, we write $\norm{\mathbf{vec}(A)}_p$, where $\mathbf{vec}(A)$ is the vectorization of the matrix $A$. A vector formed by diagonal entries of a matrix $P$ is denoted $\mathrm{diag}(P)$. 

\subsubsection{Derivation of the EOT dual (\ref{opt:sinkhorn-dual})}
\label{sec:derivation-of-dual}
In this section, we provide the derivations for (\ref{eq:lagrangian-closed-form}-\ref{opt:sinkhorn-dual}). 
Recall the EOT primal problem given by (\ref{opt:sinkhorn-primal}):
\begin{mini*}[2]
  {P \in \gU(\vr, \vc)}{\langle P, C \rangle - \frac{1}{\gamma}H(P).}{}{}
 \end{mini*}
Observe that we can replace negative Shannon entropy $\langle P, \log P \rangle$ in (\ref{opt:sinkhorn-primal}) with the KL divergence (written without assuming $P \in \Delta_{n \times n}$) to the uniform distribution $U = \frac{1}{n^2}\1_{n \times n}$:
\begin{align*}
    D_{\mathrm{KL}}(P|U) &= \langle P, \log P \rangle - \langle P, \log \frac{1}{n^2} \1 \rangle + 1 - \sum_{ij} P_{ij} \\
    &= \langle P, \log P \rangle - \sum_{ij} P_{ij} + 2 \log n + 1.
\end{align*}
That is, replacing the negative Shannon entropy term with the above in the primal objective only increases the objective by a constant $2\log n$ on the feasible set, since $\sum_{ij} P_{ij} = 1$ for all $P \in \gU(\vr, \vc)$. For convenience, we drop the $2\log n$ term and take the primal problem given by
\begin{mini*}[2]
  {P \in \gU(\vr, \vc)}{\langle P, C \rangle + \frac{1}{\gamma}\left(\langle P, \log P \rangle - \sum_{ij} P_{ij} + 1 \right).}{}{}
 \end{mini*}
To derive the dual problem, we write the (scaled) Lagrangian with dual variables ${\vu, \vv \in \mathbb{R}^n}$ corresponding to equality constraints $\vr(P) = \vr$ and $\vc(P) = \vc$:
\begin{align}
\label{eq:lagrangian2}
\gL(P, \vu, \vv) = \gamma \langle P, C \rangle + \langle P, \log P \rangle - \sum_{ij}P_{ij} + 1 + \langle \vu, \vr - \vr(P) \rangle + \langle \vv, \vc - \vc(P) \rangle.
\end{align}
Taking the first derivative with respect to the $ij^{\mathrm{th}}$ entry $P_{ij}$ of $P$:
\begin{align*}
    \frac{\partial \gL}{\partial P_{ij}} &= \gamma C_{ij} + 1 + \log P_{ij} - 1 - u_i - v_j \\
    &= \gamma C_{ij} + \log P_{ij} - u_i - v_j.
\end{align*}
Setting the partial to $0$:
\begin{align*}
    \frac{\partial \gL}{\partial P_{ij}} = 0 & \iff \log P_{ij} = u_i + v_j - \gamma C_{ij} \\
    & \iff P_{ij} = \exp\{u_i + v_j - \gamma C_{ij}\},
\end{align*}
where the last equation is the same as (\ref{eq:lagrangian-closed-form}). Now, plugging the above $P$ into the Lagrangian (\ref{eq:lagrangian2}):
\begin{align*}
    -g(\vu, \vv) &\coloneqq
\gamma \langle P, C \rangle + \langle P, \vu \1^\top + \1 \vv^\top - \gamma C \rangle - \sum_{ij}P_{ij} + 1 + \langle \vu, \vr - \vr(P) \rangle + \langle \vv, \vc - \vc(P) \rangle \\   
&= \langle \vu, P \1 \rangle + \langle \vv, P^\top \1 \rangle - \sum_{ij}P_{ij} + 1 + \langle \vu, \vr - \vr(P) \rangle + \langle \vv, \vc - \vc(P) \rangle \\
&=  1 -\sum_{ij} P(\vu , \vv)_{ij} + \langle \vu, \vr \rangle + \langle \vv, \vc \rangle.  
\end{align*}
Maximizing $-g(\vu, \vv)$ with respect to the dual variables, we obtain the dual problem (\ref{opt:sinkhorn-dual}):
\begin{mini*}[2]
  {\vu, \vv \in \mathbb{R}^{n}}{g(\vu, \vv) = \sum_{ij} P(\vu , \vv)_{ij} - 1 - \langle \vu, \vr \rangle - \langle \vv, \vc \rangle,}{}{}
\label{opt:sinkhorn-dual-repeat}
 \end{mini*}
where we kept the constant $-1$ as a convention. 

As an aside, note that if one assumes instead an $L_1$ normalized form for $P$ via the softmax function (given that the feasible set $\gU(\vr, \vc)$ is a subset of the simplex):
\begin{align*}
    P(\vu, \vv)_{ij} = \frac{\exp\{u_i + v_j - \gamma C_{ij}\}}{\sum_{k, l}\exp\{u_k + v_l - \gamma C_{ij}\}},
\end{align*}
one obtains an alternative form for the dual objective by plugging the above into the Lagrangian (\ref{eq:lagrangian2}):
\begin{align*}
    \tilde{g}(\vu ,\vv) = \log \sum_{ij} \exp\{u_i + v_j - \gamma C_{ij}\} - \langle \vu, \vr \rangle - \langle \vv
    , \vc \rangle.
\end{align*}
Both the sum-of-exponents and log-sum-of-exponents forms of the dual appear in the literature; see for instance \citet{altschuler2017near, dvurechensky2018computational, lin19a} for the former and \citet{lin2022efficiency} for the latter. Both objectives $g$ and $\tilde{g}$ have the same value whenever $P(\vu, \vv)$ as defined in (\ref{eq:lagrangian-closed-form}) is on the simplex, since $x-1 = \log x = 0$ for $x=1$, which is why the constant $-1$ in (\ref{opt:sinkhorn-dual}) was kept as a convention.

\subsubsection{Proof of Thm. \ref{thm:residual}}\label{sec:proof-of-residual}
We start this section with an intuitive example describing the role of the row-stochastic matrix $\Prc$ that was defined in (\ref{eq:matrix-definitions}). The discussion carries over to $P_{\vc\vr}$ by symmetry. Next, we will list some mathematical properties that will be useful in the proofs that follow.

Suppose $\vr$ and $\vc$ are disjointly supported on two sets of particles $\vx_1, \cdots, \vx_{n_1}$ and $\vy_{n_1 + 1}, \cdots, \vy_{n}$ respectively, and let $n_2 \coloneqq n - n_1$. That is, an $n_1 \times n_2$ transport plan $P \in \gU(\vr, \vc)$ maps distributions over $\vx$ and $\vy$ particles. Recall the definition of $\Prc = P_{\vr} P_{\vc}$ in (\ref{eq:matrix-definitions}) as a product of ${P_{\vr} = \Dr^{-1}P \in \mathbb{R}^{n_1 \times n_2}_{>0}}$ and ${P_{\vc} = \Dc^{-1}P^\top \in \mathbb{R}^{n_2 \times n_1}_{>0}}$. Given some initial distribution $\vq \in \Delta_{n_1}$ over $\vx$-particles, $P_{\vr}^\top \vq \in \Delta_{n_2}$ is a distribution over $\vy$-particles after transportation according to $P$. Indeed, we can show easily that $P_\vr^\top \vq$ is on the simplex:
\begin{align*}
    \1^\top P_\vr^\top \vq = \1^\top P^\top \Dr^{-1}  \vq = \vr^\top \Dr^{-1} \vq = \1^\top \vq = 1.
\end{align*}
Similarly, $P_{\vc}^\top P_{\vr}^\top q = \Prc^\top q \in \Delta_{n_1}$ is again a distribution over $\vx$-particles after transporting back according to $P$. This is the stochastic process represented by $\Prc$. Given any initial distribution $\vq$, the process converges to the row-marginal $\vr$ of $P$ if $\Prc$ is applied repeatedly (as the next lemma shows). The second largest eigenvalue $\lambda_2$ of $\Prc$ determines how quickly this convergence occurs (see Lemma \ref{lem:prc-conv} below).

Now, we list useful technical properties of $\Prc$ in our setting. Analogous claims hold for the stochastic matrix $P_{\vc\vr}$ and the column sum $\vc(P)$ by symmetry, but are omitted for brevity.

\begin{restatable}[Properties of $\Prc$]{lemma}{prcprops}
\label{lem:P_rc-properties}
Given a matrix $P \in \mathbb{R}^{n \times n}$ with strictly positive finite entries define ${\Prc = P_\vr P_\vc = \DrP^{-1} P \DcP^{-1} P^\top}$. The following are true:
\begin{enumerate}
    \item $\Prc$ is an irreducible row-stochastic matrix. Its second largest eigenvalue $\lambda_2$ is strictly less than one.
    \item The stationary distribution of $\Prc$ is $\vr(P)$.
    \item $\Prc$ is reversible in the sense that $\DrP \Prc = \Prc^\top \DrP$, which implies all eigenvalues of $\Prc$ are real.
    \item $\Prc \DrP^{-1} = \DrP^{-1}\Prc^\top$.
    \item Given some $\rho \in [0, 1)$, we have $(I - \rho \Prc)^{-1} \DrP^{-1} = \DrP^{-1} (I - \rho \Prc^\top)^{-1}$.
\end{enumerate}
\end{restatable}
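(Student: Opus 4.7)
The plan is to verify the five claims in sequence, as each is essentially a direct computation that exploits the definitions $P_\vr = \DrP^{-1} P$ and $P_\vc = \DcP^{-1} P^\top$, combined with the fact that all entries of $P_\vr$, $P_\vc$, and hence $\Prc$ are strictly positive. None of these steps is individually hard; the main care lies in chaining them in the right order so that later parts can reuse earlier parts.

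For claim (1), I would first observe that $P_\vc \1 = \DcP^{-1} P^\top \1 = \DcP^{-1} \vc(P) = \1$, and analogously $P_\vr \1 = \1$, so $\Prc \1 = P_\vr P_\vc \1 = \1$ is row-stochastic. Since $P$ has strictly positive finite entries, so do $P_\vr$, $P_\vc$, and their product $\Prc$; hence the associated Markov chain is irreducible (in fact primitive, as every entry is positive), and by Perron--Frobenius the eigenvalue $1$ is simple, which gives $\lambda_2 < 1$. For claim (2), I would verify $\vr(P)^\top \Prc = \vr(P)^\top$ by a chain of cancellations: $\vr(P)^\top \DrP^{-1} P = \1^\top P = \vc(P)^\top$, then $\vc(P)^\top \DcP^{-1} P^\top = \1^\top P^\top = \vr(P)^\top$.

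For claim (3), the key observation is that $\DrP \Prc = \DrP \cdot \DrP^{-1} P \DcP^{-1} P^\top = P \DcP^{-1} P^\top$ is manifestly symmetric, so it equals its own transpose $(P \DcP^{-1} P^\top)^\top = P \DcP^{-1} P^\top$, and that in turn equals $\Prc^\top \DrP$ by expanding $\Prc^\top = \DrP^{-1} P \DcP^{-1} P^\top \cdot (\DrP^{-1})^{-1}$-style symmetrically. The reality of eigenvalues then follows because the similarity transform $\DrP^{1/2}\Prc\DrP^{-1/2} = \DrP^{-1/2}(P \DcP^{-1} P^\top)\DrP^{-1/2}$ is a symmetric matrix with the same spectrum as $\Prc$. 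Claim (4) is then immediate: left- and right-multiply the reversibility identity by $\DrP^{-1}$ to get $\Prc \DrP^{-1} = \DrP^{-1} \Prc^\top$.

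For claim (5), I would invoke the Neumann series: since the spectral radius of $\rho \Prc$ equals $\rho \cdot 1 = \rho < 1$, the series $(I-\rho\Prc)^{-1} = \sum_{k\geq 0} \rho^k \Prc^k$ converges (and analogously for $(I-\rho\Prc^\top)^{-1}$). Then iterating claim (4) gives $\Prc^k \DrP^{-1} = \DrP^{-1} (\Prc^\top)^k$ for every $k \geq 0$, and summing against $\rho^k$ yields the desired identity term by term. I do not anticipate a real obstacle here; the most delicate step is verifying convergence of the Neumann series in claim (5), which hinges on the Perron--Frobenius conclusion from claim (1) that the spectral radius of $\Prc$ is exactly $1$ and is attained only by the simple eigenvalue $1$ itself.
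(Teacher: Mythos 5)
Your proposal is correct and follows essentially the same route as the paper's proof: row-stochasticity plus positivity and Perron--Frobenius for claim (1), the same telescoping cancellations for claim (2), the observation that $\DrP\Prc = P\DcP^{-1}P^\top$ is symmetric for claims (3)--(4), and iterating claim (4) inside the Neumann series for claim (5). Your added remarks (the symmetric similarity transform for realness of the spectrum, and the convergence check for the Neumann series, which in fact already follows from $\|\rho\Prc\|_\infty = \rho < 1$ by row-stochasticity) only make the argument slightly more explicit than the paper's.
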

\begin{proof}
    We prove each claim in order.
    \begin{enumerate}
        \item The vector of ones is a right-eigenvector of $\Prc$ with eigenvalue $1$:
        \begin{align*}
            \Prc \1 &= \DrP^{-1} P \DcP^{-1} P^\top \1 \\
            &= \DrP^{-1} P \DcP^{-1} \vc(P) \\
            &= \DrP^{-1} P \1 \\
            &= \DrP^{-1} \vr(P) \\
            &= \1.
        \end{align*}
        Since all entries of $P$ are strictly positive, the same is true of $\Prc$. The claim follows.
        \item The vector $\vr(P)$ is a left-eigenvector of $\Prc$ with eigenvalue $1$:
        \begin{align*}
            \vr(P)^\top \Prc &= \vr(P)^\top \DrP^{-1} P \DcP^{-1} P^\top \\
            &= \1^\top P \DcP^{-1} P^\top \\
            &= \vc(P)^\top \DcP^{-1} P^\top \\
            &= \1^\top P^\top \\
            &= \vr(P)^\top.
        \end{align*}
        \item The claim holds since
        \begin{align*}
            \DrP \Prc &= \DrP \DrP^{-1} P \DcP^{-1} P^\top \\
            &= P \DcP^{-1} P^\top \\
            &= P \DcP^{-1} P^\top \DrP^{-1} \DrP \\
            &= \Prc^\top \DrP.
        \end{align*}
        \item The claim follows similarly as the previous claim.
        \item First, notice that Claims 3 and 4 apply analogously to all powers of $\Prc^l$ for $l \geq 0$. Indeed, for Claim 4 we have:
        \begin{align*}
            \Prc^l \DrP^{-1} &= \Prc^{l-1} \Prc \DrP^{-1} \\
            &= \Prc^{l-1} \DrP^{-1} \Prc^\top  \tag{by Claim 4} \\
            &= \Prc^{l-2} \Prc \DrP^{-1} \Prc^\top \\
            &= \Prc^{l-2} \DrP^{-1} (\Prc^\top)^2 \tag{again, by Claim 4} \\
            &= \DrP^{-1} (\Prc^\top)^l \tag{by repeated application of Claim 4}.
        \end{align*}
        Then, from the Neumann series ${(I - \rho \Prc)^{-1} \DrP^{-1} = (\sum_{l = 0}^\infty \rho^l \Prc^l) \DrP^{-1}}$, we obtain the claim by applying the above equality to each element of the sum. \qedhere
    \end{enumerate}
\end{proof}

\begin{restatable}[Properties of the coefficient matrix $\Frr$]{lemma}{frprops}
\label{lem:fr-properties}
Let $\rho \in [0, 1)$, $\Prc$ be a row-stochastic matrix with strictly positive entries and $\min_i \vr(P)_i \geq 0$ for all $i \in [n]$. Given an ${n \times n}$ matrix ${\Frr = \DrP(I - \rho \Prc)}$, the following hold true:
\begin{itemize}
    \item $\Frr$ is a symmetric, positive-definite matrix.
    \item $\lambda_{\max}(\Frr) \leq (1+\rho) \vr(P)_{\max}$ 
    \item $\lambda_{\min}(\Frr) \geq (1-\rho) \vr(P)_{\min}$.
\end{itemize}
\end{restatable}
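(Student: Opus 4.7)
The plan is to establish symmetry first using the reversibility property (Claim 3 of Lemma \ref{lem:P_rc-properties}), then use a similarity transform to reduce the problem to the spectral analysis of a symmetric matrix whose eigenvalues are easy to bound.

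\textbf{Step 1 (Symmetry).} I would write $\Frr = \DrP - \rho\, \DrP \Prc$ and observe that $\DrP \Prc$ is symmetric: its transpose is $\Prc^\top \DrP$, and by Claim 3 of Lemma \ref{lem:P_rc-properties} we have $\DrP \Prc = \Prc^\top \DrP$. Since $\DrP$ is diagonal (hence symmetric), $\Frr$ is a difference of two symmetric matrices and therefore symmetric.

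\textbf{Step 2 (Reduction to a symmetric eigenvalue problem).} Since $\DrP$ has strictly positive entries, $\DrP^{1/2}$ is well-defined and invertible. I would factor
\[
\Frr = \DrP^{1/2}\bigl(I - \rho \widetilde{M}\bigr) \DrP^{1/2}, \qquad \widetilde{M} \coloneqq \DrP^{1/2} \Prc\, \DrP^{-1/2}.
\]
Claim 3 of Lemma \ref{lem:P_rc-properties} immediately gives $\widetilde{M}^\top = \widetilde{M}$, and $\widetilde{M}$ is similar to $\Prc$, so the spectra coincide. Because $\Prc$ is an irreducible row-stochastic matrix (Claim 1 of Lemma \ref{lem:P_rc-properties}), Perron--Frobenius yields $\mathrm{spec}(\widetilde{M}) \subset [-1,1]$. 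Consequently the eigenvalues of $I - \rho \widetilde{M}$ lie in $[1-\rho, 1+\rho] \subset (0, 2)$, so $I - \rho \widetilde{M}$ is symmetric positive-definite.

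\textbf{Step 3 (Quadratic form bounds).} For any nonzero $x \in \mathbb{R}^n$, setting $y = \DrP^{1/2} x$ gives
\[
x^\top \Frr x = y^\top (I - \rho \widetilde{M}) y.
\]
Combining the two-sided bound $(1-\rho)\|y\|_2^2 \le y^\top(I - \rho \widetilde{M}) y \le (1+\rho)\|y\|_2^2$ with the sandwich $\vr(P)_{\min}\|x\|_2^2 \le \|y\|_2^2 \le \vr(P)_{\max}\|x\|_2^2$, I would conclude
\[
(1-\rho)\vr(P)_{\min}\|x\|_2^2 \le x^\top \Frr x \le (1+\rho)\vr(P)_{\max}\|x\|_2^2,
\]
which yields both positive-definiteness (since $\rho < 1$ and $\vr(P)_{\min} > 0$) and the claimed eigenvalue bounds via the Rayleigh quotient characterization.

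The main obstacle is essentially bookkeeping: one must be careful that the similarity transform used to symmetrize $\Prc$ is legitimate (which requires $\vr(P)_{\min} > 0$, implicit in the hypothesis that $P$ has strictly positive entries inherited through Lemma \ref{lem:P_rc-properties}) and that the Perron--Frobenius eigenvalue bound applies. An alternative route would invoke Gerschgorin directly on $\Frr$, using that the $i$-th row's off-diagonal $\ell_1$-mass is $\rho \vr(P)_i(1 - \Prc_{ii})$ and its diagonal entry is $\vr(P)_i(1 - \rho \Prc_{ii})$; this also yields $[(1-\rho)\vr(P)_{\min},\, (1+\rho)\vr(P)_{\max}]$ once Step 1 guarantees the spectrum is real, but the similarity-transform route feels cleaner and foreshadows the analysis already used elsewhere in the paper.
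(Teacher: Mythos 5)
Your proof is correct, but it argues the eigenvalue bounds by a different route than the paper. The paper establishes symmetry the same way you do in Step 1 (by noting $\DrP\Prc = P\,\mathbf{D}(\vc(P))^{-1}P^\top$ is symmetric) and then simply applies the Gerschgorin Circle Theorem to $\Frr$ itself: row $i$ has diagonal entry $\vr(P)_i(1-\rho(\Prc)_{ii})$ and off-diagonal absolute row sum $\rho\,\vr(P)_i(1-(\Prc)_{ii})$, so every (real) eigenvalue lies in $\bigcup_i\big[(1-\rho)\vr(P)_i,\ (1+\rho-2\rho(\Prc)_{ii})\vr(P)_i\big]$, which gives both bounds in one line. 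Your Steps 2--3 instead use the congruence $\Frr=\DrP^{1/2}(I-\rho\widetilde{M})\DrP^{1/2}$ together with the spectral radius bound for the stochastic matrix $\Prc$ and a Rayleigh-quotient sandwich. Both are valid; the Gerschgorin route is shorter, while your route is more informative in principle (it localizes the spectrum of $I-\rho\widetilde{M}$ via the actual eigenvalues of $\Prc$, foreshadowing the role of $\lambda_2$ later in the paper, and it mirrors the diagonal-preconditioning analysis of Lemma~\ref{lem:condition-number}). Two small remarks: irreducibility and Perron--Frobenius are more than you need --- $\rho(\Prc)\le\norm{\Prc}_\infty=1$ already places $\mathrm{spec}(\widetilde{M})\subset[-1,1]$ once symmetry gives a real spectrum; and you are right to flag that $\vr(P)_{\min}>0$ is what is actually required for strict positive-definiteness and for $\DrP^{-1/2}$ to exist --- the lemma's stated hypothesis ``$\min_i\vr(P)_i\ge 0$'' as written only yields positive semi-definiteness, and strict positivity is guaranteed in context since $P$ has strictly positive entries.
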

\begin{proof}
    Observe that 
    \begin{align*}
        \Frr &= \DrP(I - \rho \Prc) = \DrP - \rho P P_\vc \\
        &= \DrP - \rho P \DcP^{-1} P^\top,
    \end{align*}
    is a sum of two symmetric matrices, which is also symmetric. The eigenvalue bounds follow from the Gerschgorin Circle Theorem. Since the smallest eigenvalue is positive, $\Frr$ is positive-definite.
\end{proof}

Before we move on to the proof of Thm. \ref{thm:residual}, we state the following Lemma, which follows immediately from Thm. 2.7 of \citet{fill1991eigenvalue} given that $\Prc$ is reversible by Claim 3 of Lemma \ref{lem:P_rc-properties}.
\begin{restatable}[Convergence to the stationary distribution under $\Prc$]{lemma}{prcconv}
\label{lem:prc-conv}
Given some $\vr \in \Delta_n$,
\begin{align}
    \norm{(\Prc^\top)^l \vr - \vr(P)}_1^2 \leq \lambda_2^{2l} \chi^2(\vr | \vr(P)),
\end{align}
where $\lambda_2 < 1$ is the second largest eigenvalue $\lambda_2$ of $\Prc$. 
\end{restatable}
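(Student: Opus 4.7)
My plan is to reduce the $L^1$ bound to the standard $\chi^2$-contraction bound for reversible Markov chains. The key observations are (i) the $L^1$ distance between two probability vectors is controlled by the $\chi^2$-divergence via Cauchy--Schwarz, and (ii) on the orthogonal complement of the top eigenvector, the reversibility of $\Prc$ (Claim 3 of Lemma \ref{lem:P_rc-properties}) lets us turn the transition operator into a symmetric one whose operator norm is exactly $\lambda_2$.

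First I would write $\vpi \coloneqq \vr(P)$ and note that $(\Prc^\top)^l \vr$ lies in $\Delta_n$, since $\Prc^\top$ preserves column sums. For any $\vq \in \Delta_n$ one has $\chi^2(\vq|\vpi) = \sum_i (q_i - \pi_i)^2/\pi_i = \norm{\vq - \vpi}_{D(\vpi)^{-1}}^2$, and Cauchy--Schwarz with weights $\sqrt{\pi_i}$ gives
\begin{align*}
    \norm{\vq - \vpi}_1 \;=\; \sum_i \sqrt{\pi_i}\,\frac{|q_i - \pi_i|}{\sqrt{\pi_i}} \;\leq\; \Big(\sum_i \pi_i\Big)^{1/2}\,\norm{\vq - \vpi}_{D(\vpi)^{-1}} \;=\; \chi(\vq|\vpi).
\end{align*}
Applying this to $\vq = (\Prc^\top)^l\vr$ reduces the lemma to showing $\chi^2((\Prc^\top)^l\vr | \vpi) \leq \lambda_2^{2l}\,\chi^2(\vr|\vpi)$.

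For the $\chi^2$ contraction, I would use that $\vpi$ is $\Prc^\top$-invariant (Claim 2) to write $(\Prc^\top)^l \vr - \vpi = (\Prc^\top)^l (\vr - \vpi)$, and then symmetrize via the change of variables $\vy \coloneqq D(\vpi)^{-1/2}(\vr - \vpi)$ and the operator $S \coloneqq D(\vpi)^{-1/2}\,\Prc^\top\,D(\vpi)^{1/2}$. Reversibility ($D(\vpi)\Prc = \Prc^\top D(\vpi)$) gives $S = S^\top$, so $S$ has a real spectrum; its top eigenvector is $D(\vpi)^{1/2}\1$ with eigenvalue $1$, since $\Prc^\top \vpi = \vpi$. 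Crucially, $\vy$ is orthogonal to this eigenvector because $(D(\vpi)^{1/2}\1)^\top \vy = \1^\top (\vr - \vpi) = 0$. Hence by the spectral theorem applied to $S$ restricted to the orthogonal complement of its top eigenspace, $\norm{S^l \vy}_2 \leq \lambda_2^l \norm{\vy}_2$, and unwinding the change of variables yields $\chi^2((\Prc^\top)^l\vr|\vpi) = \norm{S^l\vy}_2^2 \leq \lambda_2^{2l}\norm{\vy}_2^2 = \lambda_2^{2l}\,\chi^2(\vr|\vpi)$, completing the proof.

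The main obstacle, if there is one, is being careful that $\vr$ in the lemma statement is an arbitrary distribution in $\Delta_n$ (not necessarily the row-marginal $\vr(P)$ of $P$), so one must keep the notation for the stationary distribution $\vpi = \vr(P)$ distinct from the input $\vr$. Once that is fixed, the remaining steps are essentially the textbook derivation of $\chi^2$-mixing for reversible chains, which is exactly the content of Thm. 2.7 of \citet{fill1991eigenvalue} cited in the paper.
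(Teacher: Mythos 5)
Your proof is correct and follows essentially the same route as the paper: the paper simply notes reversibility of $\Prc$ (Claim 3 of Lemma \ref{lem:P_rc-properties}) and invokes Thm.~2.7 of \citet{fill1991eigenvalue}, and what you have written out is exactly the standard spectral argument underlying that citation (Cauchy--Schwarz to pass from $L^1$ to $\chi^2$ --- an inequality already built into the paper's definition of $\chi^2$ --- followed by symmetrization via $S = \mathbf{D}(\vr(P))^{-1/2}\Prc^\top \mathbf{D}(\vr(P))^{1/2}$ and orthogonality of $\vy$ to the top eigenvector). The one step you should justify explicitly is $\norm{S^l\vy}_2 \leq \lambda_2^l \norm{\vy}_2$: for a general reversible chain the contraction factor on the complement of the top eigenspace is $\max(|\lambda_2|,|\lambda_n|)$ rather than the second \emph{largest} eigenvalue $\lambda_2$, so you must rule out eigenvalues below $-\lambda_2$; here this is immediate because $S = BB^\top$ with $B = \DrP^{-1/2} P\, \DcP^{-1/2}$, so $S$ (and hence the similar matrix $\Prc$) is positive semi-definite with all eigenvalues in $[0,1]$.
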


Now, we provide a more formal version of Thm. \ref{thm:residual} presented in the main text followed by a proof.
\begin{theorem}[continues=thm:residual] Assuming ${\vc = \vc(P)}$ and ${\vd_\vv = -P_\vc \vd_\vu}$, define residuals ${\ve_\vu(\rho) {\coloneqq} \Frr \vd_\vu + \nabla_\vu g}$, and ${\ve \coloneqq \nabla^2 g ~\vd + \nabla g}$. Further, let $\lambda_2 \in (0, 1)$ be the 2nd largest eigenvalue of $\Prc$ and ${\zeta \coloneqq \norm{\nabla_\vu g}_1 / \chi(\vr | \vr(P)) \leq 1}$. For any $\beta \in (0, 1)$, suppose:
\begin{gather}
     \max \left(0, 1 -  \frac{(1-\lambda_2)K}{\lambda_2(1-K)} \right) \leq \rho < 1\label{eq:discount-factor-constraint}, \\
    \norm{\ve_\vu(\rho)}_1 \leq \frac{1-\beta}{2} \eta \norm{\nabla g}_1 \label{eq:eta-hat},
\end{gather}
where $K = \zeta \beta \eta < 1$. Then, 
\begin{align}
\label{eq:forcing2}
    \norm{\ve}_1 = \norm{\ve_\vu(1)}_1 \leq \eta \norm{\nabla g}_1.
\end{align}
\end{theorem}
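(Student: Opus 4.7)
My plan is to convert the undiscounted residual $\ve_\vu(1)$ into the discounted one $\ve_\vu(\rho)$ via a single algebraic identity, and then charge the remaining mismatch to the spectral gap of $P_{\vr\vc}$. First, the structural reductions: the hypothesis $\vc(P)=\vc$ forces $\nabla_\vv g=0$, and combined with $\vd_\vv=-P_\vc\vd_\vu$ in the block form (\ref{eq:hessian}), the $\vv$-block of $\ve$ simplifies to $P^\top\vd_\vu-D(\vc(P))P_\vc\vd_\vu=0$, while the $\vu$-block is exactly $F_\vr(1)\vd_\vu+\nabla_\vu g=\ve_\vu(1)$. Hence $\|\ve\|_1=\|\ve_\vu(1)\|_1$ and $\|\nabla g\|_1=\|\nabla_\vu g\|_1$, so the entire argument reduces to the $\vu$-block.

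The one algebraic step I expect to require the most care is the identity linking the two residuals. Subtracting their definitions gives $\ve_\vu(1)-\ve_\vu(\rho)=-(1-\rho)D(\vr(P))P_{\vr\vc}\vd_\vu$; substituting $\vd_\vu=F_\vr(\rho)^{-1}(\ve_\vu(\rho)-\nabla_\vu g)$ and chaining Claims 5 and 4 of Lemma \ref{lem:P_rc-properties} to commute $D(\vr(P))^{-1}$ past $(I-\rho P_{\vr\vc})^{-1}$ and then past $P_{\vr\vc}$ should collapse the prefactor to
\[
(1-\rho)\,D(\vr(P))P_{\vr\vc}F_\vr(\rho)^{-1}=P_{\vr\vc}^\top T_\rho,\qquad T_\rho:=(1-\rho)(I-\rho P_{\vr\vc}^\top)^{-1}.
\]
Since $P_{\vr\vc}^\top$ is column-stochastic and $T_\rho=\sum_{l\geq 0}(1-\rho)\rho^l(P_{\vr\vc}^\top)^l$ is a convex mixture of its powers, the product $P_{\vr\vc}^\top T_\rho$ is itself column-stochastic and hence a contraction in $\|\cdot\|_1$. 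Applying the triangle inequality together with this contraction on the $\ve_\vu(\rho)$ piece yields
\[
\|\ve_\vu(1)\|_1 \leq 2\|\ve_\vu(\rho)\|_1 + \|P_{\vr\vc}^\top T_\rho\,\nabla_\vu g\|_1,
\]
and hypothesis (\ref{eq:eta-hat}) already bounds the first summand by $(1-\beta)\eta\|\nabla g\|_1$.

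For the remaining term I would expand $P_{\vr\vc}^\top T_\rho=(1-\rho)\sum_{l\geq 1}\rho^{l-1}(P_{\vr\vc}^\top)^l$ and apply Lemma \ref{lem:prc-conv} term-by-term; since $\nabla_\vu g=\vr(P)-\vr$ has zero mean, $(P_{\vr\vc}^\top)^l\nabla_\vu g=\vr(P)-(P_{\vr\vc}^\top)^l\vr$, and Lemma \ref{lem:prc-conv} gives $\|(P_{\vr\vc}^\top)^l\nabla_\vu g\|_1\leq\lambda_2^l\chi(\vr|\vr(P))$. Summing the geometric series,
\[
\|P_{\vr\vc}^\top T_\rho\,\nabla_\vu g\|_1 \leq \frac{(1-\rho)\lambda_2}{1-\rho\lambda_2}\,\chi(\vr|\vr(P)).
\]
Substituting $\zeta=\|\nabla g\|_1/\chi(\vr|\vr(P))$ and requiring this bound to not exceed $\beta\eta\|\nabla g\|_1=K\chi(\vr|\vr(P))$ reduces to $(1-\rho)\lambda_2\leq K(1-\rho\lambda_2)$, which rearranges to exactly the threshold $1-\rho\leq(1-\lambda_2)K/(\lambda_2(1-K))$ in (\ref{eq:discount-factor-constraint}). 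Adding the two contributions then closes the argument: $\|\ve_\vu(1)\|_1\leq(1-\beta)\eta\|\nabla g\|_1+\beta\eta\|\nabla g\|_1=\eta\|\nabla g\|_1$.
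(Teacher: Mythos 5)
Your proposal is correct and follows essentially the same route as the paper's proof: your identity $\ve_\vu(1)=(I-P_{\vr\vc}^\top T_\rho)\ve_\vu(\rho)+P_{\vr\vc}^\top T_\rho\,\nabla_\vu g$ is exactly the paper's decomposition via $\widehat{I}(\rho)=F_\vr(1)F_\vr(\rho)^{-1}=I-(1-\rho)P_{\vr\vc}^\top(I-\rho P_{\vr\vc}^\top)^{-1}$, and the operator-norm bound of $2$, the term-by-term application of Lemma \ref{lem:prc-conv}, the geometric-series bound $\frac{(1-\rho)\lambda_2}{1-\rho\lambda_2}\chi(\vr|\vr(P))$, and the rearrangement into the threshold (\ref{eq:discount-factor-constraint}) all match the paper. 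No gaps.
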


\begin{proof}
    To establish necessary conditions for bounding $\norm{\ve}_1$, first, we write it out explicitly:
    \begin{align*}
        \ve &= \nabla^2 g \vd + \nabla g \\
        &= \begin{pmatrix}
        \DrP & P \\
        P^\top & \mathbf{D}(\vc(P))
        \end{pmatrix} 
        \begin{pmatrix} \vd_\vu \\ \vd_\vv \end{pmatrix} + \begin{pmatrix} \nabla_\vu g \\ \nabla_\vv g \end{pmatrix}, \\
        &= \begin{pmatrix}
        \DrP & P \\
        P^\top & \mathbf{D}(\vc(P))
        \end{pmatrix} 
        \begin{pmatrix} \vd_\vu \\ -P_\vc \vd_\vu \end{pmatrix} + \begin{pmatrix} \nabla_\vu g \\ \vzero \end{pmatrix}, \tag{by construction} \\
        &= \begin{pmatrix} \DrP \vd_\vu - P P_\vc \vd_\vu \\ \vzero \end{pmatrix} + \begin{pmatrix} \nabla_\vu g \\ \vzero \end{pmatrix} \tag{since $\DcP P_\vc = P^\top$ by definition.} \\
        &= \begin{pmatrix} \DrP(I - \Prc) \vd_\vu \\ \vzero \end{pmatrix} + \begin{pmatrix} \nabla_\vu g \\ \vzero \end{pmatrix}\numberthis \label{eq:explicit-residual}, \\
        &= \begin{pmatrix} \ve_\vu(1) \\ \vzero \end{pmatrix}
    \end{align*}
    which proves the equality on the LHS of (\ref{eq:forcing2}), where the last equality holds given the definitions ${\ve_\vu(\rho) {\coloneqq} \Frr \vd_\vu + \nabla_\vu g}$ and $\Frr = \DrP (I - \rho \Prc)$. Hence, bounding $\norm{\ve_\vu(1)}_1$ suffices.
    Now, given the definition of $\ve_\vu(\rho)$ and the invertibility of $\Frr$ for $\rho < 1$ by Lemma \ref{lem:fr-properties}, we have:
    \begin{align*}
        \vd_\vu = \Frr^{-1} (\ve_\vu(\rho) - \nabla_\vu g).
    \end{align*}
    Plugging into the top half of (\ref{eq:explicit-residual}), we observe that:
    \begin{align*}
        \ve_\vu(1) &= F_\vr(1) \Frr^{-1}(\ve_\vu(\rho) - \nabla_\vu g) + \nabla_\vu g \\
        &= \widehat{I}(\rho)\ve_\vu(\rho) + (I - \widehat{I}(\rho)) \nabla_\vu g,
    \end{align*}
    where we defined $\widehat{I}(\rho) \coloneqq F_\vr(1) \Frr^{-1}$.
    Then, we have
    \begin{align}
    \label{eq:res-bound1}
        \norm{\ve_\vu(1)}_1 \leq \norm{\widehat{I}(\rho)}_1 \norm{\veur}_1 + \norm{(I - \widehat{I}(\rho)) \nabla_\vu g}_1.
    \end{align}
    First, we prove that the operator norm $\norm{\widehat{I}(\rho)}_1 \leq 2$.
    \begin{align*}
        \widehat{I}(\rho) &= \DrP (I - \Prc)(I - \rho \Prc)^{-1} \DrP^{-1} \\
        &= I - (1-\rho) \DrP \Prc (I - \rho \Prc)^{-1} \DrP^{-1} \\
        &= I - (1-\rho) \Prc^\top (I - \rho \Prc^\top)^{-1}, \numberthis \label{eq:Ihat-I}
    \end{align*}
    where the last equality follows from claims 3 and 5 of Lemma \ref{lem:P_rc-properties}. Recalling that $\norm{A}_1$ is the maximum absolute column sum of matrix $A$:
    \begin{align*}
        \norm{\widehat{I}(\rho)}_1 &\leq 1 + (1-\rho)\norm{\Prc^\top}_1 \norm{(I - \rho \Prc^\top)^{-1}}_1 \\ 
        &= 1 + (1-\rho) \norm{(I - \rho \Prc^\top)^{-1}}_1 \\
        &= 1 + (1-\rho) \norm{\sum_{l=0}^\infty \rho^l (\Prc^\top)^l}_1 \\
        &\leq 1 + (1-\rho) \sum_{l=0}^\infty \rho^l \norm{(\Prc^\top)^l}_1 \\
        &= 2 \tag{since $\Prc^l$ is a stochastic matrix for all $l \geq 0$}.
    \end{align*}
    Hence, (\ref{eq:res-bound1}) simplifies to
    \begin{align}
    \label{eq:res-bound2}
        \norm{\ve_\vu(1)}_1 \leq 2 \norm{\veur}_1 + \norm{(I - \widehat{I}(\rho)) \nabla_\vu g}_1.
    \end{align}
    Next, we turn to the second term on the RHS:
    \begin{align*}
        (I - \widehat{I}(\rho)) \nabla_\vu g &= (1-\rho) \Prc^\top (I - \rho \Prc^\top)^{-1} \nabla_\vu g \tag{From (\ref{eq:Ihat-I})} \\
        &= (1-\rho) \sum_{l=0}^\infty \rho^l (\Prc^\top)^{l+1} (\vr(P) - \vr)  \\
        &= (1-\rho) \sum_{l=0}^\infty \rho^l \bigg(\vr(P) - (\Prc^\top)^{l+1} \vr \bigg),
    \end{align*}
    where the last equality is due to the fact that $\vr(P)$ is the stationary distribution of $\Prc$ by Claim 2 of Lemma \ref{lem:P_rc-properties}. Then,
    \begin{align*}
        \norm{(I - \widehat{I}(\rho)) \nabla_\vu g}_1 &\leq (1-\rho) \norm{\sum_{l=0}^\infty \rho^l \bigg(\vr(P) - (\Prc^\top)^{l+1} \vr \bigg)}_1 \\
        &\leq (1-\rho) \sum_{l=0}^\infty \rho^l \norm{\bigg(\vr(P) - (\Prc^\top)^{l+1} \vr \bigg)}_1 \\
        &\leq (1-\rho) \sum_{l=0}^\infty \rho^l \lambda_2^{l+1} \chi(\vr|\vr(P)) \tag{By Lemma \ref{lem:prc-conv}} \\
        &= \frac{(1-\rho)\lambda_2}{1-\rho \lambda_2} \chi(\vr|\vr(P))
    \end{align*}
    Plugging this bound back into (\ref{eq:res-bound2}) and continuing with the main conditions given in the theorem:
    \begin{align*}
        \norm{\ve_\vu(1)}_1 &\leq 2 \norm{\veur}_1 + \frac{(1-\rho)\lambda_2}{1-\rho \lambda_2} \chi(\vr|\vr(P)) \\
        &\leq (1-\beta)\eta \norm{\nabla g}_1 + \frac{(1-\rho)\lambda_2}{1-\rho \lambda_2} \chi(\vr|\vr(P)) \tag{Since (\ref{eq:eta-hat}) holds by construction} \\
        &\leq (1-\beta)\eta \norm{\nabla g}_1 + \beta \eta \norm{\nabla g}_1 \tag{Since (\ref{eq:discount-factor-constraint}) holds by construction} \\
        &= \eta \norm{\nabla g}_1,
    \end{align*}
    which concludes the proof. Above, the parameter ${\beta \in (0, 1)}$ controls a trade-off between how precisely the discounted system is solved and how aggressively the original system is discounted. In Algorithm \ref{alg:newton-solve} of the main text, $\beta$ was fixed at $1/2$ for simplicity. Also, for intuition on the effect of $\lambda_2$, observe that the second term vanishes as $\lambda_2 \rightarrow 0$ (if $\Prc$ mixes quickly) so that the Hessian can be discounted more aggressively with a smaller $\rho$. As we see next, this improves our guarantees on the condition number of the linear system.
\end{proof}

\subsubsection{Proof of Thm. \ref{thm:convergence-alg1}}
\label{sec:proof-of-thm3.2}

\begin{restatable}[Spectrum after preconditioning]{lemma}{condo}
\label{lem:condition-number}
Let ${\widehat{F}_\vr(\rho) \coloneqq M^{-1/2} \Frr} M^{-1/2}$ be the diagonally preconditioned coefficient matrix, where ${M = \mathbf{D}(\mathrm{diag}(\Frr))}$. Further, let $\bm{\mu} = \mathrm{diag}(\Prc)$ be the diagonal entries of the stochastic matrix $\Prc \in \mathbb{R}^{n \times n}_{> 0}$. Then, eigenvalues $\lambda_i(\rho)$ of $\widehat{F}_\vr(\rho)$ satisfy
\begin{gather}
    1 - \frac{\rho(1-\mu_{\min})}{1-\rho \mu_{\min}} \leq \lambda_i(\rho) \leq 1 + \frac{\rho(1-\mu_{\min})}{1-\rho \mu_{\min}}, ~\forall i \in [n] \label{eq:preconditioned-eigenvalues}  \\
    \kappa(\rho) = \frac{\lambda_{\max}(\rho)}{\lambda_{\min}(\rho)} \leq 2\left(\mu_{\min} + \frac{1-\mu_{\min}}{1-\rho}\right) \leq \frac{2}{1-\rho}
\end{gather}
\end{restatable}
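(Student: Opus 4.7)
The plan is to reduce the eigenvalue problem for $\widehat{F}_\vr(\rho)$ to a Gerschgorin-disk calculation on a simpler matrix, then perform a short algebraic simplification for the condition-number bound.

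First, I would observe that $\widehat{F}_\vr(\rho) = M^{-1/2}\Frr M^{-1/2}$ is similar to $M^{-1}\Frr$, so the two matrices share the same spectrum; working with $M^{-1}\Frr$ avoids the square roots and is more convenient for entry-wise analysis. Using the identity $\DrP \Prc = P\DcP^{-1}P^\top$ (which follows from the definitions in (\ref{eq:matrix-definitions})), one can write $\Frr = \DrP - \rho \DrP \Prc$, so the entries of $\Frr$ are
\begin{align*}
\Frr_{ii} = r_i(P)(1-\rho\mu_i), \qquad \Frr_{ij} = -\rho\, r_i(P)\, \Prc_{ij} \quad (i\neq j).
\end{align*}
Dividing row $i$ by $M_{ii} = r_i(P)(1-\rho\mu_i)$, the diagonal entries of $M^{-1}\Frr$ become $1$ and the absolute row-sum of the off-diagonal entries equals $\rho \sum_{j\neq i}\Prc_{ij}/(1-\rho\mu_i) = \rho(1-\mu_i)/(1-\rho\mu_i)$, since $\Prc$ is row-stochastic.

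Next, I would apply Gerschgorin's Circle Theorem to $M^{-1}\Frr$: each eigenvalue lies in the disk centered at $1$ with radius $\rho(1-\mu_i)/(1-\rho\mu_i)$. A one-line monotonicity check (differentiating $(1-\mu)/(1-\rho\mu)$ in $\mu$ gives a factor proportional to $\rho-1<0$) shows this radius is maximized at $\mu=\mu_{\min}$, yielding (\ref{eq:preconditioned-eigenvalues}).

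Finally, for the condition number bound, let $x \coloneqq \rho(1-\mu_{\min})/(1-\rho\mu_{\min})$ and compute
\begin{align*}
1-x = \frac{1-\rho}{1-\rho\mu_{\min}}, \qquad 1+x = \frac{1+\rho-2\rho\mu_{\min}}{1-\rho\mu_{\min}},
\end{align*}
so $\kappa(\rho)\le (1+x)/(1-x) = (1+\rho-2\rho\mu_{\min})/(1-\rho)$. The inequality $1+\rho \le 2$ (since $\rho<1$) then gives $\kappa(\rho) \le 2(1-\rho\mu_{\min})/(1-\rho)$, which rearranges to $2(\mu_{\min} + (1-\mu_{\min})/(1-\rho))$, and the final bound $2/(1-\rho)$ follows trivially from $\mu_{\min}\ge 0$.

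I do not anticipate a significant obstacle: the only mildly delicate step is recognizing that the natural object to Gerschgorin on is $M^{-1}\Frr$ rather than the symmetric preconditioned matrix itself (so that the off-diagonal row sums simplify via row-stochasticity of $\Prc$). Everything else is bookkeeping.
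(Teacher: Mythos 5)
Your proposal is correct and follows essentially the same route as the paper: pass to the diagonally similar matrix $M^{-1}\Frr$, apply Gerschgorin with disks centered at $1$ of radius $\rho(1-\mu_i)/(1-\rho\mu_i)$ (maximized at $\mu_{\min}$), and then simplify. The only cosmetic difference is that you bound $\kappa$ by $(1+x)/(1-x)$ whereas the paper uses $\lambda_{\max}\le 2$ and $\kappa\le 2/\lambda_{\min}$; both relax to the identical final expression.
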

\begin{proof}
    The proof of (\ref{eq:preconditioned-eigenvalues}) follows straightforwardly from the Gerschgorin Circle Theorem. Due to diagonal similarity, the spectrum of $\widehat{F}_\vr(\rho)$ coincides with that of $\widetilde{F}_\vr(\rho) = M^{-1} \Frr$.
    Clearly, diagonal entries of $\widetilde{F}_\vr(\rho)$ all equal to 1, so that all Gerschgorin disks are centered around unity. Then, all eigenvalues must be inside the biggest disk, which contains all of the smaller disks. First, consider row $i$ of $\Frr = \DrP(I - \rho \Prc)$:
    \begin{align*}
        \sum_{j \neq i} |\widetilde{F}_\vr(\rho)_{ij}| = \vr(P)_i \rho (1-\mu_i).
    \end{align*}
    Since ${M_{ii} = \Frr_{ii} = \vr(P)_i(1-\rho \mu_i)}$, we then have: 
        \begin{align*}
        \sum_{j \neq i} |\widetilde{F}_\vr(\rho)_{ij}| = \frac{\rho (1-\mu_i)}{(1-\rho \mu_i)},
    \end{align*}
    where the biggest Gerschgorin disk corresponds to $\mu_{\min}$, so that (\ref{eq:preconditioned-eigenvalues}) holds for all $i \in [n]$. Then,
    \begin{align*}
        \kappa(\rho) &\leq \frac{2}{\lambda_{\min}(\rho)} \leq \frac{2}{1 - \frac{\rho (1-\mu_{\min})}{(1-\rho \mu_{\min})}} \\
        &= 2\left(\mu_{\min} + \frac{1-\mu_{\min}}{1-\rho}\right) \\
        &= O\left(\frac{1-\mu_{\min}}{1-\rho}\right) ~~~\mathrm{as}~ \rho \rightarrow 1. \qedhere
    \end{align*}
\end{proof}

\begin{restatable}[Equivalence of norms]{lemma}{normconversion}
\label{lem:norm-conversion}
Suppose $\vd_\vu^*$ satisfies $\Frr\vd_\vu^* = -\nabla_\vu g$. Let $\widehat{F}_\vr(\rho) = M^{-1/2} \Frr M^{-1/2} $ and ${M=\mathbf{D}(\mathrm{diag}(\Frr))}$ as in Lemma \ref{lem:condition-number}. Define the reparametrization $\widehat{\vd}_\vu = M^{1/2}\vd_\vu$ given some $\vd_\vu \in \mathbb{R}^n$, and the residual $\ve_\vu = \Frr \vd_\vu + \nabla_\vu g$. We have,
\begin{align}
    \sqrt{(1-\rho)\vr(P)_{\min}} \leq \frac{\norm{\ve_\vu}_1}{\norm{\widehat{\vd}_\vu - \widehat{\vd}_\vu^*}_{\widehat{F}_\vr(\rho)}} \leq \sqrt{(1+\rho)n}.
\end{align}
\end{restatable}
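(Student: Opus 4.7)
The plan is to reduce the claim to a standard norm-equivalence statement for the residual $\ve_\vu$. The key observation is that since $\Frr \vd_\vu^* = -\nabla_\vu g$ by definition of $\vd_\vu^*$ and $\ve_\vu = \Frr \vd_\vu + \nabla_\vu g$ by definition of the residual, we get $\ve_\vu = \Frr(\vd_\vu - \vd_\vu^*)$. Invertibility of $\Frr$ for $\rho < 1$ is guaranteed by Lemma \ref{lem:fr-properties}, so $\vd_\vu - \vd_\vu^* = \Frr^{-1} \ve_\vu$. Then, using $\Frr = M^{1/2} \widehat{F}_\vr(\rho) M^{1/2}$ together with $\widehat{\vd}_\vu - \widehat{\vd}_\vu^* = M^{1/2}(\vd_\vu - \vd_\vu^*) = M^{1/2} \Frr^{-1} \ve_\vu$, a routine expansion of the $\widehat{F}_\vr(\rho)$-induced norm collapses the cross terms and yields the identity
\[
\norm{\widehat{\vd}_\vu - \widehat{\vd}_\vu^*}_{\widehat{F}_\vr(\rho)}^2 \;=\; \ve_\vu^\top \Frr^{-1} \ve_\vu \;=\; \norm{\ve_\vu}_{\Frr^{-1}}^2.
\]

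Next I would bound the ratio $\norm{\ve_\vu}_1 / \norm{\ve_\vu}_{\Frr^{-1}}$ via two elementary ingredients applied in sequence: the vector-norm equivalence $1 \leq \norm{\vx}_1 / \norm{\vx}_2 \leq \sqrt{n}$ for any nonzero $\vx \in \mathbb{R}^n$, and the Rayleigh-quotient bounds $\sqrt{\lambda_{\min}(\Frr)} \leq \norm{\vx}_2 / \norm{\vx}_{\Frr^{-1}} \leq \sqrt{\lambda_{\max}(\Frr)}$ valid for any symmetric positive-definite $\Frr$. Multiplying the trivial $L_1/L_2$ lower bound by the Rayleigh lower bound, and the $L_1/L_2$ upper bound $\sqrt{n}$ by the Rayleigh upper bound, gives
\[
\sqrt{\lambda_{\min}(\Frr)} \;\leq\; \frac{\norm{\ve_\vu}_1}{\norm{\ve_\vu}_{\Frr^{-1}}} \;\leq\; \sqrt{n\,\lambda_{\max}(\Frr)}.
\]

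Finally, I would plug in the spectral bounds from Lemma \ref{lem:fr-properties}: $\lambda_{\min}(\Frr) \geq (1-\rho)\vr(P)_{\min}$ delivers the claimed lower bound at once, while $\lambda_{\max}(\Frr) \leq (1+\rho)\vr(P)_{\max}$ combined with $\vr(P)_{\max} \leq 1$ yields the upper bound $\sqrt{(1+\rho)n}$. There is no substantive obstacle in this proof; the only minor bookkeeping item is the implicit reliance on $\vr(P)_{\max}\leq 1$. This is automatic in the context in which Alg.~\ref{alg:tns-project} invokes Alg.~\ref{alg:newton-solve}, since $\vc(P) = \vc \in \Delta_n$ is maintained at that point and forces $\sum_i \vr(P)_i = \sum_i \vc(P)_i = 1$.
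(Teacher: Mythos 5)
Your proposal is correct and follows essentially the same route as the paper: both reduce the denominator to the quadratic form $\ve_\vu^\top \Frr^{-1}\ve_\vu$ via $\vd_\vu-\vd_\vu^*=\Frr^{-1}\ve_\vu$, then combine the $L_1$/$L_2$ norm equivalence with the Gerschgorin eigenvalue bounds of Lemma \ref{lem:fr-properties} and the fact that $\vr(P)_{\max}\le 1$. Your explicit note on the implicit use of $\vr(P)_{\max}\le 1$ matches a step the paper performs silently.
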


\begin{proof}
For this proof, we drop $\rho$ from $\Frr$ and $\widehat{F}_\vr(\rho)$ for convenience. First, observe that ${\vd_\vu - \vd_\vu^* = F_\vr^{-1}\ve_\vu}$. Then,
\begin{align*}
    \norm{\widehat{\vd}_\vu - \widehat{\vd}_\vu^*}_{\widehat{F}_\vr}^2 &= \norm{M^{1/2} F_\vr^{-1} \ve_\vu}_{\widehat{F}_\vr}^2 \\
    &= \ve_\vu^\top F_\vr^{-1} M^{1/2} \widehat{F}_\vr M^{1/2} F_\vr^{-1} \ve_\vu \\
    &= \ve_\vu^\top F_\vr^{-1} F_\vr F_\vr^{-1} \ve_\vu \\
    & =  \ve_\vu^\top F_\vr^{-1} \ve_\vu \\
    &\geq \lambda_{\min}(F_\vr^{-1}) \norm{\ve_\vu}_2^2 \\
    &= \frac{\norm{\ve_\vu}_2^2}{\lambda_{\max}(F_\vr)} \\
    & \geq \frac{\norm{\ve_\vu}_2^2}{(1+\rho)\vr(P)_{\max}} \tag{by Lemma \ref{lem:fr-properties}} \\
    & \geq \frac{\norm{\ve_\vu}_2^2}{(1+\rho)} \\
    & \geq \frac{\norm{\ve_\vu}_1^2}{n(1+\rho)} \tag{since $\norm{\vx}_1 \leq \sqrt{n}\norm{\vx}_2, \forall \vx \in \mathbb{R}^n$.},
\end{align*}
which is equivalent to the upper bound of the desired result. The lower bound follows similarly in the reverse direction, using $\lambda_{\min}(F_\vr)$ and $\norm{\vx}_2 \leq \norm{\vx}_1$.
\end{proof}

\begin{restatable}[Convergence of CG]{lemma}{cgconv}
\label{lem:conv-cg}
Suppose diagonally-preconditioned conjugate gradient method is initialized with ${\vd_\vu^{(0)} = \vzero}$ for the linear system ${\Frr \vd_\vu^* = -\nabla_\vu g}$, where $\rho \in [0, 1)$. Let $\mu$ be the largest diagonal entry of $\Prc$. Assuming that $\vr(P)_{\min} \geq \varepsilon_{\mathrm{d}}/(4n)$ for all ${i \in [n]}$ given some constant ${\varepsilon_{\mathrm{d}} > 0}$, the residual satisfies ${\norm{\Frr\vd_\vu + \nabla_\vu g}_1 = \norm{\ve_\vu(\rho)}_1 \leq \hat{\eta} \norm{\nabla_\vu g}_1}$ after at most $\mathrm{ceil}(k)$ steps, where 
\begin{align}
    k \leq \left(\frac{1-\mu \rho}{1-\rho}\right)^{1/2} \log \left(6n (1-\rho)^{-1/2}   \hat{\eta}^{-1} \varepsilon_{\mathrm{d}}^{-1/2} \right) = \widetilde{O}\left(\sqrt{\frac{1-\mu}{1-\rho}}\right) ~\mathrm{as}~ \rho \rightarrow 1.
\end{align}

\end{restatable}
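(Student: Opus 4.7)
The plan is to combine the classical convergence rate of diagonally-preconditioned CG, expressed in the $\widehat{F}_\vr(\rho)$-norm of the iterate error, with the norm-conversion bounds of Lemma \ref{lem:norm-conversion} to translate that iterate-error bound into the desired $L_1$-residual bound, and finally plug in the condition-number estimate from Lemma \ref{lem:condition-number}.

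First, diagonally-preconditioned CG applied to $\Frr \vd_\vu^* = -\nabla_\vu g$ is equivalent to standard CG applied to $\widehat{F}_\vr(\rho)\,\widehat{\vd}^* = -M^{-1/2}\nabla_\vu g$, so that the textbook CG bound (e.g., \citet{Shewchuk94CG}) yields
\[
\norm{\widehat{\vd}^{(k)} - \widehat{\vd}^*}_{\widehat{F}_\vr(\rho)} \leq 2 \left(\tfrac{\sqrt{\kappa(\rho)} - 1}{\sqrt{\kappa(\rho)} + 1}\right)^k \norm{\widehat{\vd}^{(0)} - \widehat{\vd}^*}_{\widehat{F}_\vr(\rho)}.
\]
Because $\vd_\vu^{(0)} = \vzero$, the initial residual is exactly $\ve_\vu^{(0)} = \nabla_\vu g$, so the lower half of Lemma \ref{lem:norm-conversion} applied at $k=0$ gives
\[
\norm{\widehat{\vd}^*}_{\widehat{F}_\vr(\rho)} \leq \frac{\norm{\nabla_\vu g}_1}{\sqrt{(1-\rho)\,\vr(P)_{\min}}},
\]
while the upper half applied at iterate $k$ gives $\norm{\ve_\vu^{(k)}}_1 \leq \sqrt{(1+\rho)n}\,\norm{\widehat{\vd}^{(k)} - \widehat{\vd}^*}_{\widehat{F}_\vr(\rho)}$.

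Chaining these three inequalities and using the hypothesis $\vr(P)_{\min} \geq \varepsilon_{\mathrm{d}}/(4n)$ produces
\[
\norm{\ve_\vu^{(k)}}_1 \leq \frac{4 n \sqrt{1+\rho}}{\sqrt{(1-\rho)\,\varepsilon_{\mathrm{d}}}} \left(\tfrac{\sqrt{\kappa(\rho)} - 1}{\sqrt{\kappa(\rho)} + 1}\right)^{\!k} \norm{\nabla_\vu g}_1.
\]
Requiring the right-hand side to drop below $\hat\eta\,\norm{\nabla_\vu g}_1$ and inverting via the standard identity $\log\!\left(\tfrac{\sqrt{\kappa}+1}{\sqrt{\kappa}-1}\right) \geq 2/\sqrt{\kappa}$ gives
\[
k \leq \tfrac{\sqrt{\kappa(\rho)}}{2}\,\log\!\left(\tfrac{4 n \sqrt{1+\rho}}{\hat\eta\,\sqrt{(1-\rho)\,\varepsilon_{\mathrm{d}}}}\right).
\]
Substituting the condition-number bound $\kappa(\rho) \leq 2(1-\mu\rho)/(1-\rho)$ from Lemma \ref{lem:condition-number} and absorbing the factor $\sqrt{1+\rho} \leq \sqrt{2}$ into the constant inside the $\log$ (so that $4\sqrt{2}$ becomes the stated $6$ after loosening) produces the claimed expression.

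There is no deep obstacle here; the proof is essentially an assembly of Lemmas \ref{lem:condition-number} and \ref{lem:norm-conversion} with the standard CG rate. The only delicate point is bookkeeping: the hypothesis $\vr(P)_{\min} \geq \varepsilon_{\mathrm{d}}/(4n)$ is exactly what is needed to ensure that the $\log$ factor depends on $\varepsilon_{\mathrm{d}}$ only through $\log \varepsilon_{\mathrm{d}}^{-1/2}$, and the relaxation of constants ($4\sqrt{2} \mapsto 6$ and $\sqrt{2(1-\mu\rho)/(1-\rho)}/2 \mapsto \sqrt{(1-\mu\rho)/(1-\rho)}$) must be done cleanly to arrive at the stated form. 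The asymptotic simplification $\sqrt{\kappa(\rho)} = \widetilde{O}(\sqrt{(1-\mu)/(1-\rho)})$ as $\rho \to 1$ then follows because $1 - \mu \rho \to 1 - \mu$.
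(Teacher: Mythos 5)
Your proposal is correct and follows essentially the same route as the paper's proof: both convert the target $L_1$ residual bound into an $\widehat{F}_\vr(\rho)$-norm error tolerance via the upper half of Lemma \ref{lem:norm-conversion}, bound the initial error via the lower half together with $\norm{\ve_\vu^{(0)}}_1 = \norm{\nabla_\vu g}_1$ and the hypothesis $\vr(P)_{\min} \geq \varepsilon_{\mathrm{d}}/(4n)$, apply the standard $\tfrac{1}{2}\sqrt{\kappa}\log(\cdot)$ CG iteration count, and finish with the condition-number bound of Lemma \ref{lem:condition-number}. Your constant bookkeeping ($4\sqrt{2} \le 6$ after absorbing the extra $\sqrt{n}$ from $\vr(P)_{\min}^{-1/2}$, and dropping the factor $\tfrac{1}{2}$ against $\sqrt{2}$ in the condition number) matches the paper's.
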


\begin{proof}{Once again, we drop $\rho$ from $\Frr$ and $\widehat{F}_\vr(\rho)$ for convenience. Using the same definitions as Lemma \ref{lem:norm-conversion}, recall the equivalence of the diagonally-preconditioned linear system:
\begin{align*}
    F_\vr \vd_\vu &= -\nabla_\vu g \\
    \iff M^{-1/2}F_\vr \vd_\vu &= -M^{-1/2}\nabla_\vu g \\
    \iff M^{-1/2}F_\vr M^{-1/2} M^{1/2} \vd_\vu &= -M^{-1/2}\nabla_\vu g \\ 
    \iff \widehat{F}_\vr \widehat{\vd}_\vu &= -M^{-1/2}\nabla_\vu g.
\end{align*}

To guarantee $\norm{\ve_\vu}_1 \leq \hat{\eta}\norm{\nabla_\vu g}_1$, it is sufficient to have (given the upper bound in Lemma \ref{lem:norm-conversion}})
\begin{align}
\label{eq:define-varepsilon}
    \norm{\widehat{\vd}_\vu - \widehat{\vd}_\vu^*}_{\widehat{F}_\vr(\rho)} \leq \frac{\hat{\eta}\norm{\nabla_\vu g}_1}{\sqrt{n(1+\rho)}} \coloneqq 2 \varepsilon.
\end{align}
It is well-known the that linear conjugate gradient method ensures $\norm{\widehat{\vd}_\vu - \widehat{\vd}_\vu^*}_{\widehat{F}_\vr(\rho)} \leq 2 \varepsilon$ after at most $\mathrm{ceil}(k)$ steps \citep{Shewchuk94CG}, where
\begin{align}
    k = \frac{1}{2}\sqrt{\kappa(\widehat{F}_\vr)} \log \norm{\widehat{\vd}_\vu^{(0)} - \widehat{\vd}_\vu^*}_{\hat{F_\vr}} \varepsilon^{-1}.
\end{align}
Then, we have
\begin{align*}
    k &\leq \frac{1}{2}\sqrt{\kappa(\widehat{F}_\vr)} \log \norm{\widehat{\vd}_\vu^{(0)} - \widehat{\vd}_\vu^*}_{\hat{F_\vr}} \varepsilon^{-1} \\
    & \leq \frac{1}{2}\sqrt{\kappa(\widehat{F}_\vr)} \log \frac{\norm{\ve_\vu^{(0)}}_1 \varepsilon^{-1}}{\sqrt{(1-\rho) \vr(P)_{\min}}} \tag{by the lower bound in Lemma \ref{lem:norm-conversion}} \\
    & = \frac{1}{2}\sqrt{\kappa(\widehat{F}_\vr)} \log \frac{\norm{\nabla_\vu g}_1 \varepsilon^{-1}}{\sqrt{(1-\rho) \vr(P)_{\min}}} \tag{since $\vd_\vu^{(0)} = \vzero \implies \norm{\ve_\vu^{(0)}}_1 = \norm{\nabla_\vu g}_1$.} \\
    & = \frac{1}{2}\sqrt{\kappa(\widehat{F}_\vr)} \log 2\hat{\eta}^{-1}\sqrt{\frac{(1+\rho)n}{(1-\rho) \vr(P)_{\min}}} \tag{using $\varepsilon$ from (\ref{eq:define-varepsilon})} \\
    & \leq \frac{1}{2}\sqrt{\kappa(\widehat{F}_\vr)} \log 3n^{1/2}(1-\rho)^{-1/2}\hat{\eta}^{-1}\vr(P)_{\min}^{-1/2} \tag{simplifying constants} \\
    & \leq \frac{1}{2}\sqrt{\kappa(\widehat{F}_\vr)} \log 6n(1-\rho)^{-1/2}\hat{\eta}^{-1}\varepsilon_{\mathrm{d}}^{-1/2} \tag{since $\vr(P)_{\min} \geq \varepsilon_{\mathrm{d}} / (4n)$ by assumption}, \\
    & \leq \sqrt{\frac{1-\mu\rho}{2(1-\rho)}} \log 6n(1-\rho)^{-1/2}\hat{\eta}^{-1}\varepsilon_{\mathrm{d}}^{-1/2} \tag{by Lemma \ref{lem:condition-number}}
\end{align*}
which concludes the proof.
\end{proof}

\algoneconv*
\begin{proof}
    An easy way to see why Thm. \ref{thm:convergence-alg1} holds is by noticing that the lower bound condition on $\rho$ given in (\ref{eq:discount-factor-constraint}) will hold after a finite number of iterations depending logarithmically on remaining problem parameters. Then, since by Lemma \ref{lem:conv-cg} each iteration requires $\widetilde{O}((1-\rho)^{-1/2})$ steps of CG (with each step costing $O(n^2)$), and $1-\rho$ will be at most $O(1-\lambda_2)$, the result follows. Now, we provide a more detailed analysis. 
    
    First, define $\tilde{\rho}^{(l)} = 1 - \rho^{(l)}$ for the $l^{th}$ iteration of Alg. \ref{alg:newton-solve}. Since $\rho^{(0)} = 0 \implies \tilde{\rho}^{(0)} = 1$, we have $\tilde{\rho}^{(l)} = 4^{-l}$ given the update rule in L5. Then, by (\ref{eq:discount-factor-constraint}) of Thm. \ref{thm:residual}, Alg. 1 performs the final iteration (in the worst case) when
    \begin{align*}
        \tilde{\rho}^{(l)} = 4^{-l} &\leq \frac{(1-\lambda_2)K}{\lambda_2(1-K)} = (1-\lambda_2)\tilde{K}.
    \end{align*}
    That is, in the worst case we terminate after $l$ steps, where 
    \begin{align}
    \label{eq:n-steps-of-alg1}
        l = \mathrm{ceil}\left(-\frac{\log(1-\lambda_2)\tilde{K}}{\log(4)}\right).
    \end{align} In the worst case, we are guaranteed by the final step final that,
    \begin{align}
        \tilde{\rho}^{(l)} = 4^{-l} \in \left(\frac{(1-\lambda_2)\tilde{K}}{4}, (1-\lambda_2)\tilde{K}\right].
    \end{align}
    Thus, for each integer $l^\prime \in [l]$, we have
    \begin{align}
    \label{eq:tilde-rho-ineq}
        \tilde{\rho}^{(l^\prime)} = 4^{-l^\prime} > (1-\lambda_2)\tilde{K} 4^{l-l^\prime-1}.
    \end{align}
    Then, the condition number at a given step $l^\prime \geq 0$ satisfies:
    \begin{align*}
        \kappa^{(l^\prime)} &\leq \mu + \frac{1-\mu}{1-\rho^{(l^\prime)}} \tag{From Lemma \ref{lem:condition-number}} \\
        &\leq \mu + 4^{1 + l^\prime - l}\frac{1-\mu}{(1-\lambda_2)\tilde{K}} \tag{From (\ref{eq:tilde-rho-ineq})} \\
        &= 4^{1 + l^\prime - l} \kappa_0,
    \end{align*}
    where
    \begin{align}
        \kappa_0 = O\left(\frac{1-\mu}{(1-\lambda_2)\tilde{K}}\right) ~\mathrm{as}~ \lambda_2 \rightarrow 1
    \end{align}
    By Lemma \ref{lem:conv-cg}, a given step $l^\prime \leq l$ of Alg. \ref{alg:newton-solve} takes at most
    \begin{align*}
        k(l^\prime) &= 2^{1+l^\prime-l} \sqrt{\kappa_0} \log \left(3n \sqrt{\kappa_0} 2^{1+l^\prime-l}   \hat{\eta}^{-1} \varepsilon_{\mathrm{d}}^{-1/2} \right) \\
        &\leq 2^{1+l^\prime-l} \sqrt{\kappa_0} \log \left(6n \sqrt{\kappa_0}    \hat{\eta}^{-1} \varepsilon_{\mathrm{d}}^{-1/2} \right) \tag{since $l^\prime \leq l$} \\
        &= 2^{1+l^\prime-l} \widetilde{O}(\sqrt{\kappa_0}),
    \end{align*}
    steps of the conjugate gradient algorithm. Taking a sum over $l^\prime$:
    \begin{align*}
        \sum_{l^\prime=0}^l k(l^\prime) &=  \widetilde{O}(\sqrt{\kappa_0}) 2^{1-l} \sum_{l^\prime=0}^l 2^{l^\prime} \\
        &= \widetilde{O}(\sqrt{\kappa_0}) 2^{1-l} \left(\frac{2^{l+1}-1}{2-1}\right) \numberthis \label{eq:L-rho-decay-factor} \\
        &= \widetilde{O}(\sqrt{\kappa_0}) (4 - 2^{1-l}) \\
        & \leq \widetilde{O}(\sqrt{\kappa_0}) (4 - \sqrt{(1-\lambda_2)\tilde{K}}) \tag{due to (\ref{eq:n-steps-of-alg1})} \\
        &= \widetilde{O}(\sqrt{\kappa_0}) \numberthis\label{eq:tilde_kappa_o}.
    \end{align*}
 
Note that the choice of decay factor 4 is not arbitrary; if one chooses instead a factor $L > 1$ and repeats the above steps, one arrives at (cf. (\ref{eq:n-steps-of-alg1})):
\begin{align*}
    l = \mathrm{ceil}\left(-\frac{\log(1-\lambda_2)\tilde{K}}{\log(L)}\right),
\end{align*}
and the following (cf. (\ref{eq:L-rho-decay-factor}))
\begin{align*}
    \sum_{l^\prime=0}^l k(l^\prime) &= \widetilde{O}(\sqrt{\kappa_0}) \sqrt{L}^{1-l} \left(\frac{\sqrt{L}^{l+1}-1}{\sqrt{L}-1}\right) \\ 
    &\leq \widetilde{O}(\sqrt{\kappa_0}) \frac{L-\sqrt{(1-\lambda_2)\tilde{K}}}{\sqrt{L}-1} \\
    &\leq \widetilde{O}(\sqrt{\kappa_0}) \frac{L}{\sqrt{L}-1},
\end{align*}
and $L=4$ is the global minimizer of $h(x) = x / (\sqrt{x}-1)$. The last inequality is strong when convergence is bottlenecked by $\lambda_2 \approx 1$, so that the choice $L=4$ becomes near-optimal when optimality is most needed.

Continuing from (\ref{eq:tilde_kappa_o}), the overall complexity is:
\begin{align*}
    \tilde{O}(\sqrt{\kappa_0}) = \tilde{O}\left(\sqrt{\frac{1-\mu}{(1-\lambda_2)\tilde{K}}} \right).
\end{align*}

The result follows by explicitly writing and simplifying $\tilde{K}^{-1}$:
\begin{align*}
    \tilde{K}^{-1} &= \frac{\lambda_2 (1-K)}{K} \\
    &= \frac{2\lambda_2 (1-\zeta \eta / 2)}{\zeta \eta} \tag{Since we choose $\beta = 0.5$, where $K=\zeta \beta \eta$}. \\
    &\leq \frac{2}{\zeta \eta} \\
    &= 2\frac{\chi(\vr|\vr(P))}{\norm{\nabla_\vu g}_1 \eta},
\end{align*}
where the last equality holds by definition of $\zeta$.
\end{proof}

\subsubsection{Proofs of Lemma \ref{lem:convergence-alg2} and Corollary \ref{cor:perstep-cost-alg3}}
\label{sec:proof-of-cor3.4}

\algtwoconv*
\begin{proof}
    First, recall that by Eq. (169) of \citet{sasonverdu2016}, the ratio $\frac{D_{\mathrm{KL}}(\vr|\vr(P))}{\chi^2(\vr|\vr(P))}$ is bounded both above and below by  constants depending on $\norm{\vr(P) / \vr }_\infty < \infty$ and $\norm{\vr / \vr(P)}_\infty < \infty$. Notably, the ratio converges to $1/2$ as $\vr(P) \rightarrow \vr$ as shown in Thm. 4.1 of \citep{CIT-004}.

    Since we know that after $k$ steps of Sinkhorn iteration,
    \begin{align*}
        D_{\mathrm{KL}}(\vr|\vr(P)) \leq A / k
    \end{align*}
    for some constant ${A > 0}$ (see Corollary 6.12 of \citet{nutz2021introduction}) and ${\chi^2(\vr|\vr(P)) = O(D_{\mathrm{KL}}(\vr|\vr(P)))}$, we conclude that after $k$ steps, $\chi^2(\vr|\vr(P)) = O(k^{-1})$. The result follows as each Sinkhorn iteration costs $O(n^2)$ and Alg. 2 terminates when $\chi^2(\vr|\vr(P)) \leq \varepsilon_{\chi}$.
\end{proof}

\algthreecost*
\begin{proof}
    First, from Lemma \ref{lem:convergence-alg2}, the ChiSinkhorn routine in L6 of Alg. \ref{alg:tns-project} has complexity $O(n^2 \varepsilon_{\mathrm{d}}^{-2/5})$ as we chose $\varepsilon_{\chi} = \varepsilon_{\mathrm{d}}^{2/5}$. Lines L9 and L18 each cost $O(n^2)$, which leaves  the cost of line search between lines L12-14 and the NewtonSolve routine in L8. Since the former is assumed to take $S$ steps, each costing $O(n^2)$, it remains to show the cost of NewtonSolve (or Alg. \ref{alg:newton-solve}).

    First, consider the case when $\eta = 0.4 \varepsilon_{\mathrm{d}} / \norm{\nabla_\vu g}_1$ as per L7. From Thm. \ref{thm:convergence-alg1}, after dropping the logarithmic terms and the linear $n^2$ term, we have the following linear term:
    \begin{align*}
        \sqrt{\frac{(1-\mu)\chi(\vr | \vr(P))}{(1-\lambda_2) \eta \norm{\vr(P) - \vr}_1}} \leq \sqrt{\frac{(1-\mu)\chi(\vr | \vr(P))}{(1-\lambda_2) \varepsilon_{\mathrm{d}}}} \leq \sqrt{\frac{(1-\mu)\varepsilon_{\mathrm{d}}^{-4/5}}{(1-\lambda_2)}} = O(\varepsilon_{\mathrm{d}}^{-2/5}(1-\lambda_2)^{-1/2})  \\
    \end{align*}
    since ${\chi^2(\vr | \vr(P)) \leq \varepsilon_{\mathrm{d}}^{2/5}}$. Next, consider the case when ${\norm{\nabla_\vu g}_1 > 0.4 \varepsilon_{\mathrm{d}} / \norm{\nabla_\vu g}_1}$, i.e., ${\norm{\nabla_\vu g}_1^2 > 0.4 \varepsilon_{\mathrm{d}}}$, so that L7 assigns ${\eta = \norm{\nabla_\vu g}_1}$ instead. Inserting $\eta$ into the denominator on the LHS above and applying ${\norm{\nabla_\vu g}_1^2 > 0.4 \varepsilon_{\mathrm{d}}}$ yields the same complexity.\qedhere
    
\end{proof}

\subsubsection{Proof of Thm. \ref{thm:perstep-improv-alg3}}
\label{sec:armijo-and-quadraticrate-proof}

First, we present a simple lemma on the line search condition used in L12 of Algorithm \ref{alg:tns-project}.
\begin{restatable}[Sufficient Decrease Condition]{lemma}{suffdecr}
\label{lem:sufficient-decrease}
Assuming ${\vc = \vc(P)}$, given a descent direction ${\vd = (\vd_\vu, -P_\vc \vd_\vu)}$, the Armijo condition for line search over objective $g$ is satisfied for a step size $\alpha \in (0, 1]$ and constant parameter $c_1 \in (0, 1)$ if and only if:
\begin{align}
\label{eq:armijo}
    \norm{\mathbf{vec}(P(\vu + \alpha \vd_\vu, \vv - \alpha P_\vc \vd_\vu))}_1 -1 \leq (1-c_1) \alpha \langle -\nabla_\vu g, \vd_\vu \rangle.
\end{align}
\end{restatable}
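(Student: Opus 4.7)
The plan is to unfold the standard Armijo condition $g(\vz + \alpha \vd) \leq g(\vz) + c_1 \alpha \langle \nabla g, \vd\rangle$ using the explicit form of the dual objective in (\ref{opt:sinkhorn-dual}) and show it is equivalent to (\ref{eq:armijo}). First, I would observe that the assumption $\vc(P) = \vc$ forces $\|\mathbf{vec}(P(\vu, \vv))\|_1 = \mathbf{1}^\top P\mathbf{1} = \mathbf{1}^\top \vc(P) = \mathbf{1}^\top \vc = 1$, so the sum-of-exponents term and the constant $-1$ in $g(\vz)$ cancel at the reference point. Subtracting $g(\vz)$ from $g(\vz+\alpha\vd)$ then yields
\begin{align*}
g(\vz + \alpha \vd) - g(\vz) = \big[\|\mathbf{vec}(P(\vu + \alpha \vd_\vu, \vv + \alpha \vd_\vv))\|_1 - 1\big] - \alpha \langle \vd_\vu, \vr\rangle - \alpha \langle \vd_\vv, \vc\rangle,
\end{align*}
so the left-hand side of (\ref{eq:armijo}) is already isolated; only the two linear terms remain to be massaged.

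Next, I would show that those linear terms collapse to exactly $\alpha\langle \nabla g, \vd\rangle$. The key identity is $P_\vc^\top \vc = P \,\mathbf{D}(\vc)^{-1} \vc = P \mathbf{1} = \vr(P)$, which again invokes $\vc(P)=\vc$. Substituting $\vd_\vv = -P_\vc \vd_\vu$ gives $\langle \vd_\vv, \vc\rangle = -\langle \vd_\vu, P_\vc^\top \vc\rangle = -\langle \vd_\vu, \vr(P)\rangle$, hence
\begin{align*}
-\alpha\langle \vd_\vu, \vr\rangle - \alpha\langle \vd_\vv, \vc\rangle = \alpha\langle \vd_\vu, \vr(P) - \vr\rangle = \alpha\langle \nabla_\vu g, \vd_\vu\rangle = \alpha\langle \nabla g, \vd\rangle,
\end{align*}
where the last equality uses $\nabla_\vv g = \vc(P) - \vc = \vzero$. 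Substituting back, the Armijo inequality becomes
\begin{align*}
\big[\|\mathbf{vec}(P(\vu + \alpha \vd_\vu, \vv + \alpha \vd_\vv))\|_1 - 1\big] + \alpha\langle \nabla g, \vd\rangle \leq c_1 \alpha \langle \nabla g, \vd\rangle,
\end{align*}
which rearranges to $\|\mathbf{vec}(P(\vu + \alpha \vd_\vu, \vv + \alpha \vd_\vv))\|_1 - 1 \leq (1-c_1)\alpha\langle -\nabla_\vu g, \vd_\vu\rangle$, namely (\ref{eq:armijo}). Every step above is a strict equivalence, so the ``if and only if'' is automatic.

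The argument is essentially bookkeeping and there is no real obstacle; what is worth flagging is that both simplifications, the cancellation of the normalization $\|\mathbf{vec}(P)\|_1 = 1$ and the reduction $\langle \vd_\vv, \vc\rangle = -\langle \vd_\vu, \vr(P)\rangle$, rest on the same assumption $\vc(P)=\vc$. Operationally this is guaranteed by the LogSumExp update to $\vv$ performed in L1 (and again in L15) of Alg. \ref{alg:tns-project} before any line search is attempted, so the lemma is applicable at every backtracking iteration, which is precisely what is needed to carry over the analysis of Thm. \ref{thm:perstep-improv-alg3}.
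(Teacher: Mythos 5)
Your proposal is correct and follows essentially the same route as the paper's proof: unfold the definition of $g$ in the standard Armijo inequality, use $\vc(P)=\vc$ together with the identity $P_\vc^\top \vc = \vr(P)$ to collapse the linear terms to $\alpha\langle\nabla_\vu g,\vd_\vu\rangle$, and rearrange, with every step an equivalence. Your explicit remark that $\norm{\mathbf{vec}(P(\vu,\vv))}_1=1$ at the reference point (so that term cancels against the constant $-1$) is a detail the paper leaves implicit, but otherwise the two arguments coincide.
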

\begin{proof}
    Recall that the Armijo condition requires (see Ch. 3.1 of \citet{Nocedal}):
    \begin{align}
        g(\vu + \alpha \vd_\vu, \vv - \alpha P_\vc \vd_\vu) \leq g(\vu, \vv) + c_1 \alpha \langle \nabla g, \vd \rangle.
    \end{align}
    For brevity, let $P(\alpha) \coloneqq P(\vu + \alpha \vd_\vu, \vv - \alpha P_\vc \vd_\vu)$. We show the equivalence of the above statement to (\ref{eq:armijo}) step by step:
    \begin{align*}
        &g(\vu + \alpha \vd_\vu, \vv - \alpha P_\vc \vd_\vu) \leq g(\vu, \vv) + c_1 \alpha \langle \nabla g, \vd \rangle. \\
        \iff& \norm{\mathbf{vec}(P(\alpha))}_1 -1 - \langle \vu + \alpha \vd_\vu, \vr  \rangle - \langle \vv - \alpha P_\vc \vd_\vu, \vc  \rangle \leq - \langle \vu, \vr  \rangle - \langle \vv, \vc  \rangle + c_1 \alpha \langle \nabla g, \vd \rangle \\
        \iff& \norm{\mathbf{vec}(P(\alpha))}_1 -1 \leq \alpha \big(\langle \vd_\vu, \vr  \rangle - \langle P_\vc \vd_\vu, \vc \rangle \big) + c_1 \alpha \langle \nabla g, \vd \rangle \\
        \iff & \norm{\mathbf{vec}(P(\alpha))}_1 -1 \leq \alpha \big(\langle \vd_\vu, \vr  \rangle - \langle \vd_\vu, P_\vc^\top \vc \rangle \big) + c_1 \alpha \langle \nabla g, \vd \rangle \\
        \iff & \norm{\mathbf{vec}(P(\alpha))}_1 -1 \leq \alpha \big(\langle \vd_\vu, \vr  \rangle - \langle \vd_\vu, \vr(P) \rangle \big) + c_1 \alpha \langle \nabla g, \vd \rangle \tag{Since $P_\vc^\top \vc = \vr(P)$} \\
        \iff & \norm{\mathbf{vec}(P(\alpha))}_1 -1 \leq \alpha \langle \vd_\vu, -\nabla_\vu g  \rangle + c_1 \alpha \langle \nabla_\vu g, \vd_\vu \rangle \tag{Since $\vc(P) = \vc$ by assumption},
    \end{align*}
    which is equivalent to (\ref{eq:armijo}).
\end{proof}

\algthreeimprov*
\begin{proof}
    For convenience, let $\vz \coloneqq (\vu, \vv)$ given some $(\vu, \vv) \in \mathbb{R}^{2n}$ and $\nabla^2 g(t) \coloneqq \nabla^2 g(\vz + t \alpha \vd)$. Recall that from Taylor's Theorem, we have:
    \begin{align*}
        \nabla g(\vz + \alpha \vd) &= \nabla g(\vz) + \alpha \int_0^1 [\nabla^2 g(t) ]\vd ~dt \\
        &= \nabla g(\vz) + \alpha \nabla^2 g(0) \vd + \alpha \int_0^1 \left[\nabla^2 g(t) - \nabla^2 g(0) \right]\vd ~dt \\
        &= (1-\alpha) \nabla g(\vz) +  \alpha \ve + \alpha \int_0^1 \left[\nabla^2 g(t) - \nabla^2 g(0) \right]\vd ~dt.
    \end{align*}
    Now, we define the following:
    \begin{align*}
        \vh(t) \coloneqq \int_0^1 \left[\nabla^2 g(t) - \nabla^2 g(0) \right]\vd.
    \end{align*}
    Then,
    \begin{align*}
      \norm{\nabla g(\vz + \alpha \vd)}_1 &\leq (1-\alpha) \norm{\nabla g(\vz)}_1 + \alpha \norm{\ve}_1 + \alpha \norm{\int_0^1 \vh(t)dt}_1 \\
      &\leq (1 - \alpha + \alpha \eta) \norm{\nabla g(\vz)}_1 + \alpha \norm{\int_0^1 \vh(t)dt}_1 \tag{By construction} \\
      &\leq (1 - \alpha + \alpha \eta) \norm{\nabla g(\vz)}_1 + \alpha \int_0^1 \norm{\vh(t)}_1 dt \numberthis \label{eq:grad-next-bound1}
    \end{align*}
    We will bound $\norm{\vh(t)}_1$ in terms of ${t \in [0, 1]}$ and evaluate the integral. Again, define ${P(t) \coloneqq P(\vz + t\alpha \vd)}$ for convenience, and let $\Delta P(t) \coloneqq P(t) - P$, where $P = P(0)$. Then,
    \begin{align*}
        \vh(t) &= \begin{pmatrix}
        \DrDP & \Delta P(t) \\
        \Delta P(t)^\top & \DcDP
        \end{pmatrix} 
        \begin{pmatrix} \vd_\vu \\ -P_\vc \vd_\vu \end{pmatrix} \\
        &= \begin{pmatrix} \DrDP \vd_\vu \\ -\DcDP P_\vc \vd_\vu \end{pmatrix} + \begin{pmatrix} -\Delta P(t) P_\vc \vd_\vu \\ \Delta P(t)^\top \vd_\vu \end{pmatrix} \\
        &\coloneqq \begin{pmatrix} \vh_1(t) \\ \vh_2(t) \end{pmatrix} + \begin{pmatrix} \vh_3(t) \\ \vh_4(t) \end{pmatrix},
    \end{align*} 
    where we have $\norm{\vh(t)}_1 \leq \sum_{l=1}^4 \norm{\vh_l(t)}_1$.
    Consider $\vh_1(t)$,
    \begin{align*}
        \norm{\vh_1(t)}_1 &= \sum_i | \vr(\Delta P(t))_i (\vd_\vu)_i | \\
        &\leq \norm{\vd_\vu}_\infty \sum_i | \vr(\Delta P(t))_i | \\
        &= \norm{\vd_\vu}_\infty \norm{\vr(\Delta P(t))}_1 \\
        &\leq \norm{\vd_\vu}_\infty  \norm{\mathbf{vec}(\Delta P(t))}_1.
    \end{align*}
    Since we have ${\norm{P_\vc \vd_\vu}_\infty \leq \norm{\vd_\vu}_\infty}$, the same bound holds for $\norm{\vh_2(t)}_1$ by symmetry.
    Next, 
    \begin{align*}
        \norm{\vh_4(t)}_1 &= \norm{\Delta P(t)^\top \vd_\vu}_1 \\
        &= \sum_j |\langle \Delta P(t)_{:j}, \vd_\vu \rangle| \tag{$A_{:j}$ denotes the $j^{\mathrm{th}}$ column of A.} \\
        &\leq \norm{\vd_\vu}_\infty \sum_j \norm{\Delta P(t)_{:j}}_1 \\
        &= \norm{\vd_\vu}_\infty \norm{\mathbf{vec}(\Delta P(t))}_1.
    \end{align*}
    Similarly, the same bound holds for $\norm{\vh_3(t)}_1$ by symmetry. Then,
    \begin{align}
    \label{eq:h-t-ineq}
        \norm{\vh(t)}_1 \leq 4 \norm{\vd_\vu}_\infty \norm{\mathbf{vec}(\Delta P(t))}_1.
    \end{align}
    From Pinsker's inequality, we have:
    \begin{align*}
        \frac{1}{2}\norm{\mathbf{vec}(\Delta P(t))}_1^2& \leq D_h(P | P(t)) \tag{Bregman divergence under negative entropy} \\
        &= \norm{\mathbf{vec}(P(t))}_1 - 1 + \langle P, \log(P / P(t)) \rangle \tag{Given $\vc = \vc(P), \mathbf{vec}(P) \in \Delta_{n^2}$.} \\
        &= \norm{\mathbf{vec}(P(t))}_1 - 1 - t \alpha \left( \langle \vr(P), \vd_\vu \rangle + \langle \vc, -P_\vc \vd_\vu \rangle \right) \\
        &= \norm{\mathbf{vec}(P(t))}_1 - 1 - t \alpha \left( \langle \vr(P), \vd_\vu \rangle - \langle P_\vc^\top \vc, \vd_\vu \rangle \right) \\ 
        &= \norm{\mathbf{vec}(P(t))}_1 - 1 \tag{Since $P_\vc^\top \vc = P \mathbf{D}(\vc)^{-1} \vc = P\1 = \vr(P)$.}\\
        &\leq 0.99 t\alpha \langle -\nabla_\vu g, \vd_\vu \rangle,
    \end{align*}
    where the last inequality is due to (\ref{eq:armijo}) as we assumed $\alpha$ satisfies the Armijo condition, and if step size $\alpha$ is feasible, so is any step size $t\alpha \in [0, \alpha]$ given that the objective is convex. Plugging the above into (\ref{eq:h-t-ineq}):
    \begin{align*}
        \norm{\vh(t)}_1 &\leq 4 \norm{\vd_\vu}_\infty \sqrt{2 * 0.99 t \alpha \langle -\nabla_\vu g, \vd_\vu \rangle))} \\
        &\leq 4\sqrt{2\alpha} \norm{\vd_\vu}_\infty^{3/2} \norm{\nabla_\vu g}_1^{1/2} \sqrt{t}.
    \end{align*}
    Hence,
    \begin{align*}
        \int_0^1 \norm{\vh(t)}_1 dt &\leq 4\sqrt{2\alpha} \norm{\vd_\vu}_\infty^{3/2} \norm{\nabla_\vu g}_1^{1/2} \int_0^1 \sqrt{t} ~dt \\
        &= \frac{8\sqrt{2\alpha}}{3} \norm{\vd_\vu}_\infty^{3/2} \norm{\nabla_\vu g}_1^{1/2} \\
        &\leq \sqrt{\alpha} 4 \norm{\vd_\vu}_\infty^{3/2} \norm{\nabla_\vu g}_1^{1/2} \\
        & = \sqrt{\alpha} 4 \norm{\Frr^{-1}(\ve_\vu(\rho) - \nabla_\vu g)}_\infty^{3/2} \norm{\nabla_\vu g}_1^{1/2} \\
        &\leq \sqrt{\alpha} 12 \norm{\Frr^{-1}}_{1, \infty}^{3/2} \norm{\nabla_\vu g}_1^2 \tag{Since Alg. \ref{alg:newton-solve} ensures $\norm{\ve_\vu(\rho)}_1 \leq \frac{\eta  \norm{\nabla_\vu g}_1}{4}$} \\
        &= \sqrt{\alpha} O(\norm{\nabla_\vu g}_1^2).
    \end{align*}
    Plugging the above into (\ref{eq:grad-next-bound1}) yields the desired result (\ref{eq:alg3-perstep-improv}).
\end{proof}

\newpage
\section{Adaptive Initialization Of The Discount Factor}
\label{sec:adaptive-init-rho}
Recall that in Alg. \ref{alg:newton-solve}, we initialize the discount factor $\rho$ at $0$ and anneal $(1-\rho)$ by taking:
\begin{align*}
    \rho \gets 1 - (1-\rho) / 4
\end{align*}
in L4 of the algorithm until the forcing inequality (\ref{eq:forcing}) is satisfied. Here, we describe a simple, practical strategy to reduce the overhead associated with this annealing procedure and the solving of a sequence of linear systems (see also the proof of Thm. \ref{thm:convergence-alg1} in Appx. \ref{sec:proof-of-thm3.2}). 

In particular, we initialize NewtonSolve (Alg. \ref{alg:newton-solve}) with an initial guess $\rho_0$ (rather than 0) in practice. Each call to NewtonSolve returns the final discount factor $\rho$ found by the algorithm in addition to the descent direction $\vd_\vu$. Then, the next time NewtonSolve is called, we call it with
\begin{align}
    \rho^{\mathrm{new}}_0 = \max\big(0, 1 - (1 - \rho^{\mathrm{old}}) * 4\big),
\end{align}
where $\rho^{\mathrm{old}}$ is the discount factor returned by the previous NewtonSolve call. That is, the annealing starts from the second last annealing step of the previous call. As the linear system has changed since the previous call, this allows for a smaller discount factor to potentially replace the previously feasible one, if appropriate. We find that this simple change in the implementation improves performance empirically, as shown in Table \ref{tab:adaptive-rho}. This version of the algorithm is used in the main experiments presented in Sec. \ref{sec:experiments}.
\begin{table}
\centering
\begin{tabular}{c!{\vrule width 1.25pt}c!{\vrule width 1.25pt}c!{\vrule width 0.5pt}c}
Cost & Wall-clock time (s) & Adaptive & $\rho_0 = 0$ \\
\hline\toprule
\multirow{3}{*}{$L_1$ dist.} & Median    & 2.09   & 6.26  \\
                                & 90th \%ile & 3.12   & 12.07 \\
                                & 10th \%ile & 1.48   & 4.70  \\
\hline
\multirow{3}{*}{$L_2^2$ dist.} & Median    & 5.49   & 25.74  \\
                                & 90th \%ile & 10.18   & 48.27  \\
                                & 10th \%ile & 1.40   & 4.56  \\
\bottomrule
\end{tabular}
\captionof{table}{Comparison of median, 90th and 10th percentile performance for adaptively and naively initialized $\rho$ over 30 problems from the upsampled MNIST dataset with $L_1$ and $L_2^2$ costs (${\gamma_{\mathrm{i}}=2^{4}, \gamma_{\mathrm{f}}=2^{18}, p=1.5}$, and $q^{(1)}=2$ initially).}
\label{tab:adaptive-rho}
\end{table}

\begin{figure*}
\centering
\begin{minipage}{0.45\textwidth}
  \centering   \includegraphics[width=1.\linewidth]{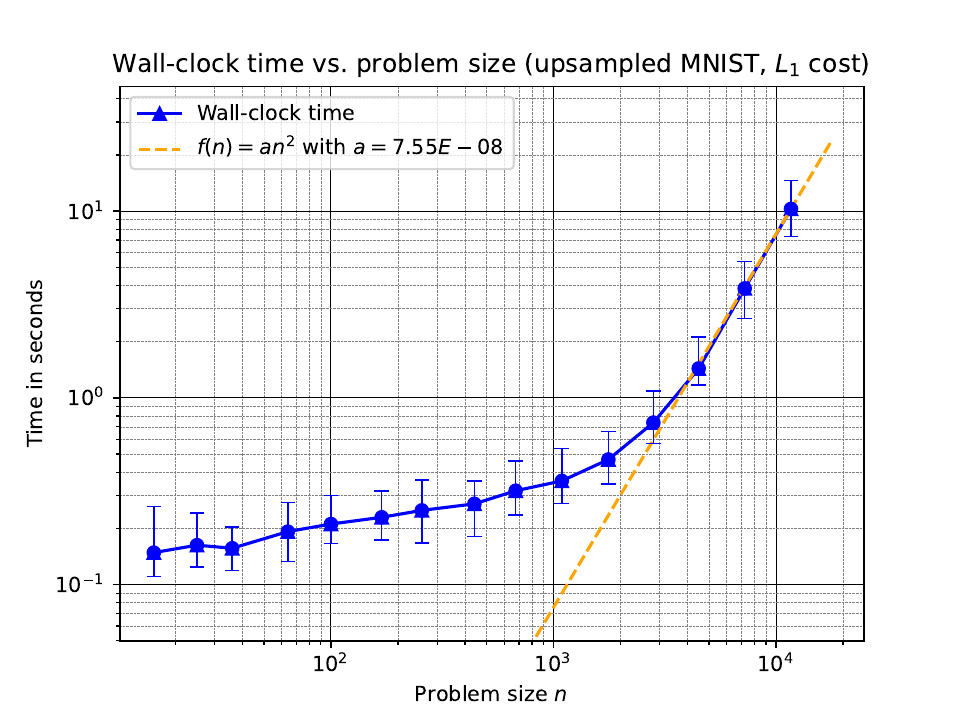}
\end{minipage}
\begin{minipage}{0.45\textwidth}
  \centering    \includegraphics[width=1.\linewidth]{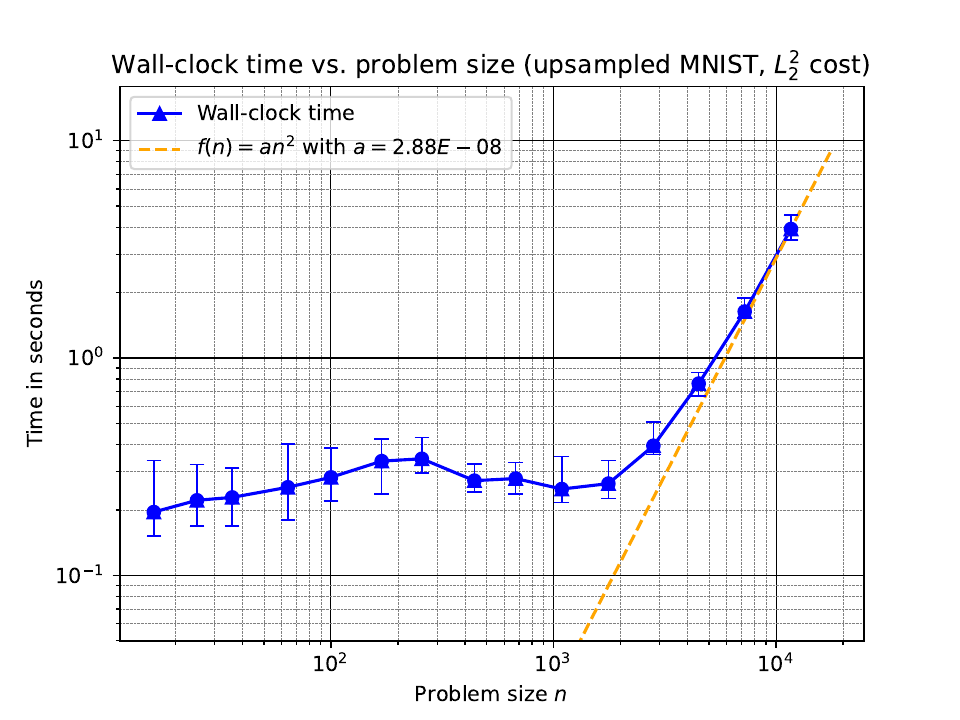}
\end{minipage}
\begin{minipage}{0.45\textwidth}
  \centering   \includegraphics[width=1.\linewidth]{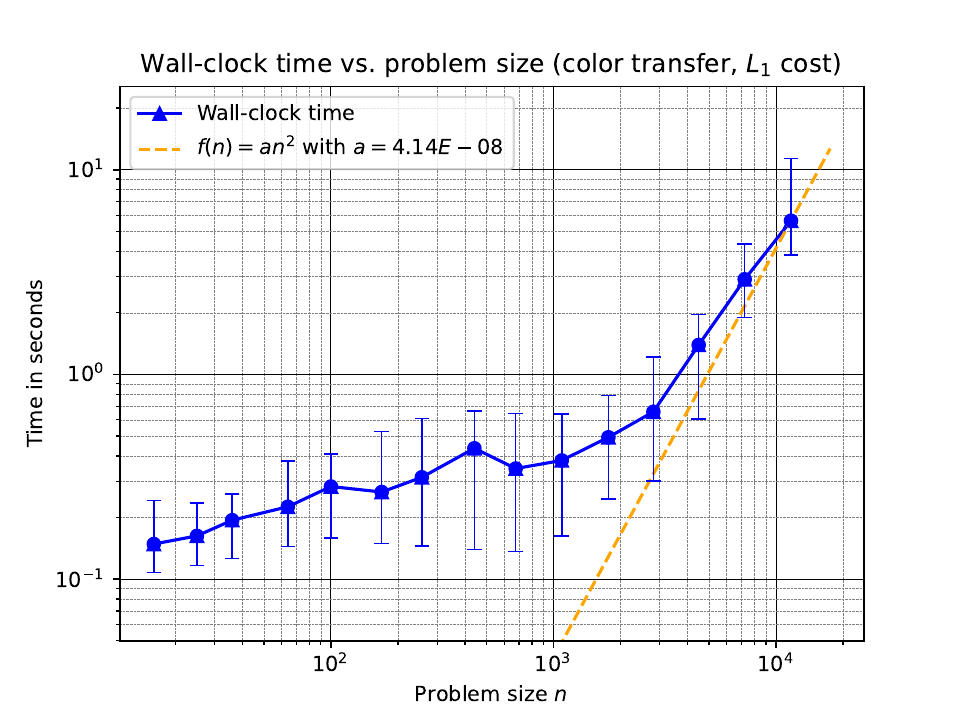}
\end{minipage}
\begin{minipage}{0.45\textwidth}
  \centering    \includegraphics[width=1.\linewidth]{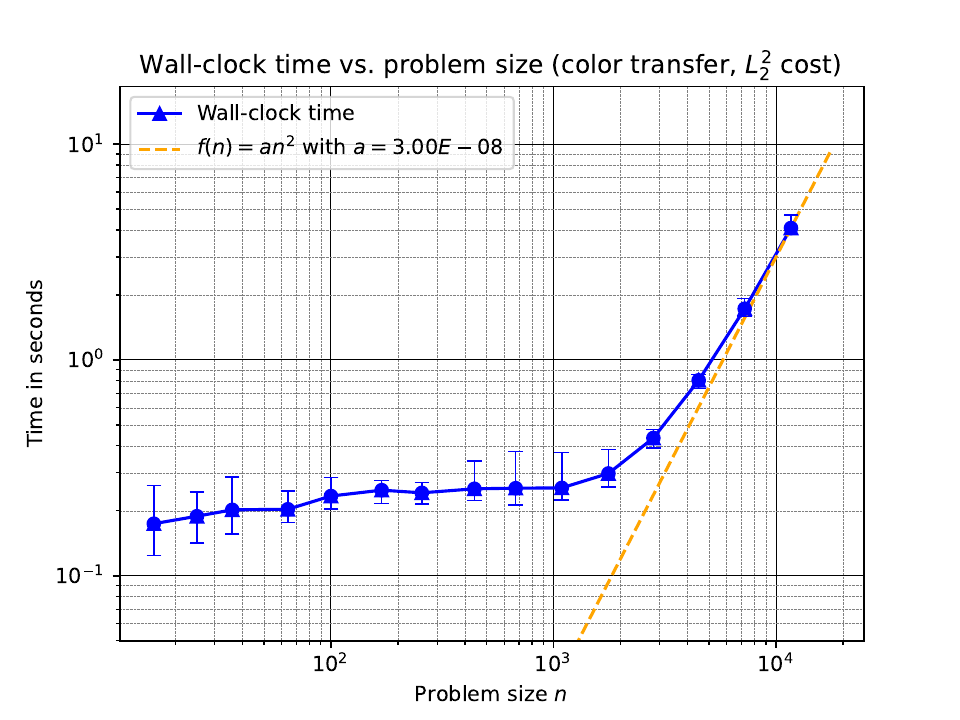}
\end{minipage}
\caption{Log-log plot of wall-clock time for MDOT-TruncatedNewton vs. problem size $n$. Each marker shows the median over 60 random problems from the MNIST (top) and color transfer (bottom) problem sets with normalized $L_1$ \textbf{(left)} and $L_2^2$ \textbf{(right)} distance costs. Error bars show $10^{th}$ and $90^{th}$ percentiles. For all problems, $\gamma_{\mathrm{f}} = 2^{12}$. Dashed lines show a polynomial $f(n) = an^2$, where $a$ is selected so that $an^2$ equals the median time taken at the largest $n$ considered. Above, the algorithm behaves no worse than $O(n^2)$.}
\label{fig:problem-size-n}
\end{figure*}
\section{Problem Size Experiments}
\label{sec:problem-size-experiments}
Here, we conduct experiments with varying problem size $n$ to empirically study the dependence of MDOT--TruncatedNewton on $n$. Fig. \ref{fig:problem-size-n} shows the behavior over MNIST and color transfer problems with $L_1$ and $L_2^2$ cost functions, and problem size adjusted by down- or up-sampling the images. In all experiments here, we fix regularization weight at $\gamma_{\mathrm{f}}=2^{12}$. In addition to empirical behavior of MDOT-TruncatedNewton, we include a polynomial $f(n) = an^2$ passing through the empirical curve at the largest $n$; the curve explains the behavior of the algorithm well for large $n$. It performs no worse than $O(n^2)$ empirically  for the problems considered in Fig. \ref{fig:wallclock-time}.

\section{Additional Benchmarking on DOT\MakeLowercase{mark}}
\label{sec:additional-experiments}

In this section, we extend the study in Fig. \ref{fig:wallclock-time} with 10 more datasets from the DOTmark benchmark introduced by \citet{schrieber2017dotmark} for benchmarking of discrete OT solvers. \citet{schrieber2017dotmark} proposed 10 different image sets, \textit{``to represent a wide range of theoretically different structures, while incorporating typical images that are used in praxis
and/or have been used for previous performance tests in the literature''}. Example image sets include various kinds of randomly generated
images, classical test images and real data from microscopy; each dataset consists of 10 grayscale images, yielding a total of 45 discrete OT problems, where the marginals $\vr, \vc$ are formed based on pixel values \citep{schrieber2017dotmark}. The cost matrix is constructed similarly to the MNIST dataset from distances in 2D pixel locations. While \citet{schrieber2017dotmark} proposed only to use the $L_2^2$ cost function, we evaluate on both $L_1$ and $L_2^2$ costs functions for consistency with Fig. \ref{fig:wallclock-time} and for the sake of broader evaluation. Once again, for consistency with Fig. \ref{fig:wallclock-time}, we take $64 \times 64$ images, which yield $n=4096$.

For each of 20 problem sets (corresponding to a class of images and a cost function), we sample 20 random problems out of the 45 possible problems. Figs. \ref{fig:dotmark-first}-\ref{fig:dotmark-last} show the median time to converge for each algorithm at a given hyperparameter setting, and the error $\langle P - P^*, C \rangle$ after rounding the output of the algorithm onto $\gU(\vr, \vc)$ -- with the exception of Alg. 3.5 of \citet{feydy2020}; see Appx. E of \citet{kemertas2025efficientaccurateoptimaltransport}. The wall-clock time plots for the respective cost functions ($L_1$ and $L_2^2$) follow the same trends seen in the two datasets considered in Fig. \ref{fig:wallclock-time}. Following \citet{kemertas2025efficientaccurateoptimaltransport}, we include $75\%$ confidence intervals along both axes here, and also show that MDOT-TruncatedNewton is generally robust even at high precision, where maintaining numerical stability can be more challenging. Our conclusions based on Fig. \ref{fig:wallclock-time} regarding the wall-clock time convergence behavior of MDOT-TruncatedNewton and how it compares to baselines remain unchanged. 

\textbf{Operation Counts.} In addition to wall-clock time, we count here the number of primitive operations costing $O(n^2)$ for each algorithm. Examples of such \textit{primitive} operations involving $n \times n$ matrices include row/column sums of matrices, matrix-vector products, element-wise multiplication of matrices, element-wise exponentiation/logarithm of matrices, addition/subtraction/multiplication/division between a matrix and a scalar, max over all entries of a matrix, summation over all entries of a matrix, etc. We count the number of primitive operations rather than a higher level function call such as the number of gradient evaluations due to inherent differences in the design of the various baseline algorithms; e.g., some methods require costly line search or inner loops between gradient evaluations. For all 20 problem sets displayed in Figs. \ref{fig:dotmark-first}-\ref{fig:dotmark-last}, we find that the total number of $O(n^2)$ operations predict wall-clock time very well (high correlation), especially when the algorithms are run for long enough, as seen visually upon comparing top and bottom rows of the same column. All algorithms follow the same trend seen in Fig. \ref{fig:wallclock-time}, so that our conclusions once again remain the same.

\newcommand\szz{0.99}
\begin{figure*}
\centering
\begin{minipage}{0.9\textwidth}
  \centering   \includegraphics[width=1.\linewidth]{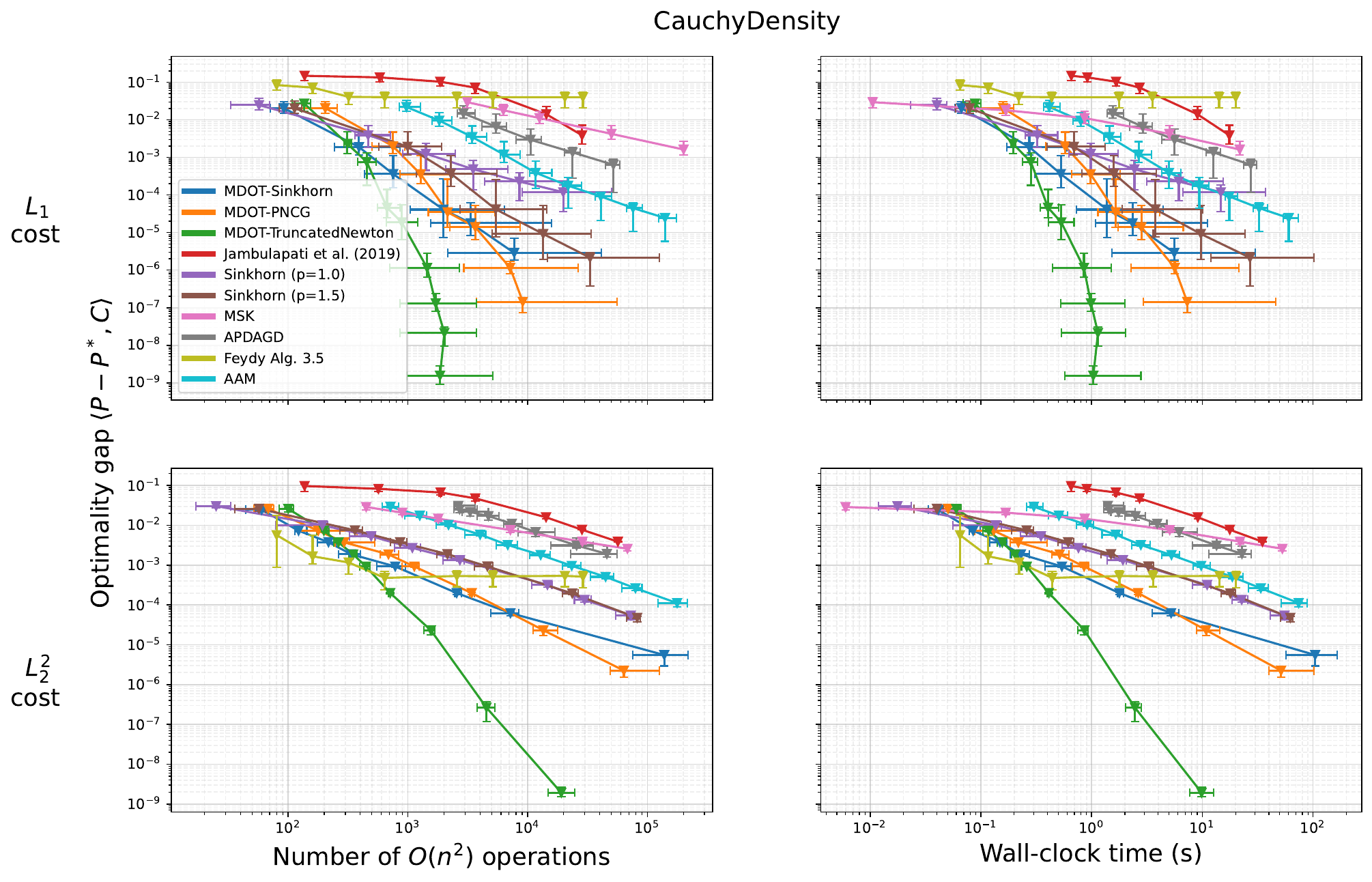}
\end{minipage}
\caption{\texttt{CauchyDensity} problem with $L_1$ (top) and $L_2^2$ (bottom) costs, showing optimality gap vs. number of $O(n^2)$ operations (left) and wall-clock time (right). }
\label{fig:dotmark-first}
\end{figure*}
\begin{figure*}
\centering
\begin{minipage}{\szz\textwidth}
  \centering   \includegraphics[width=1.\linewidth]{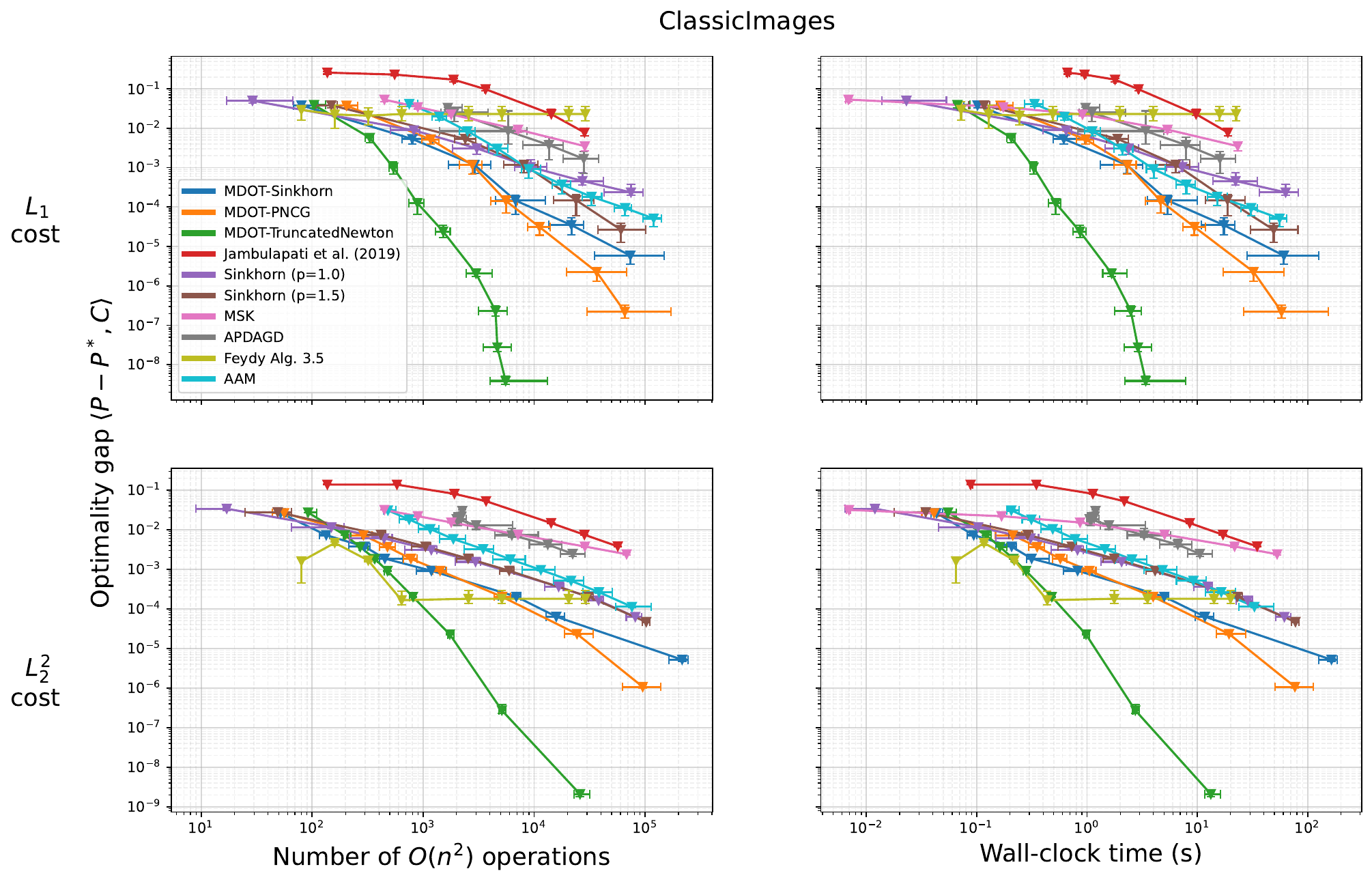}
\end{minipage}
\caption{\texttt{ClassicImage} problem with $L_1$ (top) and $L_2^2$ (bottom) costs, showing optimality gap vs. number of $O(n^2)$ operations (left) and wall-clock time (right). }
\end{figure*}

\begin{figure*}
\centering
\begin{minipage}{\szz\textwidth}
  \centering   \includegraphics[width=1.\linewidth]{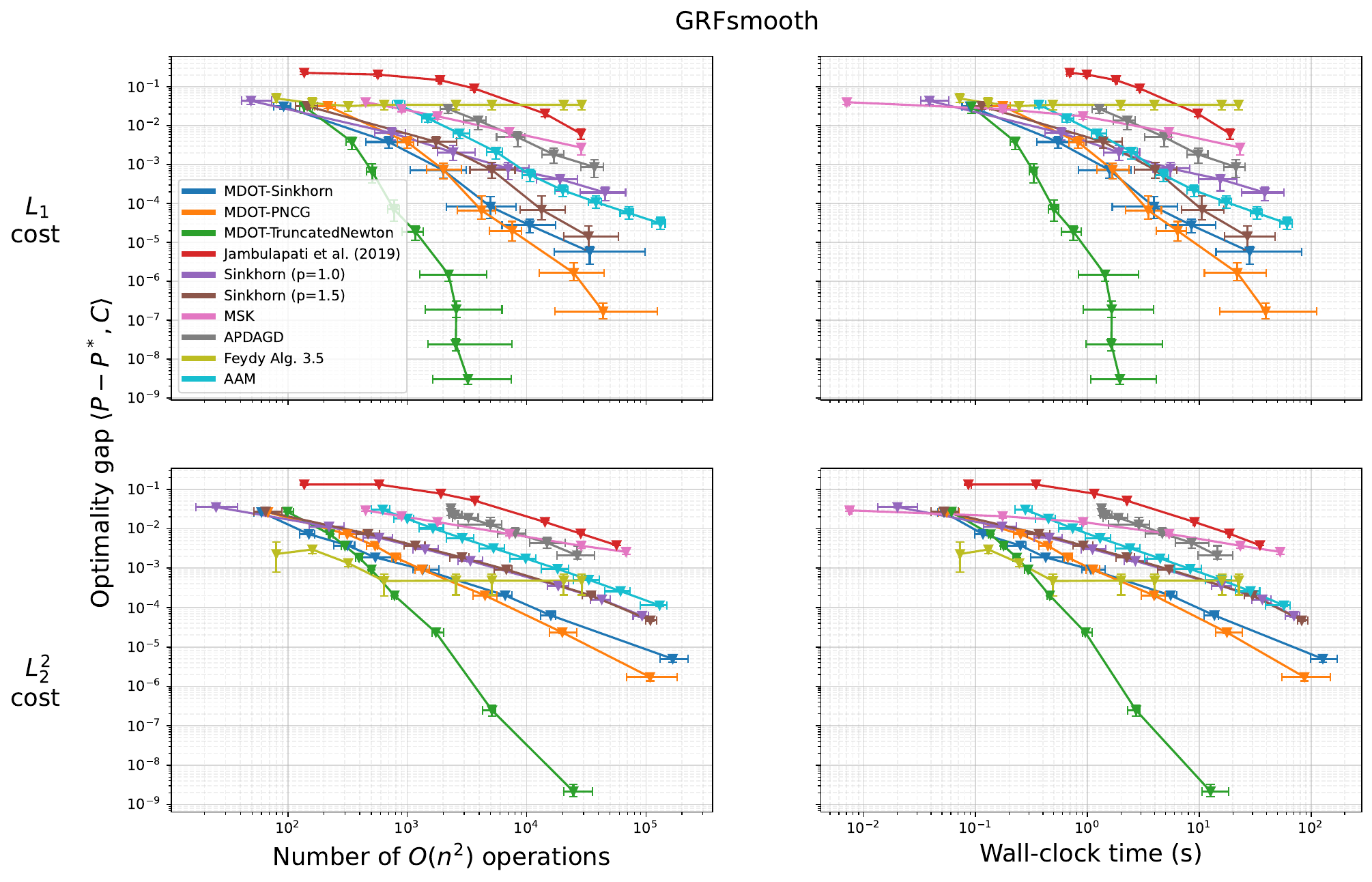}
\end{minipage}
\caption{\texttt{GRFSmooth} problem with $L_1$ (top) and $L_2^2$ (bottom) costs, showing optimality gap vs. number of $O(n^2)$ operations (left) and wall-clock time (right). }
\end{figure*}

\begin{figure*}
\centering
\begin{minipage}{\szz\textwidth}
  \centering   \includegraphics[width=1.\linewidth]{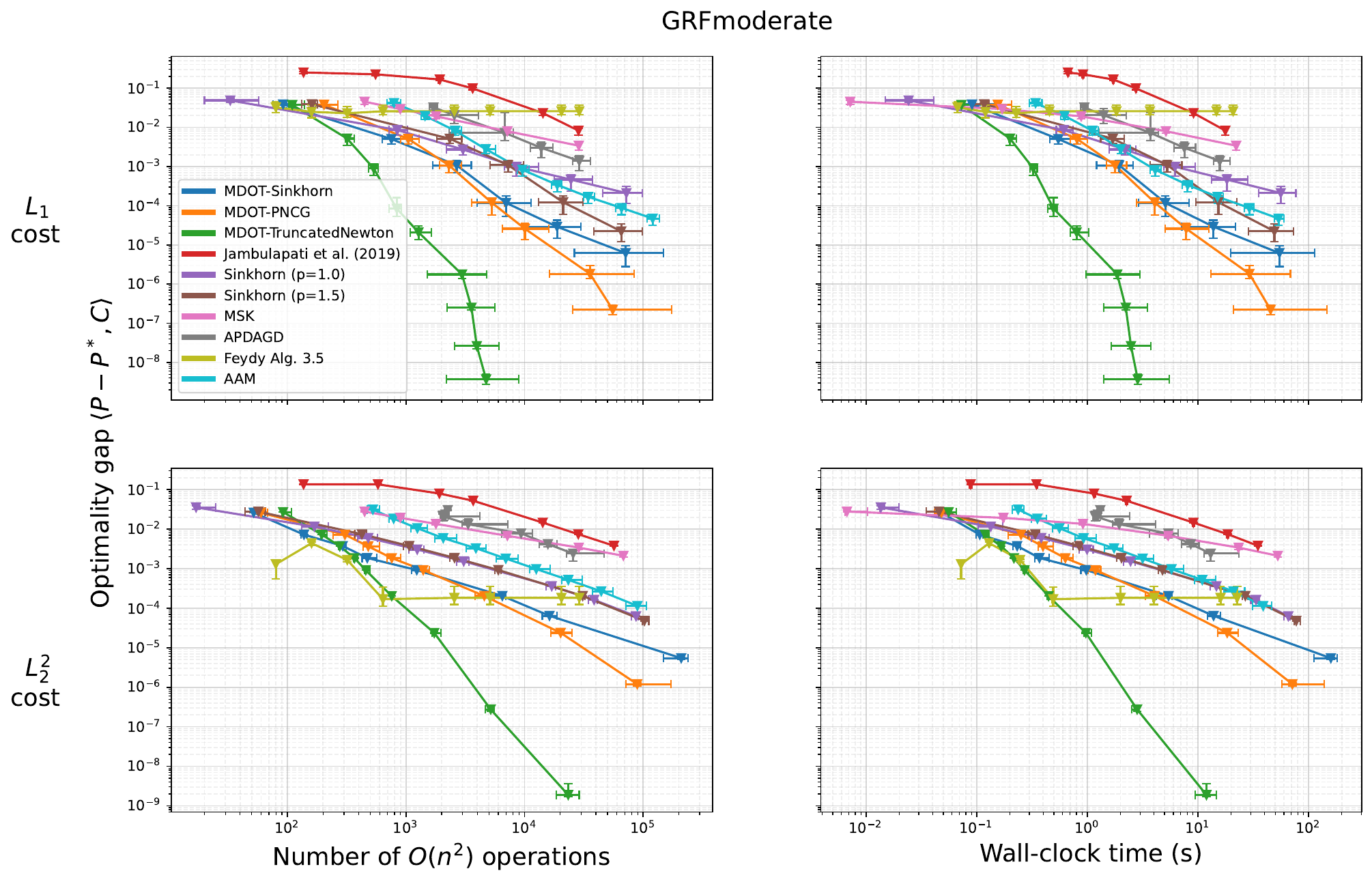}
\end{minipage}
\caption{\texttt{GRFModerate} problem with $L_1$ (top) and $L_2^2$ (bottom) costs, showing optimality gap vs. number of $O(n^2)$ operations (left) and wall-clock time (right). }
\end{figure*}

\begin{figure*}
\centering
\begin{minipage}{\szz\textwidth}
  \centering   \includegraphics[width=1.\linewidth]{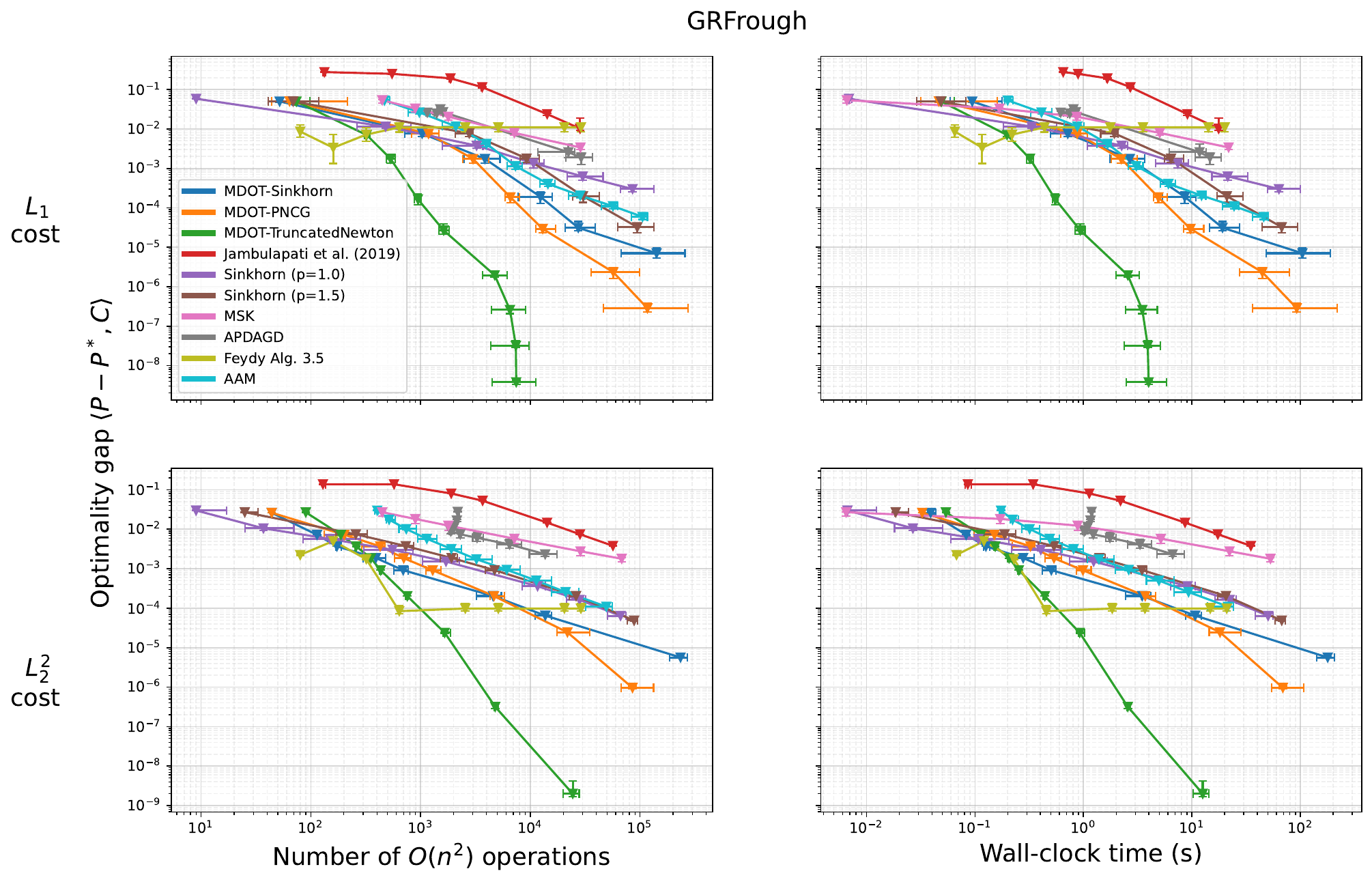}
\end{minipage}
\caption{\texttt{GRFRough} problem with $L_1$ (top) and $L_2^2$ (bottom) costs, showing optimality gap vs. number of $O(n^2)$ operations (left) and wall-clock time (right). }
\end{figure*}

\begin{figure*}
\centering
\begin{minipage}{\szz\textwidth}
  \centering   \includegraphics[width=1.\linewidth]{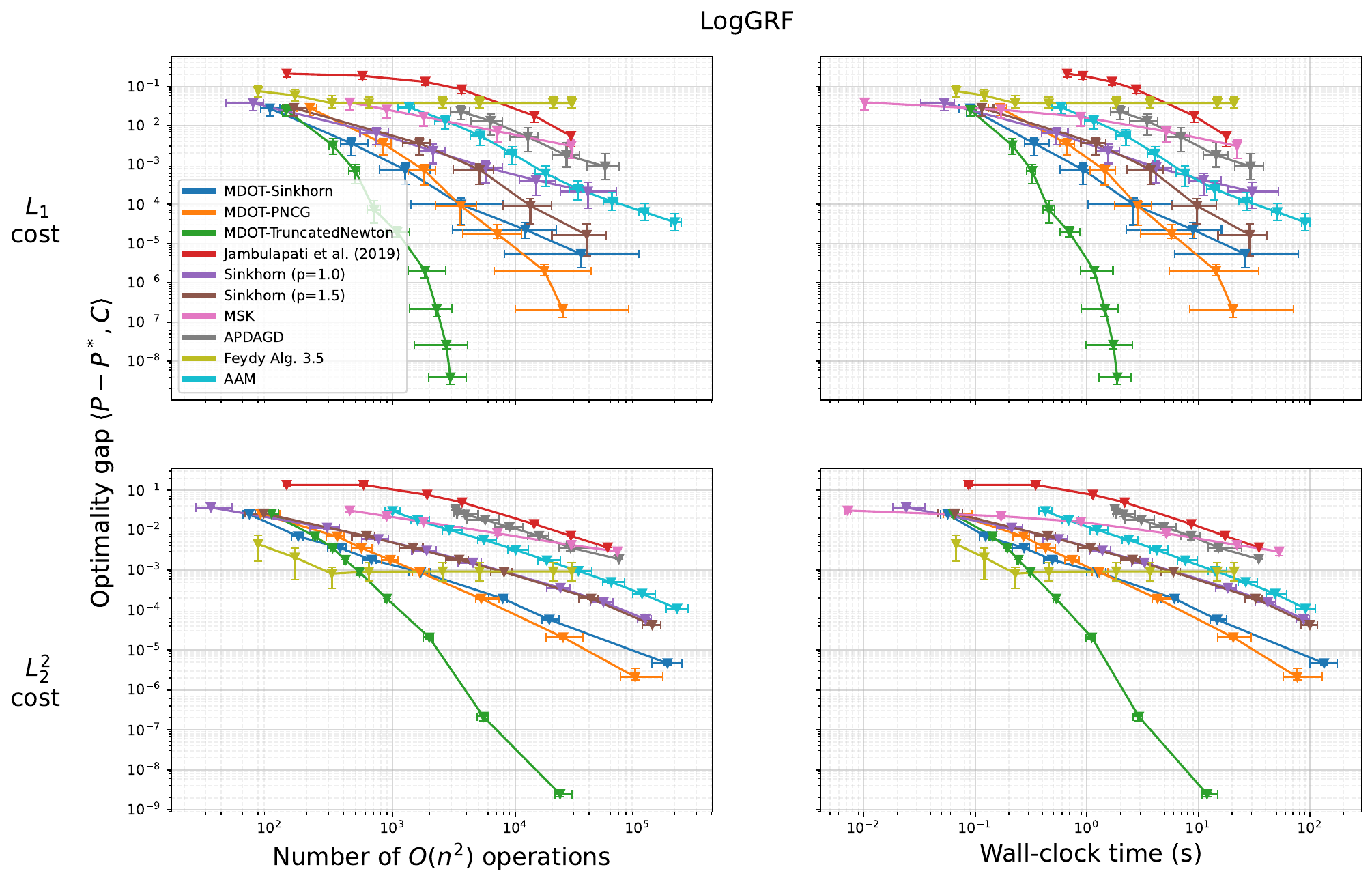}
\end{minipage}
\caption{\texttt{LogGRF} problem with $L_1$ (top) and $L_2^2$ (bottom) costs, showing optimality gap vs. number of $O(n^2)$ operations (left) and wall-clock time (right). }
\end{figure*}

\begin{figure*}
\centering
\begin{minipage}{\szz\textwidth}
  \centering   \includegraphics[width=1.\linewidth]{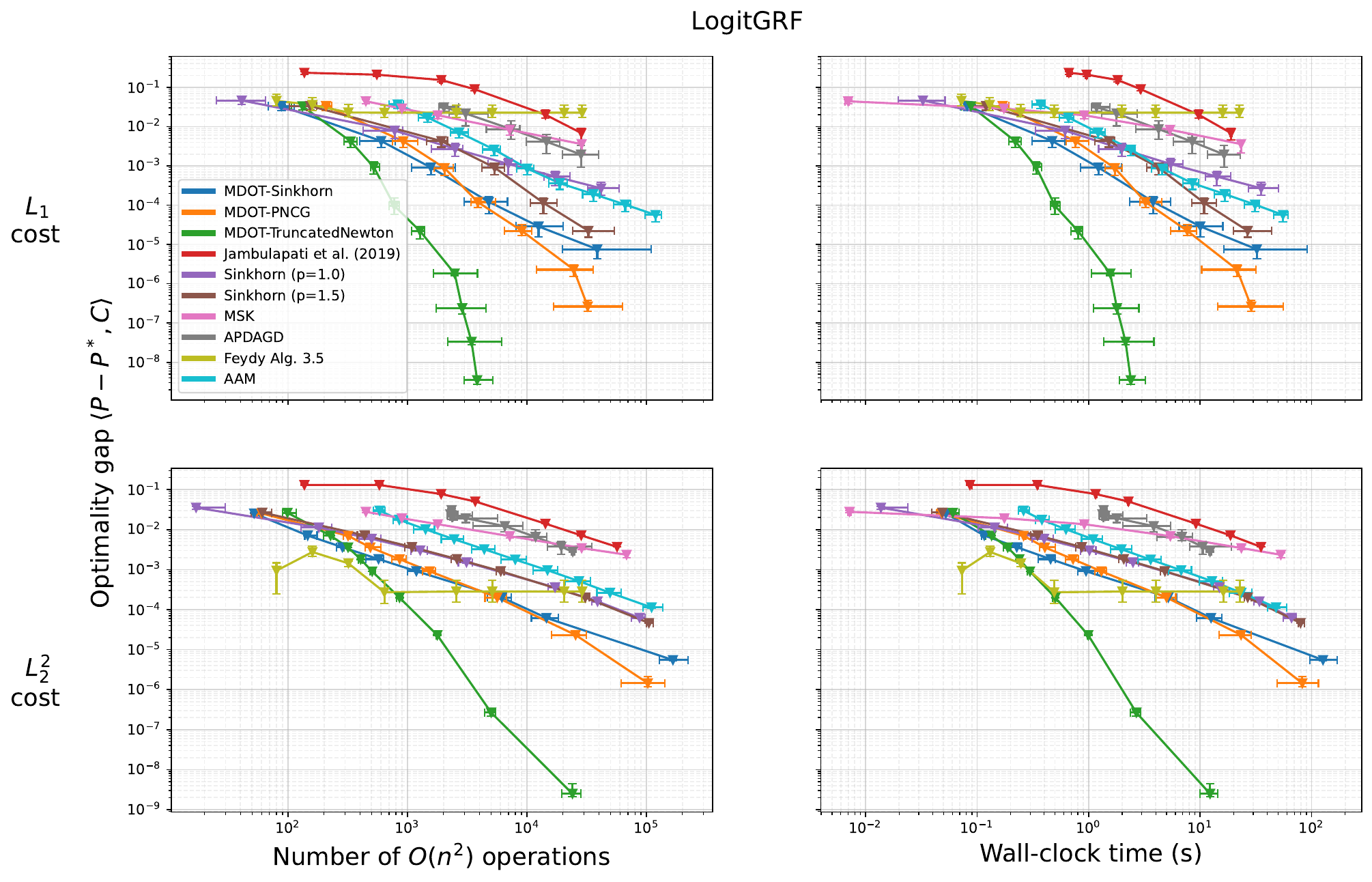}
\end{minipage}
\caption{\texttt{LogitGRF} problem with $L_1$ (top) and $L_2^2$ (bottom) costs, showing optimality gap vs. number of $O(n^2)$ operations (left) and wall-clock time (right). }
\end{figure*}

\begin{figure*}
\centering
\begin{minipage}{\szz\textwidth}
  \centering   \includegraphics[width=1.\linewidth]{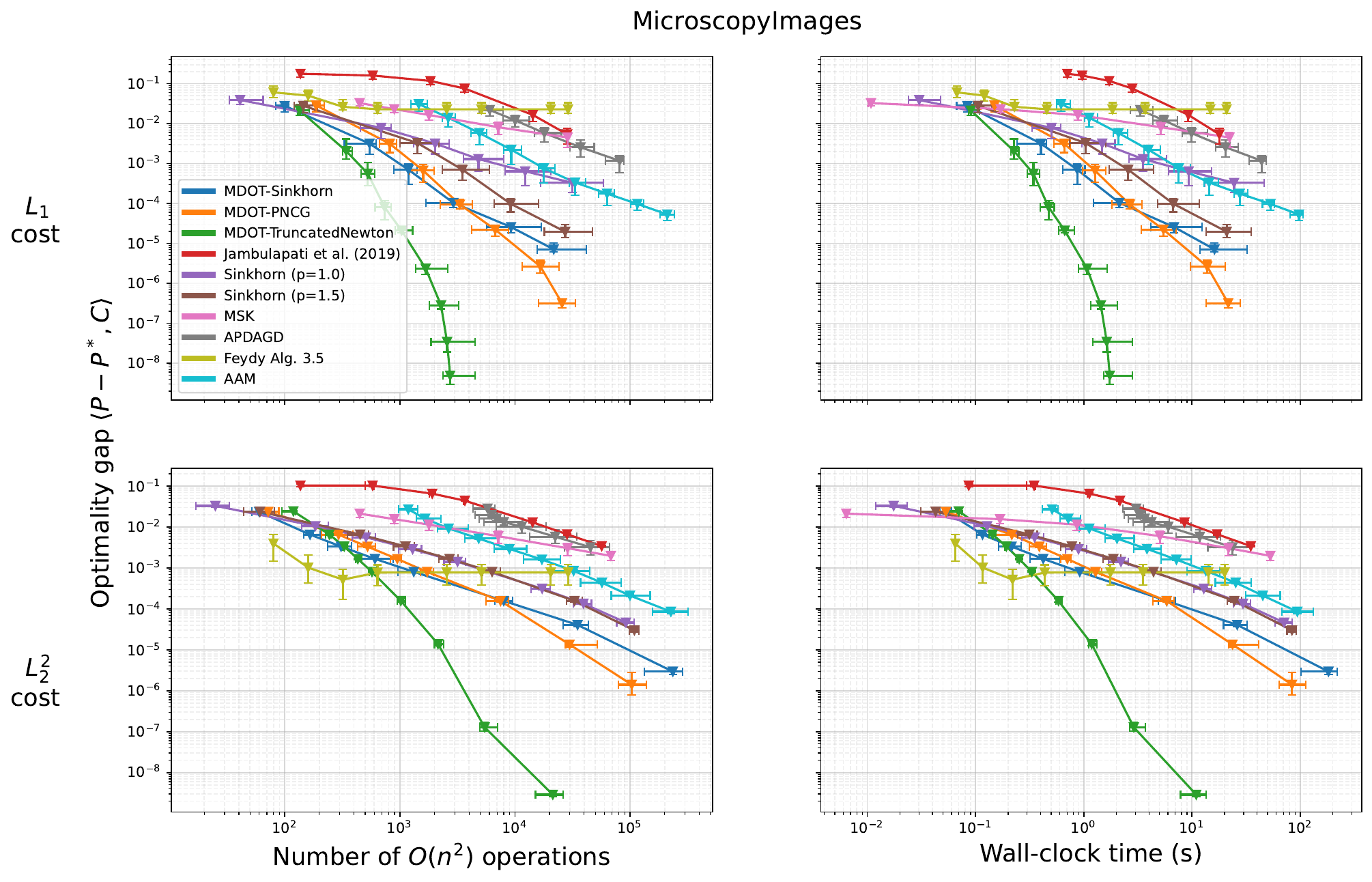}
\end{minipage}
\caption{\texttt{MicroscopyImage} problem with $L_1$ (top) and $L_2^2$ (bottom) costs, showing optimality gap vs. number of $O(n^2)$ operations (left) and wall-clock time (right). }
\end{figure*}

\begin{figure*}
\centering
\begin{minipage}{\szz\textwidth}
  \centering   \includegraphics[width=1.\linewidth]{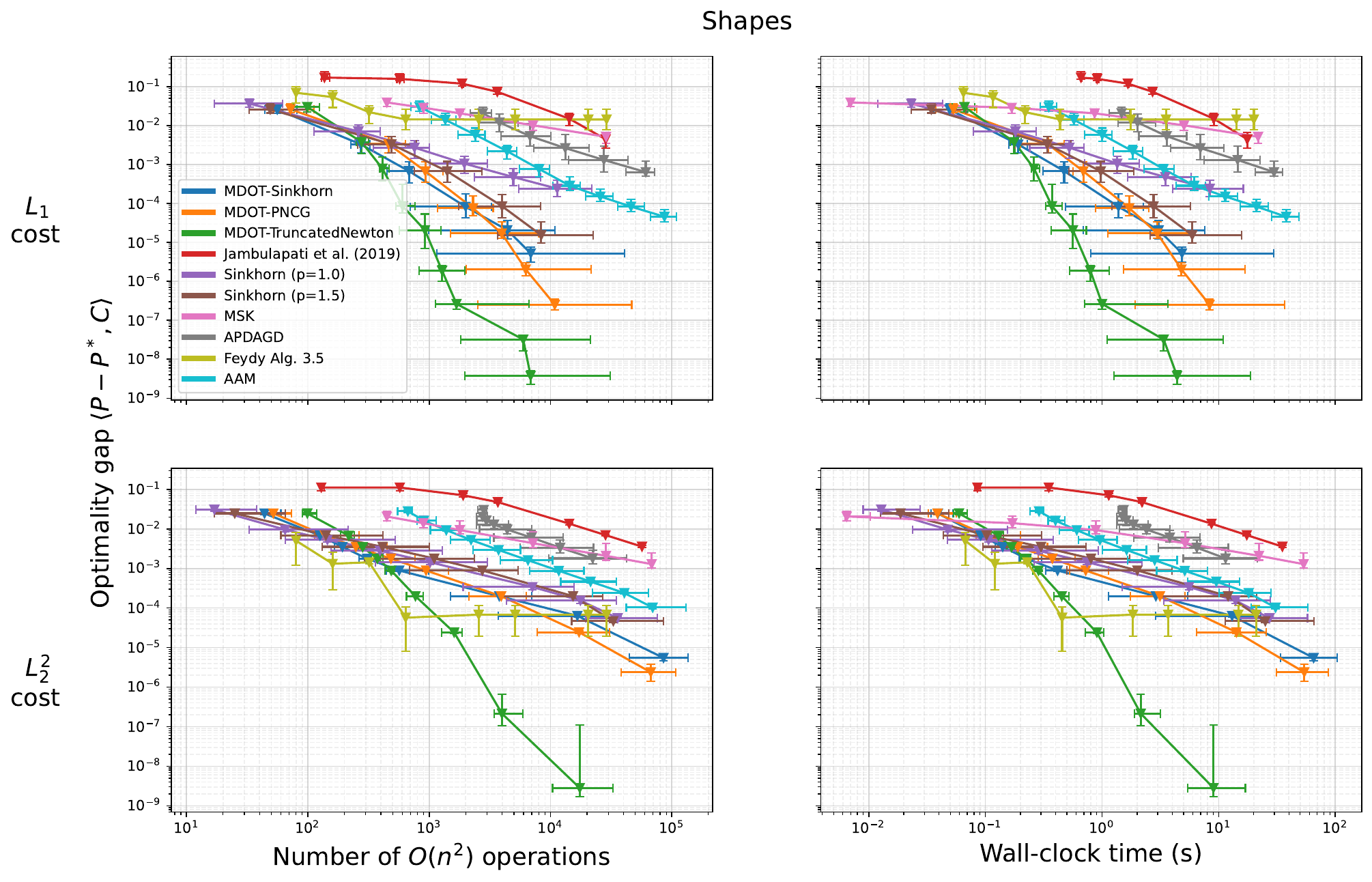}
\end{minipage}
\caption{\texttt{Shape} problem with $L_1$ (top) and $L_2^2$ (bottom) costs, showing optimality gap vs. number of $O(n^2)$ operations (left) and wall-clock time (right). }
\end{figure*}

\begin{figure*}
\centering
\begin{minipage}{\szz\textwidth}
  \centering   \includegraphics[width=1.\linewidth]{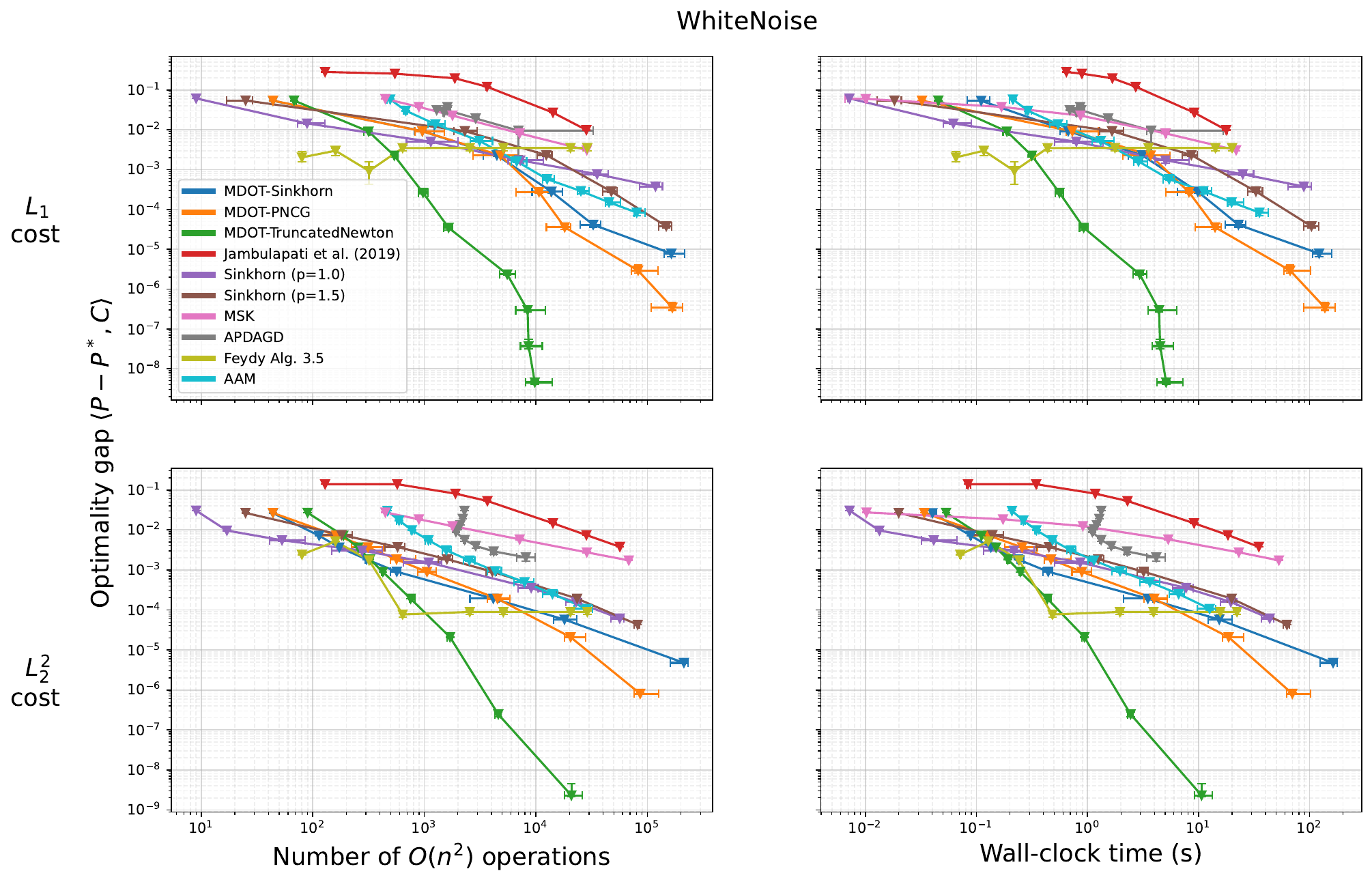}
\end{minipage}
\caption{\texttt{WhiteNoise} problem with $L_1$ (top) and $L_2^2$ (bottom) costs, showing optimality gap vs. number of $O(n^2)$ operations (left) and wall-clock time (right). }
\label{fig:dotmark-last}
\end{figure*}

\newpage
\section{On the Spectral Gap \texorpdfstring{$1-\lambda_2$}{1-lambda2} and Discounting}
\label{sec:spectral-gap}

In Section 3, we demonstrated that the convergence of Algorithm 2, as established in Theorem \ref{thm:convergence-alg1} (and subsequently in Corollary \ref{cor:perstep-cost-alg3}), exhibits a worst-case dependence of $O((1-\lambda_2)^{-1/2})$ on the spectral gap $1 - \lambda_2$. In this section, we first explain why this dependence may be overly pessimistic. We then illustrate this observation through the experiments presented in Fig. \ref{fig:spectral-gap}.

Recall from Lemma \ref{lem:conv-cg} that the convergence of CG for solving a $\rho$-discounted Newton/Bellman system $\vd_\vu = \vs_{\vu\vv} + \rho \Prc \vd_\vu$ is in fact $O((1-\rho)^{-1/2})$. In Thm. \ref{thm:convergence-alg1} we expressed the overall complexity of Alg. \ref{alg:newton-solve} in terms of $\lambda_2$ using the fact that the largest $\rho$ found by Alg. \ref{alg:newton-solve} via annealing satisfies $(1-\rho)^{-1} = O(1-\lambda_2)^{-1}$ in the worst-case; see (\ref{eq:discount-factor-constraint}) and the proof of Thm. \ref{thm:convergence-alg1}. However, $1-\rho$ tends to be much better behaved in practice and effectively mitigates this worst-case dependence on $1-\lambda_2$. We introduce the rationale for this behavior via an example. 

Suppose, given $n=4$ the stochastic matrix $\Prc \in \mathbb{R}^{4 \times 4}_{>0}$ has the form:
\begin{align*}
\Prc = Q + \Delta,~~~~~ Q \coloneqq
\begin{bmatrix}
    0.75 & 0.25 & 0 & 0 \\
    0.25 & 0.75 & 0 & 0 \\
    0 & 0 & 0.75 & 0.25 \\
    0 & 0 & 0.25 & 0.75 \\
\end{bmatrix},
\end{align*}
where ${\Delta \in \mathbb{R}^{4 \times 4}}$ is an appropriately selected, tiny perturbation matrix with infinitesimal entries, and the Markov process given by $\Prc$ can be (approximately) decoupled into two separate processes corresponding to the top-left and bottom-right blocks; the spectral gap of $\Prc$ is tiny.

Consider also, for convenience, a stationary distribution $\vr(P) = \begin{bmatrix}
    0.25 & 0.25 & 0.25 & 0.25 \end{bmatrix}^\top$, and an initial distribution $\vr = \begin{bmatrix}
    0.5 & 0 & 0 & 0.5 \end{bmatrix}^\top$, which already assigns to each partition the correct amount of total mass as the stationary distribution $\vr(P)$. For such a pair, $\vr(P) - \vr$ would be orthogonal to the first two eigenvectors ${\bm{\nu}_1 = \begin{bmatrix}
    1 & 1 & 0 & 0 \end{bmatrix}^\top}$ and ${\bm{\nu}_2 = \begin{bmatrix}
    0 & 0 & 1 & 1 \end{bmatrix}^\top}$ of $Q$. We can then easily show that the convergence of $\vr$ to $\vr(P)$ via repeated application of $\Prc$ would be governed by $(1-\lambda_3)^{-1} \approx 2$ rather than $(1-\lambda_2)^{-1} = O(\norm{\Delta}^{-1})$. Indeed, we can write: 
\begin{align*}
    Q = \bm{\nu}_1 \bm{\nu}_1^\top + \bm{\nu}_2 \bm{\nu}_2^\top + 0.5 \bm{\nu}_3 \bm{\nu}_3^\top + 0.5 \bm{\nu}_4 \bm{\nu}_4^\top,
\end{align*}
which implies (with simple calculations):
\begin{align*}
    \norm{\vr(P)^\top - \vr^\top \Prc^l}_1 = \norm{(\vr(P) - \vr)^\top \Prc^l}_1 &\leq \norm{(\vr(P) - \vr)^\top Q^l}_1 + O(\norm{\Delta}_1 \norm{(\vr(P) - \vr)}_1) \\
    &= 0.5^l \chi(\vr | \vr(P)) + O(\norm{\Delta}_1 \norm{\vr(P) -\vr(P)}_1).
\end{align*}

The idea here applies more generally to multiple eigenvalues near one, i.e., when the Markov process may be approximately separated into a larger number of groups. Since we call Alg. \ref{alg:newton-solve}: NewtonSolve near the solution, where ${\norm{\nabla g}_1 = \norm{\vr(P) - \vr}_1}$ is small, the gradient (i.e., the RHS of the Newton system) tends to be nearly orthogonal to the subspace corresponding to near-zero eigenvalues of ${I - \Prc}$. Consequently, the Newton/Bellman system can be discounted more aggressively than the worst-case suggested by the condition in (\ref{eq:discount-factor-constraint}), ultimately yielding ${(1-\rho)^{-1} \ll (1-\lambda_2)^{-1}}$. 

\begin{wrapfigure}{r}{0.49\textwidth} %
    \centering
    \includegraphics[width=0.24\textwidth]{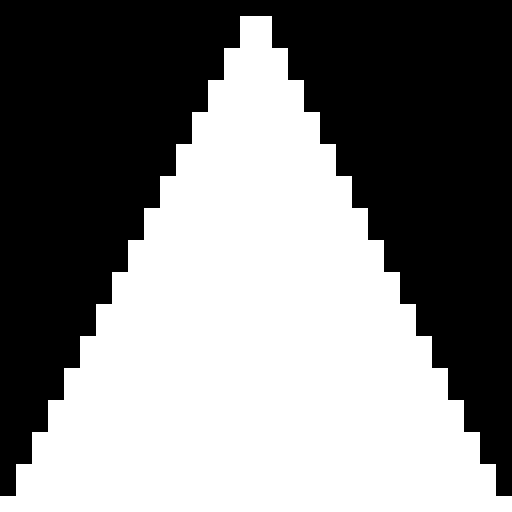} %
    \hfill
    \includegraphics[width=0.24\textwidth]{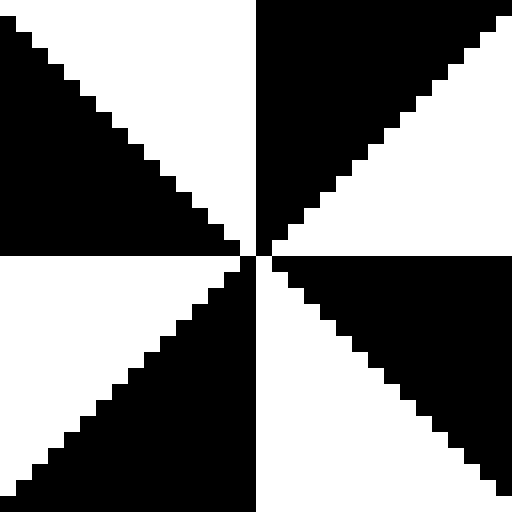} %
    \caption{Sample images from \texttt{Shapes}.}
    \label{fig:two-images}
\end{wrapfigure}

In Fig. \ref{fig:spectral-gap}, we demonstrate this effect via experiments over 4 datasets from the DOTmark benchmark \citep{schrieber2017dotmark}, where we display $(1-\rho)^{-1/2}, (1-\lambda_2)^{-1/2}$ and the number of CG iterations until convergence in the same plot. Since $P$ tends to get sparser with increasing $\gamma$ \citep{tang2024accelerating}, we generally observe a decreasing spectral gap of $\Prc$ with higher $\gamma$ (across all problem sets) as communication between particle groups is increasingly bottlenecked. In all cases, $(1-\rho)^{-1/2}$  predicts the total number of CG iterations better than $(1-\lambda_2)^{-1/2}$. Especially for the \texttt{Shapes} problem, the marginals $\vr, \vc$ contain lots of near-zeros and often yield non-communicating particle groups. Even though $(1-\lambda_2)^{-1/2}$ explodes quickly in this case, the method remains robust and $(1-\rho)^{-1/2}$ continues to be predictive of good performance.

\begin{figure*}
\centering
\begin{minipage}{0.45\textwidth}
  \centering   \includegraphics[width=1.\linewidth]{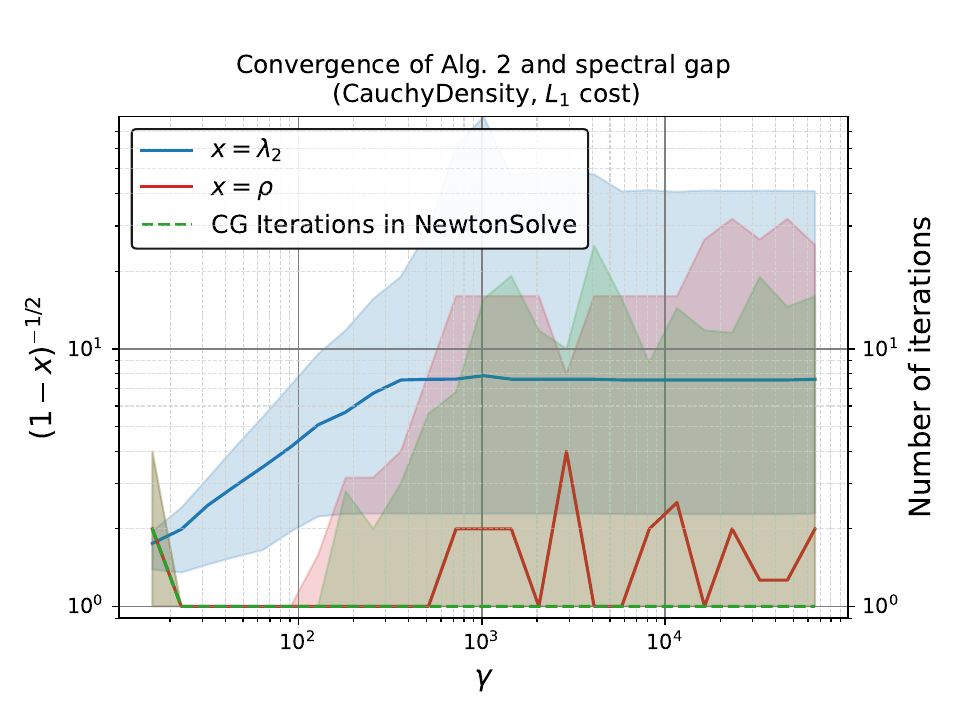}
\end{minipage}
\hspace{10pt}
\begin{minipage}{0.45\textwidth}
  \centering   \includegraphics[width=1.\linewidth]{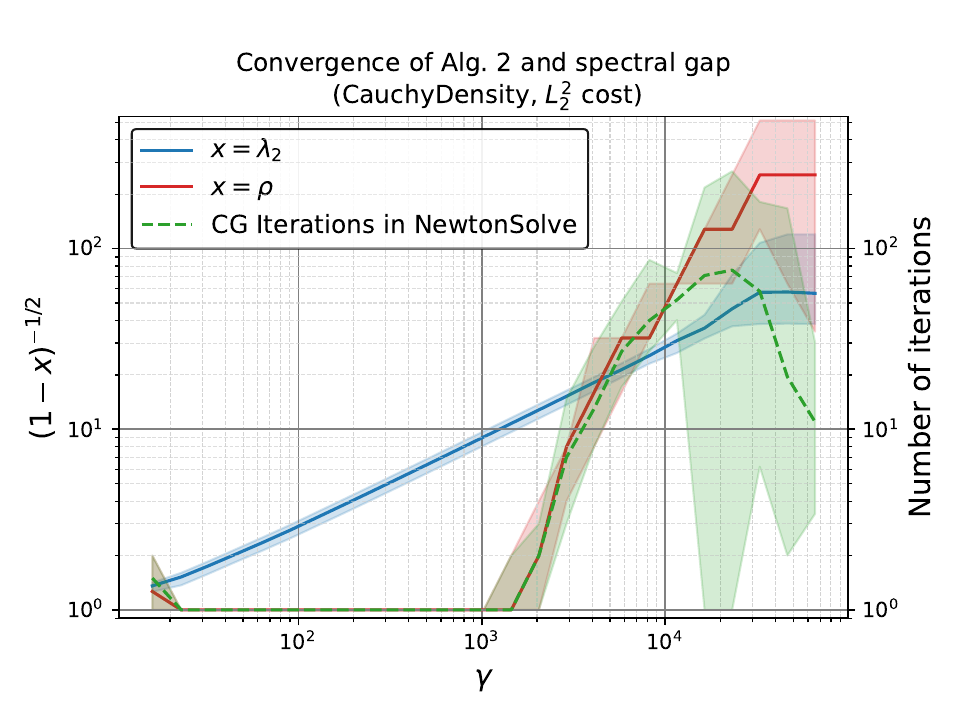}
\end{minipage}
\begin{minipage}{0.45\textwidth}
  \centering   \includegraphics[width=1.\linewidth]{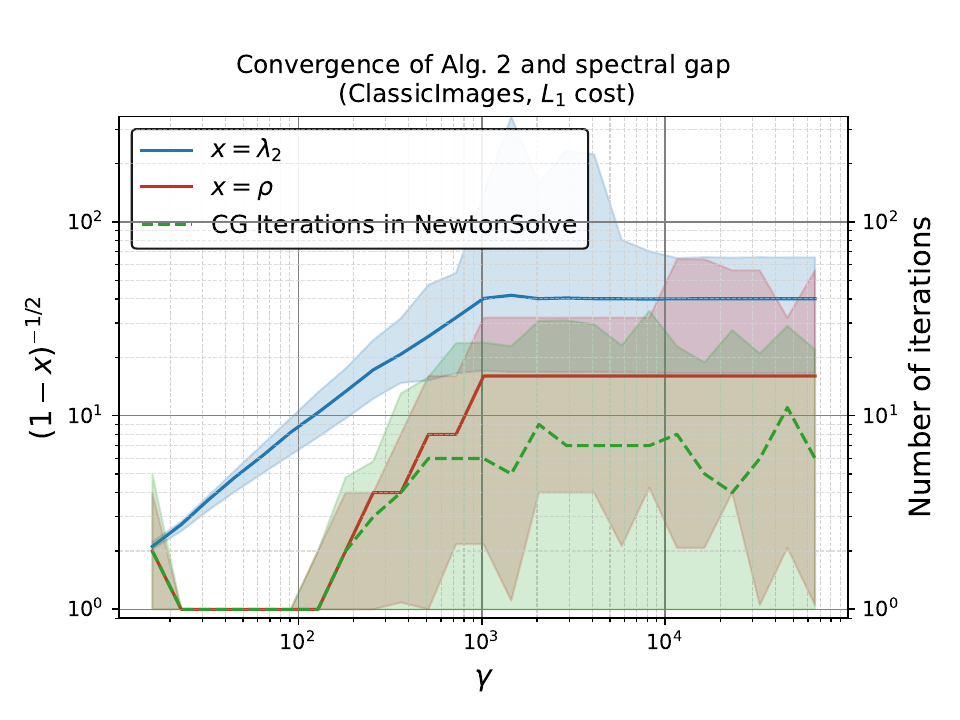}
\end{minipage}
\hspace{10pt}
\begin{minipage}{0.45\textwidth}
  \centering   \includegraphics[width=1.\linewidth]{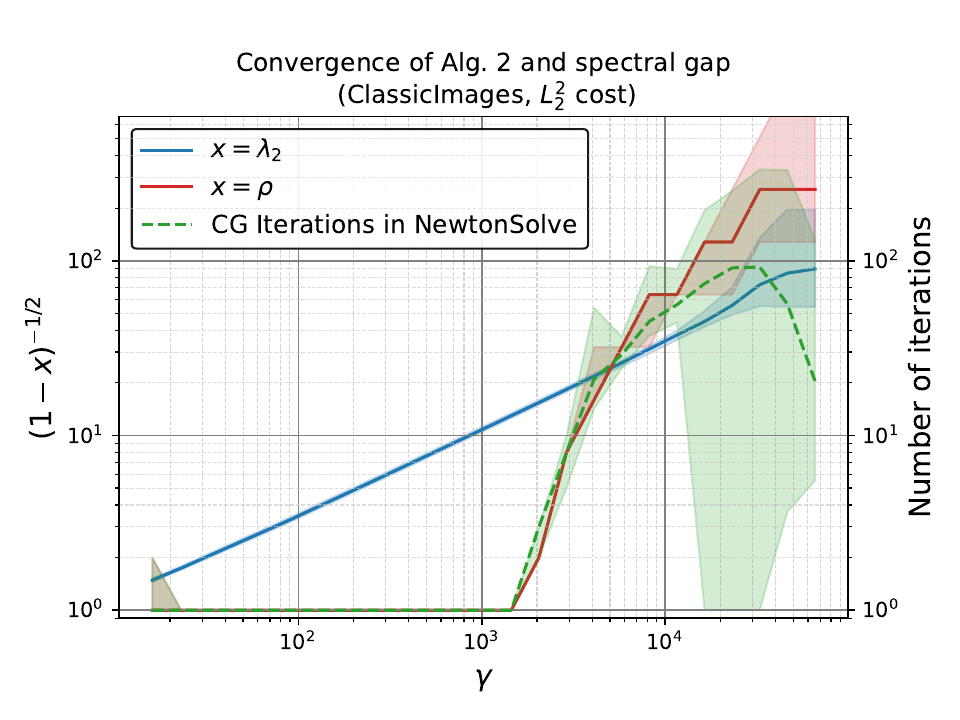}
\end{minipage}
\begin{minipage}{0.43\textwidth}
  \centering   \includegraphics[width=1.\linewidth]{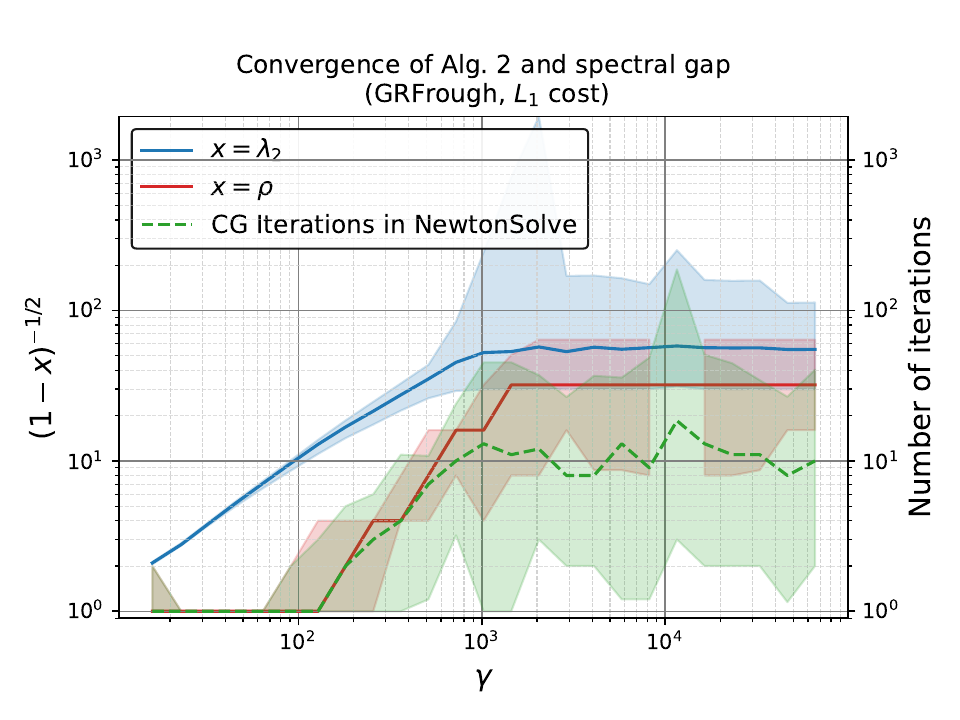}
\end{minipage}
\hspace{10pt}
\begin{minipage}{0.43\textwidth}
  \centering   \includegraphics[width=1.\linewidth]{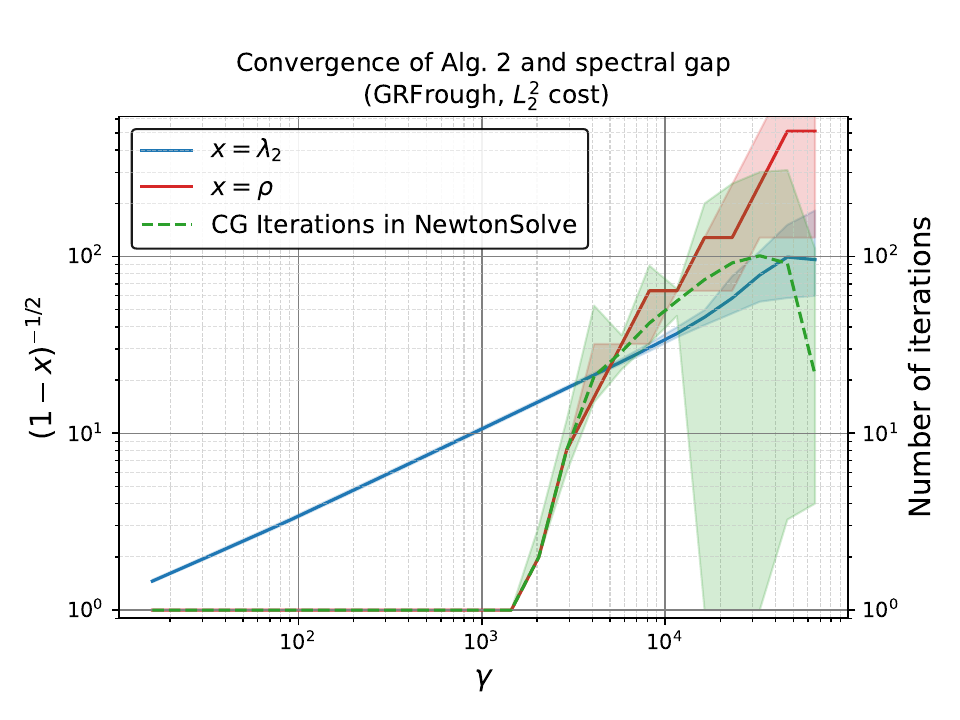}
\end{minipage}
\begin{minipage}{0.43\textwidth}
  \centering   \includegraphics[width=1.\linewidth]{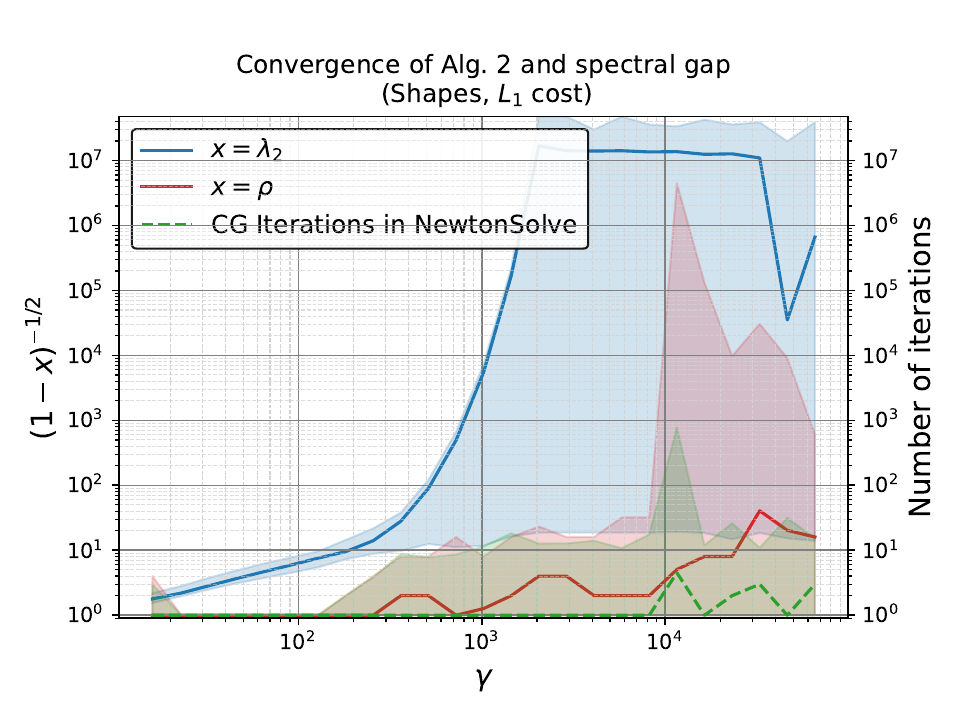}
\end{minipage}
\hspace{10pt}
\begin{minipage}{0.43\textwidth}
  \centering   \includegraphics[width=1.\linewidth]{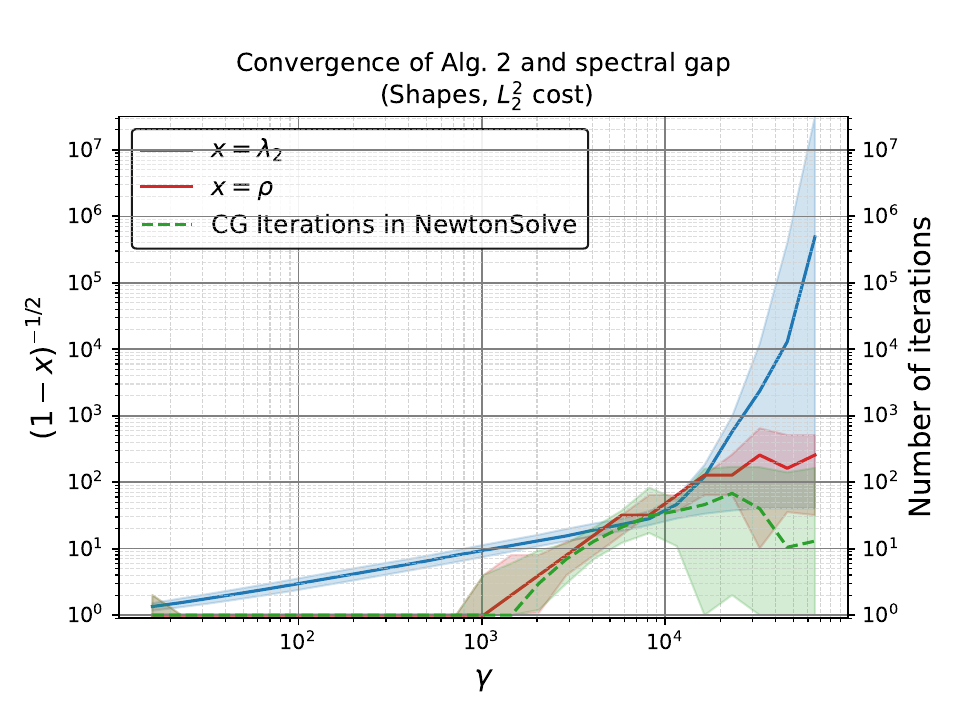}
\end{minipage}
\caption{Over 4 tasks from DOTmark (${n=1024}$) with $L_1$ (\textbf{left}) and $L_2^2$ (\textbf{right}) costs, we run MDOT-TruncatedNewton with a fixed temperature decay rate of $q=2^{1/16}$ and compute the spectral gap $1-\lambda_2$ explicitly via eigenvalue decomposition every-time Alg. \ref{alg:newton-solve}: NewtonSolve is called (at fixed intervals of $\gamma$ on a log-scale). On the LHS of the $y$-axis, the $(1-\lambda_2)^{-1/2}$ term seen in Thm. \ref{thm:convergence-alg1} and Cor. \ref{cor:perstep-cost-alg3} and $(1-\rho)^{-1/2}$ are displayed, where $\rho$ is the discount factor found by Alg. \ref{alg:newton-solve}. On the RHS of the $y$-axis the total number of CG iterations taken by NewtonSolve is shown. While the $(1-\lambda_2)^{-1/2}$ term shown in Sec. \ref{sec: technical} may be adversely affected by extremely large values (see bottom row), the practical rate dictated by $1-\rho$ is much better and more accurately describes the convergence of the algorithm. Lines show the median over all 45 problems and shaded areas show 90\% confidence intervals. See Appx. \ref{sec:spectral-gap} for further discussion.}
\label{fig:spectral-gap}
\end{figure*}

\end{document}